\documentclass[conference]{IEEEtran}  
\usepackage{cite}
\usepackage{textcomp}  
\usepackage{multicol}
\usepackage{bussproofs}
\usepackage{color} 
\usepackage[table]{xcolor}
\usepackage{times}
\usepackage{tikz}   
\usepackage[framemethod=TikZ]{mdframed}
\usepackage{marvosym} 
\usepackage{listings}
\usepackage{caption} 
\usepackage[sans]{dsfont}
\usepackage{verbatim}
\usepackage{amssymb} 
\usepackage[lofdepth,lotdepth]{subfig}
\usepackage{amsfonts}
\usepackage{amsmath}

\newcommand{\rewrite}{\code{REWRITE}}  
\newcommand{\negMax}{\ensuremath{\code{neg}\_\code{max}}}
\newcommand{\recurseReduce}{\code{recursiveReduce}}
  
\newcommand{\intFrame}{\ensuremath{\mathfrak{M}}}
 
\newcommand{\squash}{\code{EXTRACT}}  
  
\newcommand{\pcomplete}{\code{PCOMPLETE}}

\newcommand{\exptime}{\code{EXPTIME}}

\usepackage{multirow} 
\usepackage{wrapfig}  

\usepackage{cite}

\newcommand{\sat}{\code{UNSAT}}

\newcommand{\formulasize}{\ensuremath{\code{f}\_\code{size}}}

\newcommand{\hide}[1]{}  
\newcommand*\circled[1]{\tikz[baseline=(char.base)]{
\node[shape=circle,draw,inner sep=1pt] (char) {\ensuremath{#1}};}}

\definecolor{lightsteelblue}{RGB}{176,196,222}

\newcommand{\code}[1]{{\ensuremath{\tt #1}}}

\newcommand{\normalised}{\code{normal}}

\newtheorem{postulate}{Postulate}

\newtheorem{lemma}{Lemma}
\newtheorem{theorem}{Theorem}
\newtheorem{proposition}{Proposition}
\newtheorem{definition}{Definition}
\newtheorem{corollary}{Corollary} 
\newcommand{\paper}[1]{ }
     




\begin{document}
\title{Gradual Classical Logic for Attributed Objects}  
\author{\IEEEauthorblockN{Ryuta Arisaka}
\IEEEauthorblockA{INRIA Saclay--\^Ile-de-France, 
Campus de l'\'Ecole Polytechnique}}  
\maketitle  
\begin{abstract}     
     `There is knowledge. There is belief. 
        And there is tacit agreement.'
    `We may talk about objects. We may talk about 
    attributes of the objects. Or 
        we may talk both about objects and their attributes.' 
   This work inspects tacit agreements on assumptions about 
   the relation between objects and their attributes, 
   and studies
   a way of expressing them, presenting as the result what 
   we term gradual logic 
   in which the sense of truth gradually shifts. It extends 
   classical logic instances with 
   a new logical connective capturing the object-attribute 
   relation. 
   A formal semantics is presented. Decidability is proved. 
   Para-consistent/epistemic/conditional/intensional/description/combined logics are compared. 
\end{abstract}  
\section{Introduction}    
{\it A short description} There is a book. It is on desk. 
It is titled `Meditations and 
Other Metaphysical Writings'. It, or the document 
from which the English translation was borne, is written by 
Ren{\'e} Descartes. {\it Period} \\ 
\indent I\footnote{`We' is preferred throughout this document save 
where the use of the term is most unnatural.}
have just described a book, not some freely arbitrary book  
but one with  
a few pieces of information: that it is on desk, 
that it has the said title, and that it is authored by 
Descartes. \hide{
It happens very frequently in our communication that  
a general term is made more specific and less ambiguous through addition 
of adjectives. 
}   
Let us suppose that I am with a friend of mine. 
If I simply said {\it There is a book} irrespective of being 
fully conscious 
that the book that I have spotted
is the described book and none others, the 
friend of mine, who is here supposed oblivious of any articles on the desk, would have no reason 
to go against imagining 
whatever that is considered a book, say 
`Les Mis{\'e}rables'. The short statement by itself 
does not forestall such a possibility. 
By contrast, 
if, as in the description provided at the beginning, 
I ask him to think of a laid-on-desk Ren{\'e} Descartes  
book titled `Meditations and Other Metaphysical Writings', 
then there would be certain logical dissonance if 
he should still think of `Les Mis{\'e}rables' as a possible
option that conforms to the given description. In innumerable occasions 
like this example, adjectives (or adverbs or whatever terms that 
fulfil the same purpose) are 
utilised to 
disambiguate terms that may denote more than what we intend to communicate.\\
\indent This feature of natural languages, allowing  
formulation 
of a precise enough concept through coordination of 
(1) broad concepts and (2) attributes that narrow down their possibilities, 
is 
a very economical and suitable one for us. For 
imagine otherwise that every word exactly identifies 
a unique and indivisible object around us, then 
we would have no abstract concepts such as 
generalisation or composition 
since generalisation must assume specificity and 
composition decomposability of what result from the process,
neither of which accords with the proposed code of the alternative 
language. While it is certain that 
concepts expressible 
in the alternative language sustain 
no degree of ambiguity in what they refer to, and in this sense  
it may be said to have an advantage to our languages,  
the absence of abstract concepts
that we so often rely upon for 
reasoning is rather grave a backlash that 
would stem its prospect for wide circulation, because - 
after all - who is capable of showing knowledge of 
an object that (s)he has never seen before; then
who could confidently assert that his/her 
listener could understand any part of his/her  
speech on a matter that only he/she knows of if all of us 
were to adopt the alternative language?   
By contrast, concepts in our languages, being 
an identifier of a group rather than an individual, allow 
generation of a vast domain of discourse with a relatively small
number of them in aggregation, \emph{e.g.} `book' and `title' cover anything 
that can be understood as a book and/or a title, and they at the same 
time enable refinement, 
\emph{e.g.} `title'd `book' 
denotes only those books that are titled. The availability 
of mutually influencing generic concepts adds to so much 
flexibility in our languages. \\
\hide{
\indent Many logics currently available 
to us, however, do not primitively capture the essential feature, 
the generality/specificity relation, 
of our languages; or they could but 
only at the expense of explicit predicates' addition to their 
propositional fragments.\\  
}
\indent In this document, we will be interested in  
primitively
representing the particular relation between objects/concepts (no 
special distinction between the two hereafter) and 
what may form 
their attributes, which will lead to development of a new logic. Our domain of discourse will range 
over certain set of (attributed) objects (which may themselves 
be an attribute to other (attributed) objects) and pure attributes that 
presuppose existence of some (attributed) object as their host.  
Needless to say, when we talk about or even just imagine
an object with some explicated attribute, the attribute 
must be found among all that can become an 
attribute to it. To this extent it is confined within 
the presumed existence of the object. 
The new logic intends to address certain phenomena around attributed 
objects which 
I think are reasonably common to us but which may not be 
reasonably expressible in classical logic. Let us turn 
to an example for illustration of the peculiar behaviour 
that attributed objects often present to us.  
\subsection{On peculiarity of attributed objects 
as observed in negation, and on the truth `of' classical logic}  
{\it Episode}
Imagine that there is 
a tiny hat shop in our town, having the following in stock: 
\begin{enumerate}
  \item 3 types of hats: orange hats, green hats 
    ornamented with 
    some brooch, and blue hats decorated with 
    some white hat-shaped accessory, of which only the green 
    and the blue hats are displayed in the shop. 
  \item 2 types of shirts: yellow and blue, 
    of which only the blue shirts are displayed in the shop. 
\end{enumerate}
Imagine also that a young man has come to the hat shop.  
After a while he asks the shop owner, a lady of many a year of experience 
in hat-making; 
``Have you got a yellow hat?'' Well, obviously there are 
no yellow hats to be found in her shop. She answers; 
``No, I do not have it in stock,'' negating 
the possibility that there is one in stock at her shop at the present
point of time.  {\it Period} \\
\indent But ``what is she actually denying about?'' is the inquiry that I 
consider 
pertinent to this writing. We ponder; in delivering the answer, 
the question posed may have allowed her to infer that the young man was 
looking for 
a hat, a yellow hat in particular. Then the answer
may be followed by she saying; ``\dots but I do have hats with 
different colours including ones not currently displayed.'' 
That is, while she denies the presence of a yellow hat, 
she still presumes the availability of hats of which she 
reckons he would like to learn. 
It does not appear so unrealistic, in fact, to suppose such a thought 
of hers that he may be ready to 
compromise his preference for a yellow hat with 
some non-yellow one, 
possibly an orange one in stock, given its comparative closeness 
in hue to yellow. \\
\indent Now, what if the young man turned out to be a town-famous  
collector of yellow articles? Then it may be that 
from his question she had divined instead 
that he was looking for something yellow, 
a yellow hat in particular, in which case her answer 
could have been a contracted form of ``No, I do not have 
it in stock, but I do have a yellow shirt nonetheless (as you 
are looking after, I suppose?)'' \\
\indent Either way, these somewhat-appearing-to-be partial 
negations contrast with 
classical negation with which her 
answer can be interpreted only as that she does not have a yellow 
hat, nothing less, nothing more, with no restriction in the range 
of possibilities outside it. \\ 
\hide{
\indent From this short episode, we could observe both the ambiguity 
and the peculiarity that come to surface when we reason about 
attributed objects. First for ambiguity around 
negation of them, it is certainly the case that which negation 
may be reasonable is determined by a given conversational 
context in which they appear. However, some circumstance 
has a stronger conditioning in favour of one of them 
than some others do. Of the three in the above episode:
the attributed-object negation (the first), the 
attribute negation (the second), and the 
object negation (the third), 
the last choice is not very natural as regards common sense. 
It is of course not impossible to 
conceive that the young man be in fact one of the very few 
who would go to hat shop for something yellow, a yellow hat 
in particular, but the chance for any 
hat shop owner to encounter such an individual looks to me, especially
as we take into account the given circumstance under which the 
conversation was holding there, 
very slim. The second inference that 
we supposed she may have made seems to be more in keeping with 
intuition about what is likely and what is not in this particular 
context. \\  
\indent But it is not just the innate ambiguity 
in attributed objects that the episode 
wishes to illuminate. In our conversation (by assuming which 
we must inevitably assume the presence of 
a conversational context) such situations as in the second 
inference where the subject of the discourse is 
restricted to certain concept/object occur very often, 
so frequent in fact that 
the shift in domain of discourses from some object A (as 
opposed to other objects) to some attribute of A (as opposed
to other attributes of A)
is tacitly accepted, without any participant in conversation exclaiming,
aghast with terror, 
that the set of assumptions with which their conversation 
initiated has altered in the middle. \\ 
} 
\indent An analysis that I attempt regarding this 
sort of usual every-day phenomenon around concepts and 
their attributes, which leads for example to a case where 
negation of some concept with attributes 
does not perforce entail negation of the concept itself but only that 
of the attributes, 
is that presupposition of a concept 
often becomes too strong 
in our mind to be invalidated. Let us proceed in 
allusion to logical/computer science terminologies. In classical 
reasoning that we are familiar with, 
1 - truth - is what we should consider is our truth 
and 0 - falsehood - is what we again should consider is our non-truth.   
When we suppose a set of true atomic propositions 
{\small $p, q, r, \cdots$} under some possible interpretation of them, 
the truth embodied in 
them does - by definition - neither transcend the truth  
 that the 1 signifies nor go below it. The innumerable 
true propositions miraculously sit on the given definition 
of what is true, 1. By applying alternative interpretations, we may
have a different set of innumerable true propositions possibly 
differing from the {\small $p, q, r, \cdots$}. However, no  
interpretations
are meant to modify the perceived significance of the truth  
which   
remains immune to them. Here what renders the truth 
so immutable is the assumption of classical logic that 
no propositions that cannot be given a truth value by means of 
the laws 
of classical logic may appear as a proposition: there is nothing that is 30 \% true, and also nothing that 
is true by the probability of 30 \% unless, of course, 
the probability of 
30 \% should mean to ascribe to our own confidence level, which I here 
assume is 
not part of the logic, of the proposition 
being true. \\
\indent However, one curious fact is that the observation made so far 
can by no means 
preclude a 
deduction that, {\it therefore} and no 
matter how controversial it may appear, 
the meaning of the truth, so long as it can be observed only through 
the interpretations that force the value of propositions to go 
coincident with it and only through examination on the nature\footnote{Philosophical, that is, real, reading 
of the symbols {\small $p, q, r,\dots$}.} 
 of 
those propositions that were 
made true by them, must be invariably dependant on 
the delimiter of our domain of discourse, the 
set of propositions; 
on the presupposition of which are sensibly meaningful the 
interpretations; 
on the presupposition of which, 
in turn, is possible classical logic.
Hence, quite despite 
the actuality that for any set of propositions as can form 
a domain of discourse for classical logic it is sufficient that 
there 
be only one truth, it is not {\it a priori} possible 
that we find by certainty any relation to hold between such 
individual truths and the universal truth, if any, whom we cannot 
hope to successfully 
invalidate. Nor is it {\it a priori} possible to sensibly impose a 
restriction on 
any domain of discourse for classical reasoning to one that is 
consistent with the universal truth, provided again that 
such should exist. But, then, it is not by the force of necessity 
that, having a pair of domains of discourse, we find one 
individual truth and the other wholly interchangeable.  
In tenor, suppose that 
truths are akin to existences, then just as 
there are many existences, so are many truths, every one of which 
can be subjected to classical reasoning, but no distinct pairs of which
{\it a priori} exhibit a trans-territorial compatibility.  
But the lack of compatibility also gives rise 
to a possibility of dependency among them within  
a meta-classical-reasoning 
that recognises the many individual truths at once. In situations 
where some concepts in a domain of discourse over which reigns 
a sense of truth become too strong an assumption to be 
feasibly falsified, the existence of the concepts becomes 
non-falsifiable during the discourse of existences of 
their attributes (which form another domain of discourse); it becomes a delimiter of classical reasoning, that is, it becomes a `truth' for them. 
\subsection{Gradual classical logic: a logic for attributed objects}
It goes hopefully without saying that 
what I wished 
to impart through the above fictitious episode 
was not so much about which negation 
should take a precedence over the others as about   
the distinction of objects and what may form their attributes, 
\emph{i.e.} about  
the inclusion relation to hold between the two and about how 
it could restrict domains of discourse. If we are 
to assume attributed objects as primitive entities 
in a logic, we for example do not just have the negation 
that negates the presence of an attributed object (attributed-object 
negation); on the other hand, 
the logic should be able to express the negation that 
applies to an attribute only (attribute negation) and, complementary, 
we may also consider the negation 
that applies to an object only (object negation). We should also consider 
what it may mean to conjunctively/disjunctively have several 
attributed objects and should attempt a construction of the logic 
according to the analysis. I call the logic derived from 
all these analysis {\it gradual classical logic} in which 
the `truth', a very fundamental property of classical 
logic, gradually shifts
by domains of discourse moving deeper into 
attributes of (attributed) objects. For a special emphasis, here the gradation in truth occurs only 
in the sense that 
is spelled out in 
the previous sub-section. 
One in particular should not confuse this logic with multi-valued logics 
\cite{Gottwald09,Hajek10} 
that have multiple truth values in the same domain of discourse, for 
any (attributed) object in gradual classical logic assumes 
only one out of the two usual possibilities: either it is true 
(that is, because we shall essentially consider conceptual 
existences, it is synonymous to saying that it exists) or it is false (it 
does not exist). In this sense it is indeed classical logic. But 
in some sense - because we can observe transitions in the sense of 
the `truth' within the logic 
itself - it has a bearing of meta-classical logic.  
As for 
inconsistency, if there is an inconsistent argument within  
a discourse on attributed objects, wherever it may be that 
it is occurring, 
the reasoning part of which is inconsistent cannot be 
said to be consistent. For this reason it remains in gradual classical logic 
just as strong as is in standard classical logic.  
\subsection{Structure of this work}    
Shown below is the organisation of this work. 
\begin{itemize}
  \item Development of gradual classical logic 
    (Sections {\uppercase\expandafter{\romannumeral 1}} 
    and {\uppercase\expandafter{\romannumeral 2}}).  
  \item A formal semantics 
    of gradual classical logic and a proof that it is 
    not para-consistent/inconsistent 
    (Section {\uppercase\expandafter{\romannumeral 3}}).  
  \item Decidability of gradual classical logic 
    (Section {\uppercase\expandafter{\romannumeral 4}}). 
  \item Conclusion and discussion on related thoughts:  para-consistent logics, epistemic/conditional 
logics, intensional/description logics, and combined logic
    (Section {\uppercase\expandafter{\romannumeral 5}}). 
\end{itemize}
\section{Gradual Classical Logic: Logical Particulars} 
In this section we shall look into logical particulars of 
gradual classical logic. Some familiarity 
with 
propositional classical logic, in particular with 
how the logical connectives behave, is presumed. 
Mathematical transcriptions of 
gradual classical logic are found in the next section. 
\subsection{Logical connective for object/attribute and 
interactions with negation ({\small $\gtrdot$} and {\small $\neg$})}
It was already mentioned that the inclusion relation 
that is implicit when we talk about an attributed object 
shall be primitive in the proposed gradual classical logic. 
We shall dedicate the symbol {\small $\gtrdot$} to represent it.
The usage 
of the new connective is fixed to take either of the forms
{\small $\code{Object}_1 \gtrdot \code{Object}_2$}   
or {\small $\code{Object}_1 \gtrdot \code{Attribute}_2$}. 
Both denote an attributed object. In the first case, 
{\small $\code{Object}_1$} is a more generic 
object than {\small $\code{Object}_1 \gtrdot \code{Object}_2$} 
({\small $\code{Object}_2$} acting as an attribute to 
{\small $\code{Object}_1$}  
makes {\small $\code{Object}_1$} more specific).  
In the second case, we have a pure attribute which is not itself an object.
Either way a schematic reading is as follows: 
``It is true that {\small $\code{Object}_1$} is, 
and it is true that {\small $\code{Object}_1$}  
has an attribute of {\small $\code{Object}_2$} (, or 
of {\small $\code{Attribute}_2$}).'' Given 
an attributed object {\small $\code{Object}_1 \gtrdot 
\code{Ojbect}_2$} (or {\small $\code{Object}_1 \gtrdot 
\code{Attribute}_2$}),  
{\small $\neg (\code{Object}_1 \gtrdot \code{Object}_2)$} 
expresses its attributed object negation, 
{\small $\neg \code{Object}_1 \gtrdot \code{Object}_2$} 
its object negation and 
{\small $\code{Object}_1 \gtrdot \neg \code{Object}_2$} 
its attribute negation. Again schematic readings for them 
are, respectively;   
\begin{itemize}
  \item
It is false that the attributed object 
{\small $\code{Object}_1 \gtrdot \code{Object}_2$} is 
(\emph{Cf}. above for the reading of `an attribute object is'). 
\item 
It is false that {\small $\code{Object}_1$} is, but 
it is true that some non-{\small $\code{Object}_1$} is  
which has an attribute of {\small $\code{Object}_2$}. 
\item 
It is true that {\small $\code{Object}_1$} is, 
but it is false that it has an attribute 
of {\small $\code{Object}_2$}.
\end{itemize}
The presence 
of negation flips ``It is true that \ldots'' into 
``It is false that \ldots'' and vice versa. But it should be also 
noted how negation acts in attribute negations and 
object/attribute negations. Several specific examples\footnote{I do not 
pass judgement on what is reasonable and what is not here, 
as my purpose is to illustrate the reading of 
{\small $\gtrdot$}. So there 
are ones that ordinarily appear to be not very reasonable.}
constructed parodically from the items in the hat shop episode are;  
\begin{enumerate}
  \item {\small $\text{Hat} \gtrdot \text{Yellow}$}:        
    It is true that hat is, and it is true that it is yellow(ed). 
\item {\small $\text{Yellow} \gtrdot \text{Hat}$}:     
  It is true that yellow is, and it is true that it is hatted.    
\item {\small $\text{Hat} \gtrdot \neg \text{Yellow}$}:       
    It is true that hat is, but it is false that it is yellow(ed).\footnote{In the rest, this -ed to 
    indicate an adjective is assumed clear and is omitted another 
    emphasis.}      
\item {\small $\neg \text{Hat} \gtrdot \text{Yellow}$}:  
    It is false that hat is, but it is true that yellow object (which 
    is not hat) is.  
      \item {\small $\neg (\text{Hat} \gtrdot \text{Yellow})$}:          
  Either it is false that hat is, or if it is true that hat is,
  then it is false that it is 
  yellow.    
\end{enumerate}
\subsection{Object/attribute relation and 
conjunction ({\small $\gtrdot$} and {\small $\wedge$})}  
We examine specific examples first involving 
{\small $\gtrdot$} and {\small $\wedge$} (conjunction), 
and then observe what the readings imply.  
\begin{enumerate}
  \item {\small $\text{Hat} \gtrdot \text{Green} \wedge 
    \text{Brooch}$}: It is true that 
    hat is, and it is true that it is green and brooched. 
  \item {\small $(\text{Hat} \gtrdot \text{Green}) 
    \wedge (\text{Hat} \gtrdot \text{Brooch})$}: 
    for one, it is true that 
    hat is, and it is true that it is 
    green; for one, it is true that 
    hat is, and it is true that it is 
    brooched.  
  \item {\small $(\text{Hat} \wedge \text{Shirt}) 
    \gtrdot \text{Yellow}$}: 
    It is true that 
    hat and shirt are, and it is true that 
    they are yellow.  
  \item {\small $(\text{Hat} \gtrdot \text{Yellow}) 
    \wedge (\text{Shirt} \gtrdot \text{Yellow})$}:  
    for one, it is true that 
    hat is, and it is true that 
    it is yellow; for one, it is true that 
    shirt is, and it is true that 
    it is yellow.  
\end{enumerate} 
By now it has hopefully become clear that by 
{\it existential facts as truths} I do not mean how many of a given 
(attributed) object exist: in 
gradual classical logic, cardinality of 
objects, which is an important pillar in the philosophy of linear logic 
\cite{DBLP:journals/tcs/Girard87} and that of its kinds of 
so-called resource logics, 
is not what it must be responsible for, but 
only the facts themselves of whether any of them 
exist in a given domain of discourse, which 
is in line with classical logic.\footnote{
That proposition A is true and that proposition A is true 
mean that proposition A is true; the subject of this sentence 
is equivalent to the object of its.}
Hence they univocally assume a singular 
than plural form, as in the examples inscribed so far. That 
the first and the second, and the third and the fourth, 
equate is then a trite observation. Nevertheless, 
it is still important that we analyse them with a sufficient 
precision. In the third and the fourth where 
the same attribute is shared among several objects, the attribute 
of being yellow ascribes to all of them. 
Therefore those expressions are a true statement only if  
(1) there is an existential fact that both hat and shirt are  
and (2) being yellow is true for the existential fact (formed 
by existence of hat and that of shirt). 
Another example is found in Figure \ref{first_figure}.  
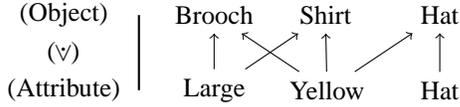
\begin{figure}[!h]
  \begin{center}  
    \begin{tikzpicture}     
      \draw[thick] (-2, 0) -- (-2, -1);   
      \node (a) at (-1, 0) {$\text{Brooch}$}; 
      \node (b) at (0.48, 0) {$\text{Shirt}$}; 
      \node (c) at (2, 0) {$\text{Hat}$};  
      \node (d) at (-1, -1) {$\text{Large}$}; 
      \node (e) at (0.5, -1) {$\text{Yellow}$};  
      \node (f) at (2, -1) {$\text{Hat}$};  
      \draw [->] (d) -- (a); 
      \draw [->] (d) -- (b); 
      \draw [->] (e) -- (a); 
      \draw [->] (e) -- (b); 
      \draw [->] (e) -- (c); 
      \draw [->] (f) -- (c); 
      \node (g) at (-3, 0) {$\text{(Object)}$};   
      \node (h) at (-3, -1) {$\text{(Attribute)}$};  
      \node (i) at (-3, -0.5) {(\reflectbox{\rotatebox[origin=c]{-90}{$\gtrdot$}})};  
    \end{tikzpicture} 
  \end{center}
  \caption{Illustration of an expression 
  {\small $((\text{Brooch} \wedge 
  \text{Shirt}) \gtrdot  
  \text{Large}) \wedge 
  ((\text{Brooch} \wedge 
  \text{Shirt} \wedge \text{Hat}) \gtrdot 
  \text{Yellow}) \wedge 
  (\text{Hat} \gtrdot \text{Hat})$}: the existential 
  fact of the attribute large 
  depends on the existential facts 
  of brooch and shirt; the existential 
  fact of the attribute of being yellow 
  depends on the existential facts 
  of brooch, shirt and hat; and 
  the existential fact of the attribute 
  hat depends on the existential fact of 
  hat to which it is an attribute.}
  \label{first_figure}
\end{figure}
  
\subsection{Object/attribute relation and disjunction 
({\small $\gtrdot$} and {\small $\vee$})} 
We look at examples first.   
\begin{enumerate}
  \item {\small $\text{Hat} \gtrdot (\text{Hat} \vee 
    \text{Brooch})$}: 
    It is true that hat is, and it is true that 
    it is either hatted or brooched.  
  \item {\small $(\text{Hat} \gtrdot \text{Hat}) 
    \vee (\text{Hat} \gtrdot \text{Brooch})$}: 
    At least either that it is true that hat is and 
    it is true that it is hatted, or that
    it is true that hat is and it is true that 
    it is brooched.  
  \item {\small $(\text{Hat} \vee \text{Shirt}) \gtrdot 
    \text{Yellow}$}: It is true that 
    at least either hat or shirt is, and 
    it is true that whichever is existing (or both) is (or are) yellow. 
  \item {\small $(\text{Hat} \gtrdot \text{Yellow}) \vee 
    (\text{Shirt} \gtrdot \text{Yellow})$}:  
    At least either it is true that hat is and it is 
    true that it is yellow, or 
    it is true that shirt is and it is true that 
    it is yellow.  
\end{enumerate}
Just as in the previous sub-section, here again 1) and 2), 
and 3) and 4) are equivalent. 
However, in the cases of 3) and 4) here, we 
have that the existential fact of the attribute yellow 
depends on that of hat or shirt, whichever is existing, or 
that of both if they both exist.\footnote{In classical logic, that proposition 
A or proposition B is true means 
that at least one of the proposition A or the proposition B is true though 
both can be true. 
Same goes here.}
\subsection{Nestings of object/attribute relations}
An expression of the kind {\small $(\code{Object}_1 \gtrdot 
\code{Object}_2) \gtrdot \code{Object}_3$} 
is ambiguous. But we begin by listing examples 
and then move onto analysis of the readings of the nesting of 
the relations. 
\begin{enumerate}
\item  {\small $(\text{Hat} \gtrdot \text{Brooch}) \gtrdot 
    \text{Green}$}: It is true that hat is, and it is true that 
    it is brooched. It is true that the object thus 
    described is green.  
  \item {\small $\text{Hat} \gtrdot (\text{Hat} \gtrdot \text{White})$}:
    It is true that hat is, and it is true that 
    it has the attribute of which 
    it is true that hat is and that it is 
    white. (More simply, it is true that hat is, 
    and it is true that it is white-hatted.)
  \item {\small $\neg (\text{Hat} \gtrdot \text{Yellow}) 
    \gtrdot \text{Brooch}$}:   
    Either it is false that hat is, or else it is true that 
    hat is but it is false that it is yellow.\footnote{
    This is the reading of 
    {\small $\neg (\text{Hat} \gtrdot \text{Yellow})$}.}
    If it is false that hat is, then it is true that 
    brooched object (which obviously cannot be hat) is. 
    If it is true that hat is but it is false that it is yellow,  
    then it is true that the object thus described is brooched. 
\end{enumerate}     
Note that to say that Hat {\small $\gtrdot$} Brooch (brooched hat) 
is being green, we must mean to say that the object to 
the attribute of being green, \emph{i.e.} hat, is green. It 
is on the other hand 
unclear if green brooched hat should or should not 
mean that the brooch, an accessory to hat, is also green. 
But common sense about adjectives dictates that 
such be simply indeterminate. It is reasonable
for (Hat {\small $\gtrdot$} Brooch) {\small $\gtrdot$} 
Green, while if we have 
(Hat {\small $\gtrdot$} Large) {\small $\gtrdot$} 
Green, ordinarily speaking it cannot be the case that the attribute of being 
large is green. Therefore 
we enforce that {\small $(\code{Object}_1 \gtrdot \code{Object}_2) 
\gtrdot \code{Object}_3$} amounts to 
{\small $(\code{Object}_1 \gtrdot \code{Object}_3) 
\wedge ((\code{Object}_1 \gtrdot \code{Object}_2)
\vee 
(\code{Object}_1 \gtrdot (\code{Object}_2 \gtrdot \code{Object}_3)))$} in which disjunction as usual captures the 
indeterminacy. 2) poses no ambiguity. 3) is understood in the same 
way as 1). 
\subsection{Two nullary logical connectives 
{\small $\top$} and {\small $\bot$}}
Now we examine the nullary logical connectives  
{\small $\top$} and {\small $\bot$} which 
denote, in classical logic, the concept of the truth and that of 
the inconsistency. In gradual classical logic 
{\small $\top$} denotes the concept of the presence 
and {\small $\bot$} denotes that of the absence. 
Several examples for
the readings are;
\begin{enumerate}
  \item {\small $\top \gtrdot \text{Yellow}$}:   
    It is true that yellow object is. 
  \item {\small $\text{Hat} \gtrdot (\top \gtrdot \text{Yellow})$}:   
    It is true that hat is, and it is true that it has the 
    following attribute of which it is true that 
    it is yellow object. 
  \item {\small $\bot \gtrdot \text{Yellow}$}:   
    It is true that nothingness is, and it is true that 
    it is yellow. 
  \item {\small $\text{Hat} \gtrdot \top$}:  
    It is true that hat is. 
  \item {\small $\text{Hat} \gtrdot \bot$}:    
    It is true that hat is, and it is true that it has no attributes. 
  \item {\small $\bot \gtrdot \bot$}: It is  
    true that nothingness is, and it is true that it has no attributes. 
\end{enumerate}      
1) and 2) illustrate how the sense of the `truth'
is constrained by the object to which it acts as 
an attribute. For the rest, however, 
there is a point around 
the absence which is not so vacuous as not to merit 
a consideration, and to which I in fact append the following postulate. 
\begin{postulate} 
  That which cannot have any attribute is not. 
  Conversely,  
  anything that remains once all the attributes 
  have been removed from a given object is nothingness
  for which 
  any scenario where it comes with an attribute is inconceivable.  
  \label{axiomZero}
\end{postulate}   
With it, 3) which asserts the existence of 
nothingness is contradictory. 4) then behaves as expected in 
that Hat which is asserted with the presence of attribute(s)  
is just as generic a term as Hat itself is. 
5) which asserts the existence of an object with 
no attributes again contradicts Postulate \ref{axiomZero}.  
6) illustrates that any attributed object in some part of which 
has turned out to be contradictory remains contradictory no matter 
how it is to be extended: a {\small $\bot$} 
cannot negate another {\small $\bot$}. \\
\indent But how plausible is the postulate itself? 
Let us imagine hat. If the word evoked in our mind any 
specific hat with specific 
colour and shape, we 
first remove the colour out of it. If the process should make 
it transparent, we then remove the transparentness away from it. And if there should be 
still some things that are by some means perceivable as have 
originated from it, then because they are an attribute 
of the hat, we again remove any one of them. If 
{\it the humanly
no longer detectable something  is not nothingness} is not itself 
contradictory, then there must be still some quality originating 
in the hat that makes the something differ from nothingness. But 
the quality must again be an attribute to the hat, which we 
decisively remove away. Therefore, at least intuition
solidifies the validity of Postulate \ref{axiomZero}. A further 
pursuit on this topic may be useful. For now, 
however, we shall draw a direct support from - among others - 
Transcendental Aesthetic in Critique of Pure Reason (English 
translation \cite{Kant08}), and close the scene. 
\subsection{Sub-Conclusion}  
Gradual classical logic was developed in Section  
{\uppercase\expandafter{\romannumeral 1}} and 
Section {\uppercase\expandafter{\romannumeral 2}}. 
The next two sections Section {\uppercase\expandafter{\romannumeral 3}}
and Section {\uppercase\expandafter{\romannumeral 4}} 
study its mathematical aspects. 
\section{Mathematical mappings: syntax and semantics}   
In this section a semantics 
of gradual classical logic is formalised. 
We assume in the rest of this document;
\begin{itemize}
  \item {\small $\mathbb{N}$} denotes the set of natural numbers 
    including 0.   
  \item {\small $\wedge^{\dagger}$} and {\small $\vee^{\dagger}$} 
    are two binary 
    operators on Boolean arithmetic. 
    The following laws hold; 
    {\small $1 \vee^{\dagger} 1 = 1 \vee^{\dagger} 0 = 0 
    \vee^{\dagger} 1 = 1$}, 
    {\small $0 \wedge^{\dagger} 0 
    = 0 \wedge^{\dagger} 1 = 1 \wedge^{\dagger} 0 = 0$}, 
    and {\small $1 \wedge^{\dagger} 1 = 1$}. 
  \item {\small $\wedge^{\dagger}$}, {\small $\vee^{\dagger}$} 
      {\small $\rightarrow^{\dagger}$}, {\small $\neg^{\dagger}$}, {\small $\exists$} 
    and {\small $\forall$}
    are meta-logical connectives: conjunction, disjunction,\footnote{
    These two symbols are overloaded. Save whether 
    truth values or  the ternary values are supplied as arguments, 
    however, the distinction is clear from the context 
    in which they are used. 
    }
    material implication, negation, existential quantification 
    and universal quantification, whose semantics  
    follow those of standard classical logic.    
    We abbreviate {\small $(A \rightarrow^{\dagger} B) 
    \wedge^{\dagger} (B \rightarrow^{\dagger} A)$} 
    by {\small $A \leftrightarrow^{\dagger} B$}.  
  \item Binding strength of logical or meta-logical connectives is,
    in 
    the order of 
    decreasing precedence;\\
    {\small $[\neg]\! \gg \!
    [\wedge \ \ \vee]\! \gg \!
    [\gtrdot] \gg [\forall \ \ \exists]\! \gg \!
    [\neg^{\dagger}]\! \gg\! [\wedge^{\dagger} \ \  \vee^{\dagger}]
    \! \gg\! [\rightarrow^{\dagger}] 
    \! \gg\! [\leftrightarrow^{\dagger}]$}.    
  \item For any binary connectives {\small $?$}, 
    for any {\small $i, j \in \mathbb{N}$} and 
    for {\small $!_0, !_1, \cdots, !_j$} that are some recognisable 
    entities, 
    {\small $?_{i = 0}^j !_i$} is an abbreviation of 
    {\small $(!_0) ? (!_1) ? \cdots ? (!_j)$}.  
  \item 
    For the unary connective {\small $\neg$},   
    {\small $\neg \neg !$} for some recognisable entity 
    {\small $!$} is an abbreviation of 
    {\small $\neg (\neg !)$}. Further, 
    {\small $\neg^k !$} for some {\small $k \in \mathbb{N}$}  
    and some recognisable entity {\small $!$} 
    is an abbreviation of 
    {\small $\underbrace{\neg \cdots \neg}_k
    !$}.  
  \item For the binary connective {\small $\gtrdot$}, 
    {\small $!_0 \gtrdot !_1 \gtrdot !_2$} 
    for some three recognisable entities 
    is an abbreviation of 
    {\small $!_0 \gtrdot (!_1 \gtrdot !_2)$}. 
\end{itemize}
On this preamble we shall begin. 
\subsection{Development of semantics}
The set of literals in gradual classical logic is denoted by {\small $\mathcal{A}$} 
whose elements are referred to by {\small $a$} with or without 
a sub-script. This set has a countably many number 
of literals. 
Given a literal {\small $a \in \mathcal{A}$}, 
its complement is denoted by {\small $a^c$} which is 
in {\small $\mathcal{A}$}. As usual, we have {\small $\forall 
a \in \mathcal{A}.(a^c)^c = a$}. 
The set {\small  $\mathcal{A} \cup \{\top\} \cup \{\bot\}$} where 
{\small $\top$} and {\small $\bot$} are the two nullary logical 
connectives is denoted by {\small $\mathcal{S}$}. Its elements 
are referred to by {\small $s$} with or without 
a sub-script. Given {\small $s \in \mathcal{S}$}, its 
complement is denoted by {\small $s^c$} which 
is in {\small $\mathcal{S}$}. Here we have    
{\small $\top^c = \bot$} and 
{\small $\bot^c = \top$}. 
The set of formulas is denoted by {\small $\mathfrak{F}$} 
whose elements, {\small $F$} with or without 
a sub-/super-script,
are finitely constructed from the following grammar; \\
\indent {\small $F := s \ | \ F \wedge F \ | \ 
F \vee F \ | \ \neg F \ | \  F \gtrdot F$}\\
\hide{
\begin{definition}[Properties of gradual classical logic]{\ } 
  \begin{enumerate}
  \item {\small $(F_1 \gtrdot F_2) \gtrdot F_3 
      = F_1 \gtrdot (F_2 \gtrdot F_3)$}. 
     \item {\small $\neg \neg a = a$}.      
     \item {\small $\neg \neg \top = \top$}.  
     \item {\small $\neg \neg \bot = \bot$}. 
     \item {\small $(F_1 \wedge F_2) \gtrdot F_3 = 
       (F_1 \gtrdot F_3) \wedge (F_2 \gtrdot F_3)$}.  
     \item {\small $(F_1 \vee F_2) \gtrdot F_3 = 
       (F_1 \gtrdot F_2) \vee (F_2 \gtrdot F_3)$}.  
     \item {\small $F_1 \gtrdot (F_2 \wedge F_3) 
       = (F_1 \gtrdot F_2) \wedge (F_1 \gtrdot F_3)$}. 
     \item {\small $F_1 \gtrdot (F_2 \vee F_3) 
       = (F_1 \gtrdot F_2) \vee (F_1 \gtrdot F_3)$}. 
       \hide{
    \item {\small $\neg (\circled{F_1}\ F_2) = \neg (\circled{F_1}^+ F_2) 
      = \circled{F_1}^- F_2$}. 
      \hide{
    \item {\small $\circled{F_1 \wedge F_2}\ F_3 = 
      \circled{F_1}\ F_2 \wedge \circled{F_1}\ F_3$}. 
    \item {\small $\circled{F_1 \vee F_2}\ F_3 = 
      \circled{F_1}\ F_2 \vee \circled{F_1}\ F_3$}.   
    \item {\small $\circled{F_1 \wedge F_2}^+ F_3 = 
      \circled{F_1}^+ F_3 \wedge
      \circled{F_2}^+ F_3$}. 
    \item {\small $\circled{F_1 \vee F_2}^+ F_3 = 
      \circled{F_1}^+ F_3 \vee
      \circled{F_2}^+ F_3$}. 
    \item {\small $\circled{F_1 \wedge F_2}^- F_3 = 
      \circled{F_1}^- F_3 \vee \circled{F_2}^- F_3$}.  
    \item {\small $\circled{F_1 \vee F_2}^- F_3 = 
      \circled{F_1}^- F_3 \wedge \circled{F_2}^- F_3$}. 
      \hide{
    \item {\small $\circled{\circled{F_1}\ F_2}\ F_3 =  
      \circled{F_1}(\circled{F_2}\ F_3)$}.  
    \item {\small $\circled{\circled{F_1}^+ F_2}\ F_3 = 
      \circled{F_1}^+ (\circled{F_2}\ F_3)$}. 
    \item {\small $\circled{\circled{F_1}^- F_2}\ F_3 = 
      \circled{F_1}^- (\circled{F_2}\ F_3)$}.  
    \item {\small $\circled{F_1 \wedge F_2}^+ F_3 = 
      \circled{F_1}^+ F_3 \wedge \circled{F_2}^+ F_3$}. 
    \item {\small $\circled{F_1 \vee F_2}^+ F_3 = 
      \circled{F_1}^+ F_3 \vee \circled{F_2}^+ F_3$}.  
    \item {\small $\circled{\circled{F_1}^+ F_2}^+ F_3 =  
      \circled{F_1}^+ (\circled{F_2}\ F_3)$}.    
    \item {\small $\circled{F_1 \wedge F_2}^- F_3 = 
      \circled{F_1}^- F_3 \vee \circled{F_2}^- F_3$}. 
    \item {\small $\circled{F_1 \vee F_2}^- F_3 = 
      \circled{F_1}^- F_3 \wedge \circled{F_2}^- F_3$}.  
    \item {\small $\circled{\circled{F_1}^- F_2}^- F_3  
      = \circled{F_1}^+ (\circled{F_2}\ F_3)$}.   
      } 
      } 
}
  \end{enumerate}
\end{definition}    
} 
We now develop semantics. This is done in two parts: we do not outright 
jump to the definition 
of valuation (which we could, but which we simply do not 
choose in anticipation for later proofs). Instead, just as we only need 
consider negation normal form in classical logic because every 
classical logic formula definable 
has a reduction into a normal form, so 
shall we first define rules for formula reductions 
(for any {\small $F_1, F_2, F_3 \in 
\mathfrak{F}$}):  
\begin{itemize}
  \item {\small $\forall s \in \mathcal{S}.\neg s \mapsto s^c$} 
    ({\small $\neg$} reduction 1). 
  \item {\small $\neg (F_1 \wedge F_2) \mapsto
    \neg F_1 \vee \neg F_2$} ({\small $\neg$} reduction 2). 
  \item {\small $\neg (F_1 \vee F_2) \mapsto
    \neg F_1 \wedge \neg F_2$} ({\small $\neg$} reduction 3). 
  \item {\small $\neg (s \gtrdot F_2) \mapsto
    s^c \vee (s \gtrdot \neg F_2)$} ({\small $\neg$} 
    reduction 4).    
  \item {\small $(F_1 \gtrdot F_2) \gtrdot F_3
     \mapsto (F_1 \gtrdot F_3) \wedge  
     ((F_1 \gtrdot F_2) \vee (F_1 \gtrdot F_2 \gtrdot F_3))$} 
     ({\small $\gtrdot$} reduction 1).
   \item {\small $
     (F_1 \wedge F_2) \gtrdot F_3 \mapsto
     (F_1 \gtrdot F_3) \wedge (F_2 \gtrdot F_3)$} 
     ({\small $\gtrdot$} reduction 2). 
   \item {\small $
     (F_1 \vee F_2) \gtrdot F_3 \mapsto 
     (F_1 \gtrdot F_3) \vee (F_2 \gtrdot F_3)$} 
     ({\small $\gtrdot$} reduction 3).  
   \item {\small $F_1 \gtrdot (F_2 \wedge F_3) 
     \mapsto (F_1 \gtrdot F_2) \wedge (F_1 \gtrdot F_3)$} 
     ({\small $\gtrdot$} reduction 4). 
   \item {\small $F_1 \gtrdot (F_2 \vee F_3) 
     \mapsto (F_1 \gtrdot F_2) \vee (F_1 \gtrdot F_3)$} 
     ({\small $\gtrdot$} reduction 5). 
\end{itemize}    
\hide{
\begin{definition}[Binary sequence and concatenation] 
  We denote by {\small $\mathfrak{B}$} the set 
  {\small $\{0, 1\}$}. 
  We then denote by {\small $\mathfrak{B}^*$}  
  the set union 
  of (A) the set of finite sequences of elements of 
  {\small $\mathfrak{B}$} and 
  (B) a singleton set {\small $\{\epsilon\}$} denoting 
  an empty sequence. 
  A concatenation operator 
  {\small $\code{CONCAT}: \mathfrak{B}^{*} \times 
  \mathfrak{B} \rightarrow \mathfrak{B}$} 
  is defined to satisfy
  for all {\small $B_1 \in \mathfrak{B}^{*}$} and 
  for all {\small $b \in \mathfrak{B}$};  
  \begin{enumerate} 
    \item {\small $\code{CONCAT}(B_1, b) = 0$}
      if {\small $0$} occurs in {\small $B_1$} 
      or if {\small $b = 0$}. 
    \item {\small $\code{CONCAT}(B_1, b) =  
      \underbrace{11\dots1}_{k+1}$} for  
      {\small $|B_1| = k$}, 
      otherwise. Here 
      {\small $|\dots|$} indicates the size of the set: 
      {\small $|\{\epsilon\}| = 0$}, 
      {\small $|\{b\}| = 1$} for {\small $b \in \mathfrak{B}$}, 
      and so on. 
%
  \end{enumerate}  
  \hide{
  The following properties hold; 
  \begin{itemize}
    \item {\small $\forall n \in \mathbb{N}.[0 < 
      \underbrace{11\ldots 1}_{n+1}]$}. 
    \item {\small $\forall n, m \in \mathbb{N}. 
      [n < m] \rightarrow^{\dagger} 
      [\underbrace{11\ldots1}_{n + 1} < 
      \underbrace{11\ldots1}_{m + 1}]$}.  
  \end{itemize} 
  }
\end{definition} 
} 
\begin{definition}[Valuation frame]        
  Let {\small $\mathcal{S}^*$} denote the set union of (A) the set 
  of finite sequences of elements 
  of {\small $\mathcal{S}$}\footnote{Simply 
  for a presentation purpose, 
  we use comma such as {\small $s_1^*.s_2^*$} for 
  {\small $s_1^*, s_2^* \in \mathcal{S}^*$} to show that 
  {\small $s_1^*.s_2^*$} is an element of 
  {\small $\mathcal{S}^*$} in which 
  {\small $s_1^*$} is the preceding constituent and {\small $s_2^*$} 
  the following constituent of 
  {\small $s_1^*.s_2^*$}.} 
  and 
  (B) a singleton set {\small $\{\epsilon\}$} denoting 
  an empty sequence.  
    We define a valuation frame as a 2-tuple: 
  {\small $(\mathsf{I}, \mathsf{J})$}, where 
  {\small $\mathsf{I}: \mathcal{S}^* \times \mathcal{S} \rightarrow 
      \{0,1\}$} is what we call  
  local interpretation 
  and 
  {\small $\mathsf{J}: \mathcal{S}^* \backslash 
      \{\epsilon\} \rightarrow \{0,1\}  
       $} is what we call gloal interpretation. 
       The following are defined to satisfy.   
       \begin{description} 
           \item[Regarding local interpretation]{\ }
      \begin{itemize}
          \item {\small $[\mathsf{I}(s_0.\dots.s_{k-1}, \top) = 1]$}\footnote{
        When {\small $k = 0$}, 
        we assume that 
        {\small $[\mathsf{I}(s_0.\dots.s_{k-1}, s_k) 
        = \mathsf{I}(\epsilon, s_0)]$}. Same applies 
    in the rest.} 
          ({\small $\mathsf{I}$} valuation of $\top$).
      \item {\small $[\mathsf{I}(s_0.\dots.s_{k-1}, \bot) = 0]$} 
          (That  of $\bot$).  
      \item {\small $[\mathsf{I}(s_0.\dots.s_{k-1}, a_{k})
              = 0] \vee^{\dagger} 
              [\mathsf{I}(s_0.\dots.s_{k-1}, a_{k})
= 1]$} 
(That of a literal).  
\item {\small $[\mathsf{I}(s_0.\dots.s_{k-1},a_{k}) = 0]
        \leftrightarrow^{\dagger} 
        [\mathsf{I}(s_0.\dots.s_{k-1},a^c_{k}) = 1]$}
    (That of a complement).   
\item {\small $[{\mathsf{I}(s_0.\dots.s_{k-1}, s_{k}) 
            = \mathsf{I}(s'_0.\dots.s'_{k-1}, s_{k})}]$} 
     (Synchronization condition on {\small $\mathsf{I}$} 
     interpretation; this reflects  
     the dependency of the existential fact of an attribute to 
     the existential fact of objects to which 
     it is an attribute).   
 \end{itemize}
 \item[Regarding global interpretation]{\ }  
     \begin{itemize}
\item {\small $[\mathsf{J}(s_0.\dots.s_k) =  
        1] \leftrightarrow^{\dagger}$} {\small $\forall i \in \mathbb{N}.
        \bigwedge_{i=0}^{\dagger k} 
        [\mathsf{I}(s_0.\dots.s_{i-1},s_i) = 1]$}    (Non-contradictory {\small $\mathsf{J}$} 
    valuation). 
\item {\small $[\mathsf{J}(s_0.\dots.s_k) = 
        0] \leftrightarrow^{\dagger}$} 
    {\small $
        \exists i \in \mathbb{N}.[i \le k] \wedge^{\dagger}
        [\mathsf{I}(s_0.\dots.s_{i-1}, s_i) = 0]$}  
    (Contradictory {\small $\mathsf{J}$} valuation).   
  \end{itemize} 
  \end{description}  
  \label{interpretations}
\end{definition}    
Note that global interpretation is completely characterised by local interpretations, as clear from the definition. 
\begin{definition}[Valuation]    
  Suppose a valuation frame {\small $\mathfrak{M} = (\mathsf{I}, \mathsf{J})$}. The following are defined to hold 
  for all  
  {\small $F_1, F_2\in \mathfrak{F}$} and for all 
  {\small $k \in \mathbb{N}$}:    
\begin{itemize}
    \item {\small $[\intFrame \models s_0 \gtrdot s_1 \gtrdot \dots \gtrdot s_k] =  
  \mathsf{J}(s_0.s_1.\dots.s_k)$}.    
\item {\small $[\intFrame \models F_1 \wedge F_2] = 
  [\intFrame \models F_1] \wedge^{\dagger} 
  [\intFrame \models F_2]$}. 
\item {\small $[\intFrame \models F_1 \vee F_2] = 
  [\intFrame \models F_1] 
  \vee^{\dagger} [\intFrame \models F_2]$}. 
  \hide{
\item {\small $
  (F \not\in \mathcal{S}) \rightarrow^{\dagger} 
    ([\models_{\psi} F] = *)$}. 
    \item {\small $\forall v_1, v_2 \in \{0, 1, *\}.   
     [v_1 = *] \vee^{\dagger} [v_2 = *]  
      \rightarrow^{\dagger} [v_1 \oplus v_2 = v_1 \odot v_2 = *]$} 
\item {\small $\forall v_1, v_2 \in \{0, 1, *\}. 
  [v_1 \not= *] \wedge^{\dagger} [v_2 \not= *] 
  \rightarrow^{\dagger} 
  ((v_1 \oplus v_2) = (v_1 \wedge^{\dagger} v_2))$}.  
\item {\small $\forall v_1, v_2 \in \{0, 1, *\}. 
  [v_1 \not= *] \wedge^{\dagger} [v_2 \not= *] 
  \rightarrow^{\dagger} 
  ((v_1 \odot v_2) = (v_1 \vee^{\dagger} v_2))$}.  
  }
      \hide{
    \item {\small $\forall 
      x_{0\cdots k} \in \mathfrak{X}.x_0 \odot x_1 \odot \dots \odot x_{k} = 
      \lceil x_0 \rfloor^{\overrightarrow{x_0}}
      \vee^{\dagger} \lceil x_1 \rfloor^{\overrightarrow{x_1}} 
      \vee^{\dagger} \dots 
      \vee^{\dagger} \lceil x_{k}\rfloor^{\overrightarrow{x_k}}$} if 
      {\small $\bigwedge^{\dagger\ k}_{j = 0} ( 
      \exists m \in \mathbb{N}\ \exists 
      y_{j0}, y_{j1}, \dots, y_{jm} \in 
      \mathfrak{Y}.[x_j = \oplus_m 
      y_{jm}])$}. 
      \footnote{Just to ensure 
      of no confusion on the part of readers though 
      not ambiguous, this   
      {\small $\bigwedge^{\dagger}$}  
      is a meta-logical connective
      operating on true/false.}  
        \item {\small $\forall x \in \mathfrak{X}.
      \lceil [\models_{\psi} s] \oplus 
      x \rfloor^{i} =  
      \lceil [\models_{\psi} s] \rfloor^{i} \wedge^{\dagger}
      \lceil x \rfloor^{i}$}.\footnote{And this {\small $\wedge^{\dagger}$} 
      operates on 1/0, though again not ambiguous.} 
      }
       \end{itemize}  
   \label{model}
\end{definition}     
The 
notions of validity and satisfiability are 
as usual. 
\begin{definition}[Validity/Satisfiability]
    A formula {\small $F \in \mathfrak{F}$} is said 
  to be satisfiable in a valuation frame {\small $\mathfrak{M}$} 
  iff 
  {\small $1 = [\intFrame \models F]$}; 
  it is said to be valid iff it is satisfiable 
  for all the valuation frames; 
  it is said to be invalid iff 
  {\small $0 = [\intFrame \models F]$} for 
  some valuation frame {\small $\intFrame$}; 
  it is said to be unsatisfiable 
  iff it is invalid for all the valuation frames.    
  \label{universal_validity}
\end{definition}      
\subsection{Study on the semantics}
We have not yet formally verified
some important points.  
Are there, firstly, 
any formulas {\small $F \in \mathfrak{F}$} that 
do not reduce into some value-assignable formula? 
Secondly, what if both 
{\small $1 = [\intFrame \models F]$} 
and {\small $1 = [\intFrame \models \neg F]$}, or both 
{\small $0 = [\intFrame \models F]$} and 
{\small $0 = [\intFrame \models \neg F]$} for some 
{\small $F \in \mathfrak{F}$} under some 
{\small $\intFrame$}? 
Thirdly, should it happen that {\small $[\mathfrak{M}
\models F] 
= 0 = 1$} for any formula {\small $F$}, given a
valuation frame? \\
\indent If the first should hold, 
the semantics - the reductions and valuations as were presented in 
the previous sub-section - would 
not assign a value (values) to every member of 
{\small $\mathfrak{F}$} even with the reduction rules made available. 
If the second should hold, we could gain 
{\small $1 = [\mathfrak{M} \models F \wedge \neg F]$},
which would relegate this gradual logic to 
a family of para-consistent logics \cite{Marcos05} - quite out of  
keeping with my intention. And the third should never 
hold, clearly. \\
\indent Hence it must be shown that 
these unfavoured situations do not arise. An outline to the completion 
of the proofs is; 
\begin{enumerate}
  \item to establish that every formula has a reduction 
    through {\small $\neg$} reductions and 
    {\small $\gtrdot$} reductions 
    into some formula {\small $F$} for which it holds that 
    {\small $\forall \mathfrak{M}.
        [\mathfrak{M} \models F] \in \{0, 1\}$}, to 
    settle down the first inquiry. 
  \item to prove that
      any formula {\small $F$} 
      to which a value 0/1
      is assignable {\it without the use of 
      the reduction 
      rules} satisfies for every valuation frame (a) 
      that {\small $[\mathfrak{M} \models F] \vee^{\dagger} 
    [\mathfrak{M}\models
    \neg F] = 1$} and 
{\small $[\mathfrak{M} \models F] \wedge^{\dagger} [\mathfrak{M} \models
    \neg F] = 0$}; and (b) either that 
    {\small $0 \not= 1 = [\intFrame \models F]$} 
    or that {\small $1 \not= 0 = [\intFrame \models F]$},
    to settle down the other inquiries partially. 
\item to prove that
      the reduction through 
    {\small $\neg$} reductions and {\small $\gtrdot$} reductions 
    on any formula {\small $F \in \mathfrak{F}$} 
    is normal in that, 
    in whatever order those reduction rules are 
    applied to {\small $F$}, any {\small $F_{\code{reduced}}$} in 
    the set of possible formulas it reduces into satisfies 
    for every valuation frame either 
    that {\small $[\mathfrak{M} \models 
        F_{\code{reduced}}] = 1$}, or 
    that {\small $[\mathfrak{M} \models 
        F_{\code{reduced}}] = 0$}, 
    for all such {\small $F_{\code{reduced}}$}, 
    to conclude. 
\end{enumerate} 
\subsubsection{Every formula is 0/1-assignable} 
\vspace{-0.1mm}
We state several definitions for the first objective of ours. 
\begin{definition}[Chains/Unit chains]{\ }\\ 
  A chain is defined to be any formula 
  {\small $F \in \mathfrak{F}$} such 
  that 
  {\small $F = F_0 \gtrdot F_1 \gtrdot \dots \gtrdot F_{k+1}$} 
  for {\small $k \in \mathbb{N}$}. 
  A unit chain is defined to be a chain 
  for which {\small $F_i \in \mathcal{S}$} for 
  all {\small $0 \le i \le k+1$}. We denote 
  the set of 
  unit chains by {\small $\mathfrak{U}$}.   
  By the head of a chain {\small $F \in 
  \mathfrak{F}$}, we mean some formula {\small $F_a \in 
  \mathfrak{F}$} satisfying
  (1) that {\small $F_a$} is not in the form 
  {\small $F_b \gtrdot F_c$} for some 
  {\small $F_b, F_c \in \mathfrak{F}$} and (2) that 
  {\small $F = F_a \gtrdot F_d$} 
  for some {\small $F_d \in \mathfrak{F}$}.   
  By the tail of a chain {\small $F \in 
  \mathfrak{F}$}, we then mean some formula 
  {\small $F_d \in \mathfrak{F}$} such that  
  {\small $F = F_a \gtrdot F_d$} for 
  some {\small $F_a$} as the head of {\small $F$}. 
\end{definition} 
\begin{definition}[Unit chain expansion]{\ }\\
  Given any {\small $F \in \mathfrak{F}$}, we 
  say that {\small $F$} is expanded in 
  unit chains only if 
  any chain that occurs in {\small $F$} is a unit chain.  
\hide{  Given 
  any formula {\small $F \in \mathfrak{F}$}, 
  we denote by {\small $[F]_p$}  
  of {\small $\mathfrak{F}$} such that 
  it contains only those formulas (A) that result 
  from applying rules of transformations 
  
  \indent {\small $\{F' (\in \mathfrak{F})\ | \ F' \text{ is 
  expanded in primary chains via \textbf{Transformations} in Definition 3.}\}$}. 
  Likewise, we denote by {\small $[F]_u$} the set;\\
  \indent {\small $\{F' (\in \mathfrak{F})\ | \ F' \text{ is 
  expanded in unit chains via Definition 3.}\}$}.\footnote{
  Given a formula {\small $F$}, it may be that {\small $[F]_u$} 
  is always a singleton set, 
  {\small $F$} always leading to a unique unit expansion 
  via Definition 3. But I do not verify this in this paper, as 
  no later results strictly the stronger statement.}
  }
\end{definition}     
\begin{definition}[Formula size] 
  The size of a formula is defined inductively. Let 
  {\small $F$} be some arbitrary formula, and let 
  {\small $\formulasize(F)$} be the formula size of {\small $F$}. Then 
  it holds that; 
  \begin{itemize}
    \item {\small $\formulasize(F) = 1$} if 
      {\small $F \in \mathcal{S}$}. 
    \item {\small $\formulasize(F) = 
      \formulasize(F_1) + \formulasize(F_2) + 1$} 
      if {\small $F = F_1 \wedge F_2$}, 
      {\small $F = F_1 \vee F_2$}, or {\small $F = F_1 \gtrdot F_2$}. 
    \item {\small $\formulasize(F) = \formulasize(F_1) + 1$} 
      if {\small $F = \neg F_1$}. 
  \end{itemize}
\end{definition} 
\begin{definition}[Maximal number of {\small $\neg$} nestings]{\ }\\
  Given a formula {\small $F \in \mathfrak{F}$}, we denote
  by {\small $\negMax(F)$} a maximal number of {\small $\neg$} 
  nestings in {\small $F$}, whose definition goes as follows;  
  \begin{itemize}
      \item If {\small $F_0 = s$}, then {\small $\negMax(F_0) = 0$}.
      \item If {\small $F_0 = F_1 \wedge F_2$} or 
	{\small $F_0 = F_1 \vee F_2$} or 
	{\small $F_0 = F_1 \gtrdot F_2$}, then 
	{\small $\negMax(F_0) = max(\negMax(F_1), \negMax(F_2))$}. 
      \item If {\small $F_0 = \neg F_1$}, then  
	{\small $\negMax(F_0) = 1 + \negMax(F_1)$}. \\
    \end{itemize}
\end{definition} 
We now work on the main results. 
\begin{lemma}[Linking principle]
  Let {\small $F_1$} and {\small $F_2$} be two formulas 
  in unit chain expansion. Then it holds that 
  {\small $F_1 \gtrdot F_2$} has a reduction into 
  a formula in unit chain expansion.
  \label{linking_principle}
\end{lemma}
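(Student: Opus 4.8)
The plan is to argue by a well-founded induction, after first reducing $F_1$ and $F_2$ to a negation-free form. Since the statement asks only for the existence of \emph{some} reduction sequence, I am free to fix a convenient order of rewriting. First I would use the $\neg$ reductions 1--4 to drive every negation inward until it meets a literal, where $\neg$ reduction 1 turns $\neg s$ into $s^c \in \mathcal{S}$. Because $F_1$ and $F_2$ are in unit chain expansion, every chain they contain is a unit chain, so every negation that is ever exposed sits in front of a literal, a $\wedge$/$\vee$, or a literal-headed chain $s \gtrdot (\cdots)$; hence one of $\neg$ reductions 1--4 always applies, and driving negations inward terminates (each step strictly shrinks the multiset of sizes of negated subformulas, equivalently lowers the $\negMax$-weighted formula size). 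Interleaving $\gtrdot$ reductions 4 and 5 to distribute any resulting $s \gtrdot (\cdots)$ over $\wedge$ and $\vee$, I obtain from $F_1$ and $F_2$ two negation-free formulas that are still in unit chain expansion, i.e.\ Boolean combinations (via $\wedge,\vee$) of elements of $\mathcal{S}$ and unit chains. It therefore suffices to prove the lemma for such ``pure'' $F_1, F_2$.

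For the pure case I would induct on $\formulasize(F_1)$, the size of the left operand of the connecting $\gtrdot$, with a secondary induction on $\formulasize(F_2)$ used only when $F_1 \in \mathcal{S}$. The case split is on the principal connective of $F_1$. If $F_1 = G_1 \wedge G_2$ (respectively $G_1 \vee G_2$), then $\gtrdot$ reduction 2 (respectively 3) rewrites $F_1 \gtrdot F_2$ into $(G_1 \gtrdot F_2) \wedge (G_2 \gtrdot F_2)$ (respectively $(G_1 \gtrdot F_2) \vee (G_2 \gtrdot F_2)$), and since $\formulasize(G_i) < \formulasize(F_1)$ the induction hypothesis purifies each component; a Boolean combination of pure formulas is pure. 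If $F_1 = s \in \mathcal{S}$, I recurse on $F_2$: when $F_2 \in \mathcal{S}$ or $F_2$ is a unit chain, $s \gtrdot F_2$ is already a unit chain; when $F_2$ is $G_1 \wedge G_2$ or $G_1 \vee G_2$, $\gtrdot$ reductions 4 and 5 push $s \gtrdot (\cdot)$ onto strictly smaller right operands. The remaining case, $F_1$ a unit chain of length at least two, is where $\gtrdot$ reduction 1 is invoked.

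The main obstacle is exactly $\gtrdot$ reduction 1, since it is the only rule that enlarges $\formulasize$ and, worse, regenerates a triple nesting $F_1 \gtrdot F_2 \gtrdot F_3$ on its right-hand side, so a naive induction on total formula size fails. Writing the unit chain as $F_1 = s_0 \gtrdot B$ with $B = s_1 \gtrdot \cdots \gtrdot s_n$, the rule rewrites $(s_0 \gtrdot B) \gtrdot F_2$ into $(s_0 \gtrdot F_2) \wedge \big((s_0 \gtrdot B) \vee (s_0 \gtrdot (B \gtrdot F_2))\big)$. The key observation that makes the induction go through is that every residual $\gtrdot$ whose left side must still be processed has a strictly smaller head: $s_0 \gtrdot F_2$ has head $s_0$ with $\formulasize(s_0) = 1 < \formulasize(F_1)$, the middle disjunct $s_0 \gtrdot B$ is just $F_1$ and is already a unit chain, and the inner $B \gtrdot F_2$ has $\formulasize(B) = \formulasize(F_1) - 2 < \formulasize(F_1)$. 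Hence the induction hypothesis applies to $B \gtrdot F_2$, yielding a pure $G$, and then to $s_0 \gtrdot G$ (head of size $1$), so the whole right-hand side reduces to a Boolean combination of pure formulas, i.e.\ to unit chain expansion. I expect the bookkeeping of this decreasing head-size measure --- together with checking that the preliminary negation elimination really does keep every chain head atomic, so that no rule is ever stuck --- to be the only genuinely delicate points; the rest is routine distribution.
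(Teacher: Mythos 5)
Your proposal is correct and takes essentially the same route as the paper's proof: distribute $\wedge$/$\vee$ out of the left operand via $\gtrdot$ reductions 2--3 and out of the right operand via reductions 4--5, then eliminate unit-chain heads with $\gtrdot$ reduction 1, terminating because each application strictly shortens the head of the residual chain $B \gtrdot F_2$ --- you merely package the three phases as one lexicographic structural induction instead of performing them sequentially. The preliminary negation-elimination phase is superfluous (in the paper's usage a formula in unit chain expansion is already a negation-free $\wedge/\vee$-combination of elements of $\mathfrak{U} \cup \mathcal{S}$), but it is harmless.
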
   
\begin{IEEEproof} 
 In Appendix A. 
\end{IEEEproof}
\hide{
\begin{IEEEproof}  
  First apply {\small $\gtrdot$} reductions 2 and 3 on 
      {\small $F_1 \gtrdot F_2$} into a formula 
      in which the only occurrences of the chains are 
      {\small $f_0 \gtrdot 
      F_2$}, {\small $f_1 \gtrdot F_2$}, \dots, {\small $f_{k} 
      \gtrdot F_2$} for some {\small $k \in \mathbb{N}$} and 
      some {\small $f_0, f_1, \dots, f_k \in \mathfrak{U} 
      \cup \mathcal{S}$}. Then apply {\small $\gtrdot$} reductions 
      4 and 5 to each of those chains into a formula 
      in which the only occurrences of the chains are: 
      {\small $f_0 \gtrdot g_{0}, f_0 \gtrdot g_{1}, \dots, 
      f_0 \gtrdot g_{j}$},
      {\small $f_1 \gtrdot g_{0}$}, \dots, 
      {\small $f_1 \gtrdot g_{j}$}, \dots, 
      {\small $f_k \gtrdot g_{0}$}, \dots, {\small $f_k \gtrdot g_j$} 
      for some {\small $j \in \mathbb{N}$} and 
      some {\small $g_0, g_1, \dots, g_j \in \mathfrak{U}$}. 
      To each such chain, apply 
      {\small $\gtrdot$} reduction 1 as long as it is applicable. 
      This process cannot continue infinitely since any formula 
      is finitely constructed and since, under the premise, we 
      can apply induction
      on the number of elements of {\small $\mathcal{S}$} 
      occurring in {\small $g_x$}, {\small $0 \le x 
      \le j$}. 
      The straightforward inductive proof is left to readers. The 
      result is a formula in unit chain expansion. \\
\end{IEEEproof} 
}
\begin{lemma}[Reduction without negation]
  Any formula {\small $F_0 \in \mathfrak{F}$} in which 
  no {\small $\neg$} occurs reduces into 
  some formula in unit chain expansion.
  \label{normalisation_without_negation}
\end{lemma}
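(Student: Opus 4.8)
The plan is to prove this by structural induction on the construction of $F_0$, using the Linking Principle (Lemma \ref{linking_principle}) to discharge the only genuinely substantive case. Since $F_0$ contains no occurrence of $\neg$, the three $\neg$ reductions are never applicable and the entire argument takes place inside the fragment governed by the five $\gtrdot$ reductions. Throughout I treat the reduction relation $\mapsto$ as compatible with formula contexts, so that a reduction $G \mapsto^{*} G'$ may be performed on any subformula occurrence inside a larger formula; this is the same convention already relied upon in the proof of Lemma \ref{linking_principle}, where reductions are applied to proper subformulas.

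For the base case, if $F_0 = s$ for some $s \in \mathcal{S}$, then $F_0$ is trivially a unit chain and contains no non-unit chain, so it is already in unit chain expansion and the empty reduction suffices. For the inductive step, suppose the statement holds for all negation-free formulas of strictly smaller size. If $F_0 = F_1 \wedge F_2$ or $F_0 = F_1 \vee F_2$, the induction hypothesis supplies reductions of $F_1$ and $F_2$ into formulas $F_1'$ and $F_2'$ that are in unit chain expansion; by context compatibility $F_0$ reduces to $F_1' \wedge F_2'$ (respectively $F_1' \vee F_2'$). Because the outermost connective here is not $\gtrdot$, the chains occurring in this combined formula are exactly those occurring in $F_1'$ together with those occurring in $F_2'$, and each of these is a unit chain; hence the combined formula is in unit chain expansion, as required.

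The remaining case $F_0 = F_1 \gtrdot F_2$ is where the real content lies, and it is the step I expect to be the crux. By the induction hypothesis, $F_1$ reduces to some $F_1'$ and $F_2$ reduces to some $F_2'$, both in unit chain expansion, and so by context compatibility $F_0$ reduces to $F_1' \gtrdot F_2'$. This intermediate formula need not itself be in unit chain expansion: $F_1'$ or $F_2'$ may contain $\wedge$, $\vee$, or nested chains, in which case $F_1' \gtrdot F_2'$ is a non-unit chain. But this is precisely the hypothesis of Lemma \ref{linking_principle}, since $F_1'$ and $F_2'$ are in unit chain expansion; the Linking Principle therefore guarantees that $F_1' \gtrdot F_2'$ has a reduction into a formula in unit chain expansion. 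Composing the reduction $F_0 \mapsto^{*} F_1' \gtrdot F_2'$ with the reduction furnished by Lemma \ref{linking_principle} yields a reduction of $F_0$ into a formula in unit chain expansion, which closes the induction. The only point demanding care is the implicit appeal to context compatibility, which must be justified (or simply fixed as a convention) so that the induction hypothesis can be invoked on the immediate subformulas; given that, the $\gtrdot$ case reduces cleanly to the already-established Linking Principle and no fresh combinatorial work is needed.
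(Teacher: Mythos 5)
Your proof is correct and follows essentially the same route as the paper's: induction on formula size, with the $\wedge$/$\vee$ cases handled by the induction hypothesis directly and the $\gtrdot$ case discharged by reducing the subformulas first and then invoking Lemma~\ref{linking_principle}. Your explicit remark about context compatibility of the reduction relation is a point the paper leaves implicit, but it does not change the argument.
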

\begin{IEEEproof}   
  By induction on formula size. 
  For inductive cases, consider what 
  {\small $F_0$} actually is: 
  \begin{enumerate}
    \item {\small $F_0 = F_1 \wedge F_2$} or 
      {\small $F_0 = F_1 \vee F_2$}: Apply induction 
      hypothesis on {\small $F_1$} and {\small $F_2$}. 
    \item {\small $F_0 = F_1 \gtrdot F_2$}: Apply  
      induction hypothesis on {\small $F_1$} and {\small $F_2$} to 
      get {\small $F'_1 \gtrdot F'_2$} where {\small $F'_1$} and 
      {\small $F'_2$} are formulas in unit chain expansion. 
      Then apply Lemma \ref{linking_principle}.
        \end{enumerate} 
\end{IEEEproof}  
  \begin{lemma}[Reduction] 
    Any formula {\small $F_0 \in \mathfrak{F}$} reduces 
    into some formula in unit chain expansion. 
    \label{reduction_result}
  \end{lemma}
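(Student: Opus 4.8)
The plan is to lift Lemma~\ref{normalisation_without_negation} to formulas containing negation by eliminating negations from the inside out, measuring progress by the maximal negation nesting $\negMax$. Concretely, I would argue by strong induction on $\negMax(F_0)$. The base case $\negMax(F_0) = 0$ is precisely Lemma~\ref{normalisation_without_negation}, since then no $\neg$ occurs in $F_0$. For the inductive step it suffices to exhibit a reduction of $F_0$ into some $F_0'$ with $\negMax(F_0') < \negMax(F_0)$, after which the induction hypothesis finishes the job.

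The engine of the inductive step is an auxiliary claim: \emph{if $G$ is in unit chain expansion, then $\neg G$ reduces to a formula in which no $\neg$ occurs.} I would prove this by an inner induction on $\formulasize(G)$. If $G = s \in \mathcal{S}$, then $\neg$ reduction 1 gives $s^c$. If $G = G_1 \wedge G_2$ or $G = G_1 \vee G_2$, then $\neg$ reduction 2 or 3 pushes the negation onto the strictly smaller unit-chain-expanded subformulas $G_1, G_2$, to which the inner hypothesis applies. The only delicate case is when $G$ is itself a unit chain $s_0 \gtrdot C$ with $C = s_1 \gtrdot \dots \gtrdot s_{k+1}$: here the single-element head $s_0 \in \mathcal{S}$ is exactly what licenses $\neg$ reduction 4, yielding $s_0^c \vee (s_0 \gtrdot \neg C)$; since $C$ is again a unit chain of strictly smaller size, the inner hypothesis turns $\neg C$ into a negation-free $C'$, and $s_0^c \vee (s_0 \gtrdot C')$ is negation-free as required.

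With this claim in hand, the inductive step runs as follows. The innermost negations of $F_0$ --- those occurrences $\neg G$ whose immediate scope $G$ contains no $\neg$ --- sit in pairwise disjoint positions, so they may be rewritten independently. For each such $\neg G$, first reduce the negation-free $G$ into a unit-chain-expanded $G'$ by Lemma~\ref{normalisation_without_negation} (this keeps the scope negation-free while exposing the single-element chain heads that $\neg$ reduction 4 demands), and then apply the auxiliary claim to reduce $\neg G'$ to a negation-free $H_G$. Because reduction is a congruence, performing these rewrites for every innermost negation reduces $F_0$ to a formula $F_0'$ in which each maximal stack of nested negations has lost its innermost member; hence $\negMax(F_0') \le \negMax(F_0) - 1$, and the outer induction hypothesis yields the desired unit chain expansion.

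The step I expect to be the main obstacle is the $\gtrdot$ case of the auxiliary claim, for two reasons. First, $\neg$ reduction 4 is applicable only when the head of the chain is a single $s \in \mathcal{S}$, which is why the preliminary normalisation of each scope via Lemma~\ref{normalisation_without_negation} (and, through it, the Linking principle of Lemma~\ref{linking_principle}) is essential rather than cosmetic. Second, one must be careful that interleaving $\gtrdot$ reductions under a negation does not break termination, since those reductions can enlarge formula size; the layered induction here --- outer on $\negMax$, inner on the size of an \emph{already} unit-chain-expanded scope --- is precisely what quarantines that growth and guarantees the whole process halts.
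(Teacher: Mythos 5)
Your proof is correct and follows essentially the same route as the paper's (Appendix B): an outer induction on $\negMax$ with base case Lemma~\ref{normalisation_without_negation}, normalising the innermost negation-free scopes into unit chain expansion first, and then an inner induction on formula size with the same four-way case split ($s$, $\wedge$, $\vee$, unit chain with head in $\mathcal{S}$) driven by $\neg$ reductions 1--4. The only cosmetic difference is that your auxiliary claim delivers a negation-free formula and defers to the outer hypothesis, whereas the paper's sub-induction produces a unit chain expansion directly; both are sound.
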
  
  \begin{IEEEproof}  
    By induction on maximal number of {\small $\neg$} nestings 
    and a sub-induction on formula size. 
        Lemma \ref{normalisation_without_negation} for 
    base cases. Details are in Appendix B.  
    \hide{ For inductive cases, assume that the current 
    lemma holds true for all the formulas with {\small $\negMax(F_0)$} 
    of up to {\small $k$}. Then we conclude by showing that 
    it still holds true 
    for all the formulas with {\small $\negMax(F_0)$} of {\small $k+1$}. 
    Now, because any formula is finitely constructed,  
    there exist sub-formulas in which occur no {\small $\neg$}. 
    By Lemma \ref{normalisation_without_negation}, those sub-formulas 
    have a reduction into a formula in unit chain expansion. Hence 
    it suffices to show that those formulas 
    {\small $\neg F'$} with {\small $F'$} already in unit chain 
    expansion reduce into a formula in unit chain expansion, upon which
    inductive hypothesis applies for a conclusion. 
    Consider what {\small $F'$} is: 
    \begin{enumerate}
      \item {\small $s$}: then apply {\small $\neg$} reduction 1 
	on {\small $\neg F'$} to remove the {\small $\neg$}
	occurrence. 
      \item {\small $F_a \wedge F_b$}: apply {\small $\neg$} 
	reduction 2. 
	Then apply (sub-)induction hypothesis on 
	{\small $\neg F_a$} and {\small $\neg F_b$}.  
      \item {\small $F_a \vee F_b$}: apply {\small $\neg$} 
	reduction 3. Then apply (sub-)induction hypothesis on 
	{\small $\neg F_a$} and {\small $\neg F_b$}. 
      \item {\small $s \gtrdot F \in \mathfrak{U}$}: apply 
	{\small $\neg$} reduction 4. Then apply 
	(sub-)induction hypothesis on {\small $\neg F$}.
    \end{enumerate}  
}
\end{IEEEproof} 
\hide{
\begin{lemma}[Reduction into unit chains]
  Given any unit tree {\small $F \in \mathfrak{F}$}, there exists 
  a formula {\small $F_1 \in \mathfrak{F}$} in unit chain 
  expansion. Moreover, {\small $\recurseReduce(F)$} is 
  the unique reduction of {\small $F$}. 
  \label{unit_tree_reduction}
\end{lemma} 
\begin{IEEEproof}
  By rules in \textbf{Transformations}. \\
\end{IEEEproof}    
}
\begin{lemma}
  For any {\small $F \in \mathfrak{F}$} expanded in unit chains, there 
  exists 
  {\small $v \in \{0,1\}$} such that {\small $[\intFrame \models
      F] = v$} for  
  any valuation frame. 
  \label{simple_lemma2}
\end{lemma}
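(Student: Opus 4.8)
The plan is to fix an arbitrary valuation frame {\small $\intFrame = (\mathsf{I},\mathsf{J})$} and prove by structural induction on {\small $F$} that {\small $[\intFrame \models F]$} is governed by exactly one clause of Definition \ref{model} and evaluates to a single element of {\small $\{0,1\}$}; since {\small $\intFrame$} is arbitrary, the statement then holds for every valuation frame. I would first observe that we may take {\small $F$} to be free of {\small $\neg$}: the reduction rules push {\small $\neg$} inward and erase it at literals (via {\small $\neg$} reduction 1), so the unit-chain-expanded formulas delivered by Lemma \ref{reduction_result} carry no {\small $\neg$}, and these are exactly the formulas on which the three clauses of Definition \ref{model} operate. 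Consequently the only constructors in {\small $F$} are {\small $\gtrdot$}, {\small $\wedge$}, {\small $\vee$}, and every {\small $\gtrdot$}-subterm is a unit chain, which is what makes the clauses of Definition \ref{model} collectively exhaustive on {\small $F$}.

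For the base case {\small $F = s_0 \gtrdot \dots \gtrdot s_k$} (with {\small $k=0$} covering a bare {\small $s \in \mathcal{S}$}), we have {\small $[\intFrame \models F] = \mathsf{J}(s_0.\dots.s_k)$}, and the content is to check that {\small $\mathsf{J}$} returns a unique Boolean. Each local value {\small $\mathsf{I}(s_0.\dots.s_{i-1},s_i)$} lies in {\small $\{0,1\}$} directly by the {\small $\mathsf{I}$}-clauses of Definition \ref{interpretations} (the valuation of {\small $\top$} is {\small $1$}, of {\small $\bot$} is {\small $0$}, and of a literal is {\small $0$} or {\small $1$}). The two {\small $\mathsf{J}$}-clauses then assert that {\small $\mathsf{J}(s_0.\dots.s_k)=1$} precisely when all these locals are {\small $1$}, and {\small $\mathsf{J}(s_0.\dots.s_k)=0$} precisely when some local is {\small $0$}; since each local is already known to be Boolean, the conditions ``all are {\small $1$}'' and ``some is {\small $0$}'' are mutually exclusive and jointly exhaustive, so exactly one clause fires and {\small $\mathsf{J}(s_0.\dots.s_k)$} is a single value in {\small $\{0,1\}$}.

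For the inductive step, if {\small $F = F_1 \wedge F_2$} (resp.\ {\small $F = F_1 \vee F_2$}) then every chain occurring in {\small $F_1$} or {\small $F_2$} also occurs in {\small $F$} and is therefore a unit chain, so both subformulas are in unit chain expansion and the induction hypothesis yields {\small $[\intFrame \models F_1],[\intFrame \models F_2] \in \{0,1\}$}. The matching clause of Definition \ref{model} sets {\small $[\intFrame \models F] = [\intFrame \models F_1] \wedge^{\dagger} [\intFrame \models F_2]$} (resp.\ with {\small $\vee^{\dagger}$}), and the stipulated Boolean laws for {\small $\wedge^{\dagger}$} and {\small $\vee^{\dagger}$} keep the outcome inside {\small $\{0,1\}$} and make it unique. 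The remaining shape {\small $F = F_1 \gtrdot F_2$} is not a genuinely new case: then {\small $F$} is itself a chain, and being in unit chain expansion it must be a unit chain, which is already handled by the base case.

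The only real substance sits in the base case, and that is where I expect the main obstacle: one must argue carefully that the non-contradictory and contradictory {\small $\mathsf{J}$}-conditions of Definition \ref{interpretations} genuinely partition the possibilities, i.e.\ that they are the Boolean negations of one another once each {\small $\mathsf{I}$}-value is pinned to {\small $\{0,1\}$}. Everything above the chains is merely the routine propagation of two-valuedness through {\small $\wedge^{\dagger}$} and {\small $\vee^{\dagger}$}.
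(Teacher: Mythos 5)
Your proof is correct and follows essentially the same route as the paper's, which compresses the whole argument into one sentence: values in $\{0,1\}$ are assigned to every element of $\mathfrak{U}\cup\mathcal{S}$ by the valuation definition, and hence to the whole formula through $\wedge^{\dagger}$ and $\vee^{\dagger}$. Your version merely makes the structural induction, the exhaustiveness/exclusivity of the two $\mathsf{J}$-clauses, and the tacit restriction to $\neg$-free formulas explicit, all of which the paper leaves implicit.
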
   
\begin{IEEEproof}
  Since 
  a value 0/1 is assignable to any element of 
  {\small $\mathcal{S} \cup \mathfrak{U}$} by Definition 
  \ref{model}, 
  it is (or they are if more than 
  one in \{0, 1\}) assignable to {\small $[\mathfrak{M} \models
  F]$}. \\
\end{IEEEproof} 
Hence we obtain the desired result for the first objective.
\begin{proposition}
  To any {\small $F \in \mathfrak{F}$} corresponds  
  at least one formula {\small $F_a$} in unit chain expansion 
  into which {\small $F$} reduces. 
  It holds for any such {\small $F_a$} that 
  {\small $[\mathfrak{M} \models F_a] \in 
      \{0, 1\}$} for 
  any valuation frame. \\
  \label{simple_proposition} 
\end{proposition}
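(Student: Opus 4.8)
The plan is simply to assemble the two immediately preceding lemmas, since this proposition is the corollary that records their conjunction and thereby settles the first inquiry of the subsection (whether every formula reduces to a value-assignable one). First I would dispatch the existence claim: Lemma~\ref{reduction_result} asserts that any {\small $F \in \mathfrak{F}$} reduces into \emph{some} formula in unit chain expansion, and naming one such formula {\small $F_a$} gives exactly the first sentence of the proposition. No extra work is needed here, since termination of the {\small $\neg$}- and {\small $\gtrdot$}-reductions was already handled in that lemma via the induction on {\small $\negMax$} and the sub-induction on formula size.

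Next I would treat the value-assignability claim. The phrase ``for any such {\small $F_a$}'' ranges over all reduction targets of {\small $F$} that lie in unit chain expansion, and each of these is, by definition, in unit chain expansion. Hence Lemma~\ref{simple_lemma2} applies verbatim to each {\small $F_a$}: for every valuation frame {\small $\mathfrak{M}$} it supplies a {\small $v \in \{0,1\}$} with {\small $[\mathfrak{M} \models F_a] = v$}, so {\small $[\mathfrak{M} \models F_a] \in \{0,1\}$}. The universal quantification over {\small $F_a$} comes for free, because Lemma~\ref{simple_lemma2} is itself phrased for \emph{every} unit-chain-expanded formula, independently of which reduct {\small $F$} happens to reach.

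There is no substantive obstacle at this stage; the genuine content was already discharged in the reduction lemma (termination of the rewriting, resting on the Linking Principle of Lemma~\ref{linking_principle}) and in Lemma~\ref{simple_lemma2} (that the local and global interpretations {\small $\mathsf{I}, \mathsf{J}$} fix a definite Boolean value on each unit chain). The only point to watch is that one must not overclaim uniqueness of {\small $F_a$}: the statement deliberately reads ``at least one'' and ``any such,'' which is precisely what the two lemmas provide, so the argument goes through without needing confluence of the reduction system. Assembling these observations yields the proposition directly.
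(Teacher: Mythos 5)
Your proof is correct and matches the paper's own treatment: the paper presents this proposition as an immediate corollary of Lemma~\ref{reduction_result} (existence of a reduct in unit chain expansion) and Lemma~\ref{simple_lemma2} (0/1-assignability of any unit-chain-expanded formula), which is exactly the assembly you describe. Your remark that confluence is not needed because the statement only claims ``at least one'' and ``any such'' is also consistent with how the paper defers normalisation to a later theorem.
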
    
For the next sub-section, 
the following observation about negation on
a unit chain comes in handy. Let us state a procedure. 
\begin{definition}[Procedure \recurseReduce]{\ }\\
  The procedure given below 
  takes as an input a formula {\small $F$} in unit chain expansion. \\
\textbf{Description of {\small $\recurseReduce(F)$}}
\begin{enumerate}
    \item Replace {\small $\wedge$} in {\small $F$} 
      with {\small $\vee$}, and 
      {\small $\vee$} with {\small $\wedge$}. These two 
      operations are simultaneous. 
    \item Replace all the non-chains {\small $s \in \mathcal{S}$} 
      in {\small $F$} simultaneously with 
      {\small $s^c\ (\in \mathcal{S})$}.   
    \item For every chain {\small $F_a$} in {\small $F$} with 
      its head {\small $s \in \mathcal{S}$} for some 
      {\small $s$} and its tail 
      {\small $F_{\code{tail}}$}, replace {\small $F_a$}  
      with {\small $(s^c \vee (s \gtrdot 
      (\recurseReduce(F_{\code{tail}}))))$}.    
    \item Reduce {\small $F$} via {\small $\gtrdot$} reductions 
      in unit chain expansion. 
  \end{enumerate}
\end{definition}
Then we have the following result. 
\begin{proposition}[Reduction of negated unit chain expansion] 
  Let {\small $F$} be a formula in unit chain expansion. Then 
  {\small $\neg F$} reduces 
  via the {\small $\neg$} and {\small $\gtrdot$} reductions
  into {\small $\recurseReduce(F)$}. Moreover 
  {\small $\recurseReduce(F)$} is the unique reduction 
  of {\small $\neg F$}.  
  \vspace{-4mm}
    \label{special_reduction}
\end{proposition}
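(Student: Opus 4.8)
The plan is to prove both assertions --- that $\neg F$ reduces to $\recurseReduce(F)$ and that this reduction is unique --- simultaneously by induction on $\formulasize(F)$, exploiting the fact that a formula in unit chain expansion is already a redex-free $\wedge/\vee$-combination of unit chains and of bare elements $s \in \mathcal{S}$. The first thing I would record is this redex-freeness: no $\neg$ occurs in $F$, and no $\gtrdot$ reduction applies to a unit chain $s_0 \gtrdot \dots \gtrdot s_k$. By the right-associative reading of $\gtrdot$ fixed in the preamble, every left operand of a $\gtrdot$ inside such a chain is a single $s_i \in \mathcal{S}$ (blocking $\gtrdot$ reductions 1--3), while every right operand is again a chain or a bare element (blocking reductions 4--5). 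Consequently the only redex in $\neg F$ is the outermost $\neg$, and the reduction applicable there is forced by the principal connective of $F$.

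For existence I would split on the shape of $F$. If $F = s$, then $\neg s \mapsto s^c$ by $\neg$ reduction 1, which is exactly $\recurseReduce(s)$. If $F = F_1 \wedge F_2$ (resp.\ $F_1 \vee F_2$), then $\neg$ reduction 2 (resp.\ 3) yields $\neg F_1 \vee \neg F_2$ (resp.\ $\neg F_1 \wedge \neg F_2$); applying the induction hypothesis to the strictly smaller $F_1, F_2$ rewrites these to $\recurseReduce(F_1)$ and $\recurseReduce(F_2)$, and since the resulting top connective is never a redex this is precisely $\recurseReduce(F)$ --- matching the $\wedge\!\leftrightarrow\!\vee$ swap of step~1 of the procedure together with the compositional action of steps~2--4. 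If $F = s_0 \gtrdot F_{\text{tail}}$ is a unit chain, then $\neg$ reduction 4 gives $s_0^c \vee (s_0 \gtrdot \neg F_{\text{tail}})$; the induction hypothesis rewrites $\neg F_{\text{tail}}$ to $\recurseReduce(F_{\text{tail}})$, and $s_0 \gtrdot \recurseReduce(F_{\text{tail}})$ is driven back into unit chain expansion by $\gtrdot$ reductions 4 and 5 (equivalently by Lemma~\ref{linking_principle}), reproducing steps~3--4 of $\recurseReduce$.

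For uniqueness the key observation, which I would maintain as an invariant along every reduction sequence issuing from $\neg F$, is that the left operand of each $\gtrdot$-redex ever created is a single head element $s_0 \in \mathcal{S}$: such a redex is born in $\neg$ reduction 4 from the head of a chain, and one-sided distribution preserves the property. Hence $\gtrdot$ reductions 1--3 are never enabled and only the distributions 4 and 5 fire. I would then check that at every position at most one rule is applicable --- the $\neg$ rules are selected by the principal connective under the $\neg$, and $s_0 \gtrdot H$ offers a step only according to the top connective of $H$ --- so there are no overlapping redexes, while redexes at disjoint positions plainly commute. Thus every maximal reduction from $\neg F$ executes the same steps up to reordering and reaches one and the same formula; termination being already secured by Lemma~\ref{reduction_result}, that formula is the unique normal form, which by the existence part equals $\recurseReduce(F)$.

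I expect the uniqueness half to be the main obstacle, and within it the delicate point is verifying that the distributions triggered in the chain case cannot re-expose reductions 1--3. The entire argument collapses to manifestly confluent one-sided distribution only once one has pinned down that the left operand of every $\gtrdot$ remains an element of $\mathcal{S}$ throughout, so I would prove this single-head invariant carefully before invoking the no-overlap-plus-commutation reasoning that yields confluence.
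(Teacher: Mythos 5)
Your proposal is correct and follows essentially the same route as the paper's own (much terser) proof, which rests on exactly the two observations you establish: only the $\neg$ reductions and $\gtrdot$ distribution over $\vee$ are ever invoked, and any subformula standing under a $\neg$ is itself irreducible, so every reduction order converges to $\recurseReduce(F)$. The only cosmetic difference is that you allow $\gtrdot$ reduction 4 to fire in the chain case, whereas in fact no conjunction ever appears to the right of a $\gtrdot$ there (the paper notes only reduction 5 is used); this does not affect your argument.
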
  
\begin{IEEEproof} 
  For the uniqueness, observe that only  
  {\small $\neg$} reductions and 
  {\small $\gtrdot$} reduction 5 are used 
  in reduction of {\small $\neg F$}, and that  
  at any point during the reduction, 
  if there occurs a sub-formula in the form {\small $\neg F_x$}, 
  the sub-formula {\small $F_x$} cannot be reduced by any 
  reduction rules. Then the proof of the uniqueness is 
  straightforward. \\
\end{IEEEproof}
\hide{
\begin{lemma}[Simple observation]{\ }\\ 
  Let {\small $\psi$} denote 
  {\small $s_1.s_2.\dots.s_{k}$} for 
  some {\small $k \in \mathbb{N}$} ({\small $k = 0$} 
  means that {\small $s_1.s_2.\dots.s_{k} = \epsilon$}). 
  Then 
  it holds that {\small $[\models_{\psi} F] = [\models_{\epsilon} 
  s_1 \gtrdot s_2 \gtrdot \dots \gtrdot s_{k} \gtrdot F]$}. 
  {\ }\\
  \label{simple_observation}
\end{lemma} 
\begin{lemma}[Formula reconstruction]{\ }\\   
  Given any {\small $x \in \mathfrak{X}$}, 
  there exists a formula {\small $F \in \mathfrak{F}$} such that 
  {\small $x = [\models_{\epsilon} F]$} and that 
  all the chains occurring in {\small $x$}\footnote{ 
  In the following sense: 
  for any formulas which occur in 
  the uninterpreted expression {\small $x$}, 
  any chain that occurs in any one of them 
  is said to occur in {\small $x$}.}
  preserve in 
  {\small $F$}. 
  \label{formula_reconstruction}
\end{lemma}
\begin{proof}  
  Use the following recursions 
  to derive a formula {\small $F_a$} 
   from {\small $\underline{x}$}; 
  \begin{itemize}
    \item {\small $\underline{x_1 \oplus x_2} \leadsto  
      \underline{x_1} \wedge \underline{x_2}$}. 
    \item {\small $\underline{x_1 \odot x_2} \leadsto 
      \underline{x_1} \vee \underline{x_2}$}.  
    \item {\small $\underline{[\models_{s_0.s_1\dots.s_{k}} F_b]} 
      \leadsto s_0 \gtrdot s_1 \gtrdot \dots \gtrdot s_{k} \gtrdot 
      F_b$} for {\small $k \in \mathbb{N}$}. 
  \end{itemize}   
  We choose {\small $F_a$} for {\small $F$}, as required.  
\end{proof}  
{\ }\\   
In the rest, given 
any {\small $x \in \mathfrak{X}$} such that  
{\small $\underline{x} \leadsto F_a$} 
(\emph{Cf.} Lemma \ref{formula_reconstruction}),  
  we let 
  {\small $F_{\widehat{x}}$} denote the {\small $F_a$}. 
\begin{lemma}
  Given any {\small $F \in \mathfrak{F}$}, it holds that 
  {\small $[\models_{\psi} F] = 
  [\models_{\psi} F']$} for some {\small $F' \in \mathfrak{F}$} 
  which is expanded in primary chains. 
  \label{primary_chain_expansion}
\end{lemma}
\begin{proof} 
  By induction on the formula depth of 
 {\small $F$} and by a sub-induction on 
 the formula depth of the head of {\small $F$}.\footnote{Make 
  sure to define the formula depth first.} 
  Assume {\small $n \in \mathbb{N}$}, {\small $s_x \in \mathcal{S}$} 
  for all {\small $x$} and 
  {\small $F_x \in \mathfrak{F}$} for all {\small $x$}. 
  Consider what {\small $F$} looks like. 
  \begin{enumerate} 
    \item {\small $F = \neg^n s_1$}:  
      This is a base case. Apply the complement axiom repeatedly. Then 
      vacuous by {\small $(s^c)^c = s$} (defined 
      at the beginning of the previous sub-section).  
    \item {\small $F = \neg^n s_1 \gtrdot F_2$}: 
      Another base case.
      Apply the complement axiom repeatedly on the 
      head of the chain. Then again vacuous 
      by {\small $(s^c)^c = s$} and induction hypothesis 
      (of the sub-induction). 
    \item {\small $F = \neg^n (F_1 \wedge F_2)$}:   
      Apply the De Morgan axioms repeatedly and then apply 
      induction hypothesis on 
      {\small $\neg^n F_1$} and {\small $\neg^n F_2$}.  
    \item {\small $F = \neg^n (F_1 \wedge F_2) \gtrdot F_3$}: 
      Apply the De Morgan axioms repeatedly on 
      {\small $\neg^n (F_1 \wedge F_2)$} and 
      then the distribution axioms. 
      Then apply induction hypothesis on 
      {\small $\neg^n F_1 \gtrdot F_3$} 
      and on {\small $\neg^n F_2 \gtrdot F_3$}. 
    \item {\small $F = \neg^n (F_1 \vee F_2)$} 
      or {\small $F = \neg^n (F_1 \vee F_2) \gtrdot F_3$}: Similar.   
    \item {\small $F = \neg^n (F_1 \gtrdot F_2)$}:     
      Apply the De Morgan axioms repeatedly to push 
      all the {\small $n$} {\small $\neg$}'s in the 
      outermost bracket. Then apply induction hypothesis 
      on any chain and on any {\small $\neg^k F_1$} 
      for {\small $k \in \{0,\cdots,n\}$}. 
    \item {\small $F = \neg^n (F_1 \gtrdot F_2) \gtrdot F_3$}:  
      Apply the De Morgan axioms repeatedly to 
      expand the head of the chain in the same way as 
      the previous sub-case. Denote the formula by 
      {\small $F_a$}. Next, apply 
      the distribution axioms on {\small $F_a$} such that  
      {\small $F_a$} expands into a formula in which 
      the head of all the chains is 
      {\small $\neg^k F_1$} for some {\small $k \in \{0,\cdots,n\}$}.  
      Apply induction hypothesis on all the chains.  
  \end{enumerate}
\end{proof}
\begin{proposition} 
   Given any {\small $F \in \mathfrak{F}$}, it holds that 
   {\small $[\models_{\epsilon} F] = [\models_{\epsilon} F']$}  
   for some {\small $F' \in \mathfrak{F}$} which is 
   expanded in unit chains. 
   \label{unit_chain_expansion}
\end{proposition}
\begin{proof}
  By Lemma \ref{primary_chain_expansion}, for 
  any {\small $\psi' \in \mathcal{S}^*$}, we 
  succeed in deriving {\small $[\models_{\psi'} F_a] 
  = [\models_{\psi'} F_b]$} 
  such that {\small $F_b$} is a primary chain expansion of 
  {\small $F_a$}. Then the current proposition follows 
  because any {\small $F \in \mathfrak{F}$}, and 
  therefore any {\small $\psi \in \mathcal{S}^*$} that 
  may appear in the process of uninterpreted transformations 
  are finite. 
\end{proof}  
{\ }\\  
\begin{proposition}[Reduction into normal form]{\ }  
  Given any {\small $F \in \mathfrak{F}$}, 
  it holds that {\small $[\models_{\epsilon} F] 
  = [\models_{\epsilon} F']$} for 
  some formula {\small $F' \in \mathfrak{F}$} in disjunctive/conjunctive normal 
  form.  
  \label{reduction_into_normal_form}
\end{proposition}  
\begin{proof}
  By Proposition \ref{unit_chain_expansion}, 
  there exists a formula {\small $F_a$} in unit expansion such that 
  {\small $[\models_{\epsilon} F] = [\models_{\epsilon} F_a]$}, 
  which can be transformed into 
  {\small $x \in \mathfrak{X}$} via   
  the two rules of \textbf{Transformations} 
  {\small $[\models_{\psi} F_1 \wedge F_2] = 
  [\models_{\psi} F_1] \oplus [\models_{\psi} F_2]$} 
  and {\small $[\models_{\psi} F_1 \vee F_2] 
  = [\models_{\psi} F_1] \odot [\models_{\psi} F_2]$}  
  (obviously {\small $\psi = \epsilon$} here)
  such that 
  no {\small $\wedge$} or {\small $\vee$} occur within it, 
  while preserving all the unit chains. 
  By the full distributivity of 
  {\small $\oplus$} over {\small $\odot$} and vice versa 
  that hold by definition, 
  then, it can be transformed into 
  an uninterpreted expression in the form:  
  {\small $\odot_{i = 0}^k \oplus_{j=0}^{h_i} x_{ij}$}   
  for some {\small $i, j, k \in \mathbb{N}$} 
  and some {\small $x_{00}, \dots, x_{kh_i} \in \mathfrak{X}$} 
  (such that {\small $F_{\overrightarrow{x_{ij}}}$} 
  are all in {\small $\mathfrak{U}$}), as required for 
  disjunctive normal form. Dually 
  for the conjunctive normal form. 
\end{proof}  
{\ }\\   
This concludes the first proof that 
for any formula {\small $F \in \mathfrak{F}$}  
{\small $[\models_{\epsilon} F]$} 
is assigned a non-* value.  
} 
\subsubsection{Unit chain expansions form Boolean 
algebra}{\ }\\
We make use of disjunctive normal form 
in this sub-section for simplification of proofs. 
\begin{definition}[Disjunctive/Conjunctive normal form]
  A formula {\small $F \in \mathfrak{F}$} is 
  defined to be in disjunctive normal form only if    
  {\small $\exists i,j,k \in \mathbb{N}\  
  \exists h_{0}, \cdots, h_i \in \mathbb{N}\
  \exists 
  f_{00}, \dots, f_{kh_k} \in 
  \mathfrak{U} \cup \mathcal{S}.F = \vee_{i =0}^k \wedge_{j = 0}^{h_i} 
  f_{ij}$}.
 Dually, a formula {\small $F \in \mathfrak{F}$} 
  is defined to be in conjunctive normal form 
  only if {\small $\exists i, j, k \in \mathbb{N}\ \exists  
  h_0, \cdots, h_i \in \mathbb{N}\
  \exists 
  f_{00}, \dots, f_{kh_k} \in 
  \mathfrak{U} \cup \mathcal{S}.F = \wedge_{i=0}^k \vee_{j =0}^{h_i} f_{ij}$}. \\
\end{definition}      
Now, for the second objective of ours, we prove 
that {\small $\mathfrak{U} \cup \mathcal{S}$}, 
{\small $\recurseReduce$}, {\small $\vee^{\dagger}$} and {\small $\wedge^{\dagger}$} 
form a Boolean algebra (\emph{Cf.} \cite{wikiBooleanAlgebra} for the laws 
of Boolean algebra), from which follows 
the required outcome.          
\begin{proposition}[Annihilation/Identity]
  For any formula {\small $F$}   
  in unit chain expansion and 
  for any valuation frame, 
  it holds (1) that {\small $[\intFrame \models \top \wedge 
  F] = [\intFrame \models F]$}; 
  (2) that {\small $[\intFrame \models \top \vee 
  F] = [\intFrame \models \top]$};
  (3) that 
  {\small $[\intFrame \models \bot \wedge F] =
  [\intFrame \models \bot]$}; 
  and (4) that  
  {\small $[\intFrame \models \bot \vee F] =
  [\intFrame \models F]$}. 
\end{proposition}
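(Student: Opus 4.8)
The plan is to collapse all four identities onto two constant evaluations, $[\intFrame \models \top] = 1$ and $[\intFrame \models \bot] = 0$, together with the fact that $F$, being in unit chain expansion, takes a genuine Boolean value. Once these are in hand, each of (1)--(4) is a one-line unfolding of the $\wedge$/$\vee$ clauses of Definition \ref{model} followed by an appeal to the stipulated arithmetic of $\wedge^{\dagger}$ and $\vee^{\dagger}$ given in the preamble. So the real content of the proposition lives entirely in the two constant facts; everything after that is bookkeeping over the four-element truth table of the meta-connectives.

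The one place that needs care is pinning down those two constants, since $\top$ and $\bot$ are formulas and must be evaluated through the definition chain rather than taken for granted. Reading $\top$ as a unit chain of length one, Definition \ref{model} gives $[\intFrame \models \top] = \mathsf{J}(\top)$, and the non-contradictory $\mathsf{J}$ clause of Definition \ref{interpretations} (in the $k=0$ case, where $s_0.\dots.s_{k-1} = \epsilon$) yields $[\mathsf{J}(\top) = 1] \equivMeta [\mathsf{I}(\epsilon, \top) = 1]$, the right side holding by the $\mathsf{I}$ valuation of $\top$; hence $[\intFrame \models \top] = 1$. Dually, $[\intFrame \models \bot] = \mathsf{J}(\bot)$, and since $\mathsf{I}(\epsilon, \bot) = 0$ the contradictory $\mathsf{J}$ clause forces $\mathsf{J}(\bot) = 0$, so $[\intFrame \models \bot] = 0$. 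Both hold for every valuation frame, as required.

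With that established, I would invoke Lemma \ref{simple_lemma2} to fix a $v \in \{0,1\}$ with $[\intFrame \models F] = v$, which guarantees the second argument of each connective is an honest $0$ or $1$ rather than anything undefined. Then for (1), $[\intFrame \models \top \wedge F] = 1 \wedge^{\dagger} v = v = [\intFrame \models F]$ by the laws $1 \wedge^{\dagger} 0 = 0$ and $1 \wedge^{\dagger} 1 = 1$; for (2), $1 \vee^{\dagger} v = 1 = [\intFrame \models \top]$; for (3), $0 \wedge^{\dagger} v = 0 = [\intFrame \models \bot]$; and for (4), $0 \vee^{\dagger} v = v = [\intFrame \models F]$. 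There is no genuine obstacle here: the proposition is essentially the standard Boolean annihilation/identity table, and the only thing one can get wrong is the evaluation of $\top$ and $\bot$ through $\mathsf{J}$, which is exactly why I would isolate those two facts as an explicit first step.
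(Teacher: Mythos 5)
The paper states this proposition with no proof at all (there is no proof environment attached to it in the source), so there is nothing to compare against; your argument is correct and is evidently the intended one. Both of your key steps check out against the definitions: $[\intFrame \models \top] = \mathsf{J}(\top) = 1$ and $[\intFrame \models \bot] = \mathsf{J}(\bot) = 0$ follow from the $\mathsf{I}$-clauses for $\top$ and $\bot$ together with the $k=0$ case of the $\mathsf{J}$-clauses, Lemma \ref{simple_lemma2} legitimately supplies $[\intFrame \models F] = v \in \{0,1\}$, and the four identities then reduce to the $\wedge^{\dagger}/\vee^{\dagger}$ truth table from the preamble (the only cosmetic wrinkle being that the paper never explicitly lists $0 \vee^{\dagger} 0 = 0$, though it is clearly intended since these are declared Boolean operators).
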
   
\begin{lemma}[Elementary complementation]  
  For any {\small $s_0 \gtrdot s_1 \gtrdot \dots \gtrdot s_k
  \in \mathfrak{U} \cup \mathcal{S}$} for some 
  {\small $k \in \mathbb{N}$}, 
  if for a given valuation frame 
  it holds that 
  {\small $[\mathfrak{M} \models
  s_0 \gtrdot s_1 \gtrdot \dots \gtrdot s_{k}] = 1$}, 
  then it also holds that 
  {\small $[\intFrame \models \recurseReduce(s_0 
  \gtrdot s_1 \gtrdot \dots 
  \gtrdot s_{k})] = 0$}; or 
  if it holds that 
  {\small $[\intFrame \models s_0 \gtrdot s_1 \gtrdot \dots 
  \gtrdot s_{k}] = 0$}, then 
  it holds that 
  {\small $[\intFrame \models \recurseReduce(s_0 \gtrdot s_1 \gtrdot \dots
  \gtrdot s_{k})] = 1$}. These two events are mutually 
  exclusive.
  \label{unit_chain_excluded_middle}
\end{lemma}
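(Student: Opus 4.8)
The plan is to compute $\recurseReduce$ explicitly on a single unit chain and then read off both valuations directly from Definition \ref{interpretations} and Definition \ref{model}. First I would prove, by induction on $k$, that $\recurseReduce(s_0 \gtrdot s_1 \gtrdot \dots \gtrdot s_k)$ reduces (in unit chain expansion) to the object-level disjunction $\bigvee_{i=0}^{k}(s_0 \gtrdot \dots \gtrdot s_{i-1} \gtrdot s_i^c)$, where the $i=0$ disjunct is understood to be the non-chain $s_0^c$. The base case $k=0$ is immediate: a lone $s_0 \in \mathcal{S}$ is a non-chain, so only step~2 of the procedure fires and returns $s_0^c$. For the inductive step the head is $s_0$ and the tail is $s_1 \gtrdot \dots \gtrdot s_k$; step~3 yields $s_0^c \vee (s_0 \gtrdot \recurseReduce(s_1 \gtrdot \dots \gtrdot s_k))$, and applying the induction hypothesis to the tail together with repeated use of $\gtrdot$ reduction~5 (which distributes $s_0 \gtrdot (-)$ over the disjunction in the reduced tail) produces precisely the claimed form.

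With that shape in hand I would evaluate. By Definition \ref{model} the value of the disjunction is the meta-disjunction $\orMeta$ of the values of the individual disjuncts, and each such disjunct $s_0 \gtrdot \dots \gtrdot s_{i-1} \gtrdot s_i^c$ evaluates to $\mathsf{J}(s_0.\dots.s_{i-1}.s_i^c)$. By the global-interpretation clauses this $\mathsf{J}$-value equals $1$ exactly when every local value $\mathsf{I}(s_0.\dots.s_{j-1},s_j)$ for $j<i$ is $1$ and, invoking the complement condition $\mathsf{I}(\cdot,s_i)=0 \equivMeta \mathsf{I}(\cdot,s_i^c)=1$, $\mathsf{I}(s_0.\dots.s_{i-1},s_i)=0$; that is, the $i$-th disjunct detects precisely the situation where $s_0,\dots,s_{i-1}$ are present at their positions while $s_i$ fails at its. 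Dually, $[\intFrame \models s_0 \gtrdot \dots \gtrdot s_k]=\mathsf{J}(s_0.\dots.s_k)$ is $1$ iff all local values $\mathsf{I}(s_0.\dots.s_{i-1},s_i)$ are $1$.

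The case analysis then closes the argument. If $[\intFrame \models s_0 \gtrdot \dots \gtrdot s_k]=1$, every local value is $1$, so for each $i$ the complement $s_i^c$ has local value $0$ and hence every disjunct of $\recurseReduce(s_0 \gtrdot \dots \gtrdot s_k)$ evaluates to $0$, giving value $0$. Conversely, if $[\intFrame \models s_0 \gtrdot \dots \gtrdot s_k]=0$, the contradictory clause supplies some index with local value $0$; taking the least such $i^{*}$ makes all earlier positions present and $s_{i^{*}}$ absent, so the $i^{*}$-th disjunct evaluates to $1$ and the whole disjunction to $1$. Finally, mutual exclusivity is not a separate argument: the two global-interpretation clauses are literally complementary (``all local values $1$'' versus ``some local value $0$''), so $\mathsf{J}$, and therefore $[\intFrame \models s_0 \gtrdot \dots \gtrdot s_k]$, takes exactly one of the two values, and likewise for its reduct.

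I anticipate the only real friction to be bookkeeping rather than conceptual: pinning down the explicit disjunctive form of $\recurseReduce$ on a chain (the inductive unfolding and the careful, repeated application of $\gtrdot$ reduction~5), and keeping the ``first point of failure'' index conventions aligned between the $\mathsf{J}$-clauses and the disjuncts. The $k=0$ non-chain base case and the precise use of the complement condition are the two spots where a careless treatment would break the correspondence.
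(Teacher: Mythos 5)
Your proposal is correct and follows essentially the same route as the paper's own proof in Appendix C: unfold $\recurseReduce$ on the chain into the disjunction $s_0^c \vee (s_0 \gtrdot s_1^c) \vee \dots \vee (s_0 \gtrdot \dots \gtrdot s_{k-1} \gtrdot s_k^c)$, then read off both cases from the local/global interpretation clauses and the complement condition on $\mathsf{I}$. The only difference is presentational — you make explicit the induction establishing the disjunctive shape and the ``least failing index'' choice, both of which the paper leaves implicit.
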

\begin{IEEEproof}              
    In Appendix C.  
    \hide{
 For the first one,
 {\small $[(\mathsf{I}, \mathsf{J})\models s_0 \gtrdot 
 s_1 \gtrdot \dots \gtrdot s_{k}] = 1$} implies that
 {\small $\mathsf{I}(|\epsilon |, s_0)\! =\! 
 \mathsf{I}(|s_0|, s_1) \!=\! \dots \!=\! 
 \mathsf{I}(| s_0.s_1.\dots.s_{k - 1}|, s_{k})\! =\! 1$}.
 So we have; 
 {\small $\mathsf{I}(| \epsilon |, s_0^c) = 
 \mathsf{I}(| s_0 |, s_1^c) = \dots = 
 \mathsf{I}(| s_0.s_1\dots.s_{k - 1} |, s_{k}^c) = 0$} by the 
 definition of {\small $\mathsf{I}$}. 
 Meanwhile, 
 {\small $\recurseReduce(s_0 \gtrdot s_1 \gtrdot \cdots \gtrdot 
 s_k) = s_0^c \vee (s_0 \gtrdot ((s_1^c \vee (s_1 \gtrdot \cdots)))) 
 = s_0^c \vee (s_0 \gtrdot s_1^c) \vee (s \gtrdot s_1 \gtrdot 
 s_2^c) \vee \cdots \vee 
 (s \gtrdot s_1 \gtrdot \cdots \gtrdot s_{k-1} \gtrdot s_k^c)$}. 
 Therefore {\small $[(\mathsf{I}, \mathsf{J})\models
 \recurseReduce(s_0 \gtrdot s_1 \gtrdot \cdots \gtrdot s_k)]  
 = 0 \not= 1$} for the given interpretation frame. \\
 \indent For the second obligation, 
 {\small $[(\mathsf{I}, \mathsf{J})\models
 s_0 \gtrdot s_1 \gtrdot \dots \gtrdot s_{k}] = 0$} 
 implies that   
 {\small $[\mathsf{I}(|\epsilon|, s_0) 
 = 0] \vee^{\dagger} [\mathsf{I}(|s_0|, s_1) = 0] 
 \vee^{\dagger} \dots \vee^{\dagger} [\mathsf{I}(|s_0.s_1.\dots.
 s_{k -1}|, s_{k}) = 0]$}. Again 
 by the definition of {\small $\mathsf{I}$}, 
 we have the required result. That these two events 
 are mutually exclusive is trivial. \\ 
    }
\end{IEEEproof}  
\begin{proposition}[Associativity/Commutativity/Distributivity]
  Given any formulas {\small $F_1, F_2, F_3 \in \mathfrak{F}$} 
  in unit chain expansion and any valuation frame  
  {\small $\mathfrak{M}$}, the following hold:  
  \begin{enumerate}
    \item  {\small $[\intFrame \models F_1] \wedge^{\dagger} 
  ([\intFrame \models F_2] \wedge^{\dagger} 
  [\intFrame \models F_3]) = ([\intFrame \models F_1] \wedge^{\dagger} 
  [\intFrame \models F_2]) \wedge^{\dagger} [\intFrame \models F_3]$} (associativity 1). 
\item {\small $[\intFrame \models F_1] \vee^{\dagger} 
  ([\intFrame \models F_2] \vee^{\dagger} [\intFrame \models F_3]) 
  = ([\intFrame \models F_1] \vee^{\dagger} [\intFrame \models F_2]) \vee^{\dagger} F_3$} 
  (associativity 2). 
\item {\small $[\intFrame \models F_1] \wedge^{\dagger} [\intFrame \models F_2] 
  = [\intFrame \models F_2] \wedge^{\dagger} [\intFrame \models F_1]$} (commutativity 1). 
\item {\small $[\intFrame \models F_1] \vee^{\dagger} [\intFrame \models F_2] 
  = [\intFrame \models F_2] \vee^{\dagger} [\intFrame \models F_1]$} (commutativity 2).  
\item {\small $[\intFrame \models F_1] \wedge^{\dagger} 
  ([\intFrame \models F_2] \vee^{\dagger} [\intFrame \models F_3]) = 
  ([\intFrame \models F_1] \wedge^{\dagger} [\intFrame \models F_2]) \vee^{\dagger} 
  ({[\intFrame \models F_1]} \wedge^{\dagger} [\intFrame \models F_3])$} (distributivity 1).  
\item {\small $[\intFrame \models F_1] \vee^{\dagger} 
  ([\intFrame \models F_2] \wedge^{\dagger} [\intFrame \models F_3]) = 
  ([\intFrame \models F_1] \vee^{\dagger} [\intFrame \models F_2]) \wedge^{\dagger} 
  ({[\intFrame \models F_1]} \vee^{\dagger} [\intFrame \models F_3])$} (distributivity 2).  
  \end{enumerate} 
  \label{associativity_commutativity_distributivity}
\end{proposition}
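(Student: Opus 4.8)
The plan is to reduce each of the six identities to an equation between Boolean values and then settle it by the defining arithmetic of the meta-operators. First I would appeal to Lemma~\ref{simple_lemma2}: since $F_1$, $F_2$ and $F_3$ are each in unit chain expansion, the values $[\intFrame \models F_1]$, $[\intFrame \models F_2]$ and $[\intFrame \models F_3]$ all lie in $\{0,1\}$ for the given valuation frame $\intFrame$. Consequently every expression occurring in the six equations is built by applying $\wedge^{\dagger}$ and $\vee^{\dagger}$ to arguments drawn from $\{0,1\}$; and in this regime, by the stipulations fixed at the opening of this section, these operators are exactly Boolean conjunction and disjunction, pinned down by $1 \wedge^{\dagger} 1 = 1$, $0 \wedge^{\dagger} 0 = 0 \wedge^{\dagger} 1 = 1 \wedge^{\dagger} 0 = 0$, $1 \vee^{\dagger} 1 = 1 \vee^{\dagger} 0 = 0 \vee^{\dagger} 1 = 1$, together with $0 \vee^{\dagger} 0 = 0$ inherited from standard classical semantics.

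Second, with the arguments confined to $\{0,1\}$, each claimed equality becomes an identity between two Boolean expressions in at most three free variables. I would verify the two commutativity laws by the $2^2 = 4$ assignments to the relevant pair, and each of the two associativity and two distributivity laws by the $2^3 = 8$ assignments to the triple $([\intFrame \models F_1], [\intFrame \models F_2], [\intFrame \models F_3])$. In every case both sides evaluate, through the equations above, to the same element of $\{0,1\}$. This is just the familiar verification that $(\{0,1\}, \wedge^{\dagger}, \vee^{\dagger})$ obeys the associativity, commutativity and distributivity axioms of a Boolean algebra, and the gradual-logic machinery does not intrude once two-valuedness is in place.

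The step carrying the real weight is therefore not any computation but the invocation of Lemma~\ref{simple_lemma2} (equivalently Proposition~\ref{simple_proposition}): restricting to unit chain expansions is precisely what guarantees that the interpreted values are genuinely two-valued rather than undefined, which is what makes the Boolean laws applicable at all. I would also remark that, because these identities are stated directly at the level of the values $[\intFrame \models F_i]$, no formula reduction and no appeal to \recurseReduce\ is required; the content lives entirely in the meta-arithmetic on $\{0,1\}$, so I would simply tabulate the handful of truth tables rather than write them out line by line.
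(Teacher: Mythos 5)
Your proof is correct and follows essentially the same route as the paper: Appendix D likewise reduces everything to Boolean arithmetic on $\{0,1\}$ once two-valuedness of the $[\intFrame \models F_i]$ is in hand, and dismisses the six identities as straightforward. The one nuance is that the paper secures two-valuedness via Lemma~\ref{unit_chain_excluded_middle}, decomposing each $F_i$ into a $\wedge^{\dagger}/\vee^{\dagger}$-expression over atoms $[\intFrame \models f]$ with $f \in \mathfrak{U}\cup\mathcal{S}$, each of which receives \emph{one and only one} value, whereas Lemma~\ref{simple_lemma2}, which you invoke instead, only asserts that \emph{some} value in $\{0,1\}$ is assignable --- and since the paper's stated worries include the possibility $[\intFrame \models F] = 0 = 1$, the uniqueness needed for your truth tables to be applied to a well-defined quantity is exactly the point your appeal leaves implicit.
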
  
\begin{IEEEproof}       
    Make use of Lemma \ref{unit_chain_excluded_middle}. Details are in Appendix D. 
    \hide{
  Let us generate a set of 
  expressions finitely constructed from the following grammar;\\
  \indent {\small $X := [(\mathsf{I}, \mathsf{J})\models f] \ | \ X \wedge^{\dagger} X 
  \ | \ X \vee^{\dagger} X$} where {\small $f \in \mathfrak{U} \cup 
  \mathcal{S}$}. \\
  Then first of all it is straightforward to 
  show that {\small $[(\mathsf{I}, \mathsf{J})\models F_i] = X_i$} 
  for each {\small $i \in \{1,2,3\}$} for some {\small $X_1, X_2, X_3$} 
  that the above grammar recognises. By Lemma \ref{unit_chain_excluded_middle} each expression ({\small $[(\mathsf{I}, \mathsf{J})\models f_x]$} for 
  some {\small $f_x \in \mathfrak{U} \cup \mathcal{S}$}) 
  is assigned one and only one value {\small $v \in \{0,1\}$}. Then 
  since {\small $1 \vee^{\dagger} 1 = 1 \vee^{\dagger} 0 = 
  0 \vee^{\dagger} 1 = 1$}, 
  {\small $0 \wedge^{\dagger} 0 = 0 \wedge^{\dagger} 1 = 
  1 \wedge^{\dagger} 0 = 0$}, and 
  {\small $1 \wedge^{\dagger} 1 = 1$} by definition (given at 
  the beginning 
  of this section), 
  it is also the case that {\small $[(\mathsf{I}, \mathsf{J})\models F_i]$} 
  is assigned one and only one value {\small $v_i \in  \{0,1\}$} 
  for each {\small $i \in \{1,2,3\}$}. Then the proof for the 
  current proposition is straightforward. \\ 
    }
\end{IEEEproof} 
\hide{ 
\begin{corollary} 
  Let {\small $F$} denote 
  {\small $s_0 \gtrdot s_1 \gtrdot \cdots \gtrdot s_k 
  \in \mathfrak{U} \cup \mathcal{S}$} for some {\small $k$}. 
  Then it holds,  for 
  any interpretation frame {\small $(\mathsf{I}, 
  \mathsf{J})$}, that 
  {\small $[(\mathsf{I}, \mathsf{J})\models F \vee \recurseReduce(F)] = 
  1 
  \not= 0$} 
  and also that 
  {\small $[(\mathsf{I}, \mathsf{J})\models F \wedge \recurseReduce(F)] 
  = 
  0 \not= 1$}. 
  \label{corollary_1}
\end{corollary} 
\begin{proof}    
  {\small $[(\mathsf{I}, \mathsf{J})\models F \vee \recurseReduce(F)]  
  = [(\mathsf{I}, \mathsf{J})\models F] \vee^{\dagger} [(\mathsf{I}, \mathsf{J})\models \recurseReduce(F)] 
  = [(\mathsf{I}, \mathsf{J})\models F] \vee^{\dagger} 
  [(\mathsf{I}, \mathsf{J})\models s^c_0] \vee^{\dagger} 
  [(\mathsf{I}, \mathsf{J})\models s_0 \gtrdot s^c_1] 
  \vee^{\dagger} \cdots 
  \vee^{\dagger} [(\mathsf{I}, \mathsf{J})\models s_0 \gtrdot s_1 \gtrdot \cdots  
  \gtrdot s_{k-1} 
  \gtrdot s^c_k] = 1 \not= 0$} by the definition of 
  {\small $\mathsf{I}$} and {\small $\mathsf{J}$} valuations. 
  {\small $[(\mathsf{I}, \mathsf{J})\models F \wedge \recurseReduce(F)]  
  = [(\mathsf{I}, \mathsf{J})\models F] \wedge^{\dagger} 
  [(\mathsf{I}, \mathsf{J})\models \recurseReduce(F)] = 
  ([(\mathsf{I}, \mathsf{J})\models F]
  \wedge^{\dagger} [(\mathsf{I}, \mathsf{J})\models s^c_0]) 
  \vee^{\dagger} 
  ([(\mathsf{I}, \mathsf{J})\models
  F] \wedge^{\dagger} 
  [(\mathsf{I}, \mathsf{J})\models s_0 \gtrdot s^c_1]) 
  \vee^{\dagger} \dots \vee^{\dagger} 
  ([(\mathsf{I}, \mathsf{J})\models
  F] \wedge^{\dagger} 
  [(\mathsf{I}, \mathsf{J})\models s_0 \gtrdot s_1 \gtrdot \dots \gtrdot 
  s^c_{k}]) = 0 \not= 1$}. The last equality holds due to 
  Proposition \ref{associativity_commutativity_distributivity}.\\
\end{proof}   
}
\begin{proposition}[Idempotence and Absorption]
  Given any formula {\small $F_1, F_2 \in \mathfrak{F}$} 
  in unit chain expansion, 
  for any valuation frame it holds that 
  {\small $[\intFrame \models F_1] \wedge^{\dagger} [\intFrame \models F_1] 
  = {[\intFrame \models F_1]} \vee^{\dagger} [\intFrame \models F_1] = [\intFrame \models F_1]$} 
  (idempotence); and 
  that {\small $[\intFrame \models F_1] \wedge^{\dagger} 
  ([\intFrame \models F_1] \vee^{\dagger} {[\intFrame \models F_2]}) 
  = [\intFrame \models F_1] \vee^{\dagger} 
  ([\intFrame \models F_1] \wedge^{\dagger} [\intFrame \models F_2]) = [\intFrame \models F_1]$} 
  (absorption). 
  \label{idempotence_absorption}
\end{proposition}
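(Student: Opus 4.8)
The plan is to reduce both equalities to finite Boolean arithmetic over $\{0,1\}$, exactly as was done for Proposition \ref{associativity_commutativity_distributivity}. The essential ingredient is Lemma \ref{simple_lemma2}: since $F_1$ and $F_2$ are in unit chain expansion, each of $[\intFrame \models F_1]$ and $[\intFrame \models F_2]$ is, for the given valuation frame, a single value in $\{0,1\}$ rather than an uninterpreted expression. Once this single-valuedness is secured, nothing about the internal structure of $F_1$ or $F_2$ matters any further, and the identities become statements purely about the scalar operators $\wedge^{\dagger}$ and $\vee^{\dagger}$ whose behaviour on $\{0,1\}$ was fixed at the start of Section III.

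First I would set $v_1 := [\intFrame \models F_1]$ and $v_2 := [\intFrame \models F_2]$, so that $v_1, v_2 \in \{0,1\}$ by Lemma \ref{simple_lemma2}. For idempotence, the defining laws give $1 \wedge^{\dagger} 1 = 1$ and $0 \wedge^{\dagger} 0 = 0$, hence $v_1 \wedge^{\dagger} v_1 = v_1$ whether $v_1 = 0$ or $v_1 = 1$; dually $1 \vee^{\dagger} 1 = 1$ and $0 \vee^{\dagger} 0 = 0$ yield $v_1 \vee^{\dagger} v_1 = v_1$. Translating back through $v_1 = [\intFrame \models F_1]$ establishes $[\intFrame \models F_1] \wedge^{\dagger} [\intFrame \models F_1] = [\intFrame \models F_1] \vee^{\dagger} [\intFrame \models F_1] = [\intFrame \models F_1]$.

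For absorption I would verify the two identities $v_1 \wedge^{\dagger} (v_1 \vee^{\dagger} v_2) = v_1$ and $v_1 \vee^{\dagger} (v_1 \wedge^{\dagger} v_2) = v_1$ by tabulating the four cases $(v_1, v_2) \in \{0,1\}^2$ against the operator tables. For instance, if $v_1 = 0$ then $v_1 \wedge^{\dagger} (v_1 \vee^{\dagger} v_2) = 0 = v_1$ by the annihilating behaviour of $0$ under $\wedge^{\dagger}$, while if $v_1 = 1$ then $v_1 \vee^{\dagger} (v_1 \wedge^{\dagger} v_2) = 1 = v_1$; the remaining entries follow symmetrically. Substituting $v_1 = [\intFrame \models F_1]$ and $v_2 = [\intFrame \models F_2]$ then yields the claimed absorption equalities.

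I expect no genuine obstacle here: the entire content has been front-loaded into Lemma \ref{simple_lemma2} (equivalently Proposition \ref{simple_proposition}), which guarantees that the arithmetic takes place over $\{0,1\}$ and not over a larger set of partially evaluated expressions. The only point requiring care is to confirm that Lemma \ref{simple_lemma2} applies to each of $F_1$ and $F_2$ separately; this is immediate since both are assumed to be in unit chain expansion, so the case analysis is genuinely over scalars rather than formulas.
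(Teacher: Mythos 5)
Your proposal is correct and follows essentially the same route as the paper, whose entire proof is the remark that both $F_1$ and $F_2$ are assigned one and only one value in $\{0,1\}$ (citing Appendix D) followed by ``trivial to verify''; your scalar case analysis over $\wedge^{\dagger}$ and $\vee^{\dagger}$ is precisely that omitted verification. One small adjustment: Lemma \ref{simple_lemma2} only guarantees that \emph{some} value in $\{0,1\}$ is assignable to $[\intFrame \models F_i]$ (its proof explicitly hedges with ``or they are if more than one''), so the uniqueness of $v_1$ and $v_2$ that your case split relies on should instead be drawn from Lemma \ref{unit_chain_excluded_middle} as invoked in Appendix D, which is what the paper cites.
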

\begin{IEEEproof}  
  Both {\small $F_1, F_2$} are assigned one and only one value 
  {\small $v \in \{0,1\}$} (\emph{Cf}. Appendix D). Trivial to verify. \\
\end{IEEEproof} 
We now prove laws involving \recurseReduce. 
\begin{lemma}[Elementary double negation]
  Let {\small $F$} denote\linebreak {\small $s_0 \gtrdot s_1 \gtrdot \cdots \gtrdot 
  s_k \in \mathfrak{U} \cup \mathcal{S}$} for 
  some {\small $k \in \mathbb{N}$}.
  Then 
  for any valuation frame it holds 
  that {\small $[\intFrame \models F] 
  = [\intFrame \models \recurseReduce(\recurseReduce(F))]$}. 
  \label{unit_double_negation}
\end{lemma}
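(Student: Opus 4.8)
The plan is to show that, at the level of semantic values in $\{0,1\}$, the operator $\recurseReduce$ realizes Boolean complementation on \emph{every} formula in unit chain expansion, and then to invoke the fact that complementation is an involution in the two-element Boolean algebra. Concretely, I would first establish the auxiliary claim that for every $G$ in unit chain expansion and every valuation frame $\intFrame$ one has $[\intFrame \models \recurseReduce(G)] = \neg^{\dagger}[\intFrame \models G]$. Lemma~\ref{unit_chain_excluded_middle} is exactly this claim restricted to a single unit chain (including the degenerate case $G = s_0 \in \mathcal{S}$), so it furnishes the base case; the real work is to promote it from single chains to arbitrary unit chain expansions, since $\recurseReduce(F)$ is in general a $\wedge/\vee$-combination of chains rather than a single chain, and Lemma~\ref{unit_chain_excluded_middle} cannot be applied to it directly.

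For the auxiliary claim I would induct on the structure of $G$ as a unit chain expansion. The base case is Lemma~\ref{unit_chain_excluded_middle}. For $G = G_1 \wedge G_2$ I would use the uniqueness part of Proposition~\ref{special_reduction}: since $\neg(G_1 \wedge G_2)$ reduces by $\neg$ reduction~2 to $\neg G_1 \vee \neg G_2$ and thence to $\recurseReduce(G_1) \vee \recurseReduce(G_2)$, uniqueness of the reduction forces $\recurseReduce(G_1 \wedge G_2) = \recurseReduce(G_1) \vee \recurseReduce(G_2)$ as formulas in unit chain expansion. Evaluating by Definition~\ref{model} and applying the induction hypothesis then gives $[\intFrame \models \recurseReduce(G_1 \wedge G_2)] = \neg^{\dagger}[\intFrame \models G_1] \vee^{\dagger} \neg^{\dagger}[\intFrame \models G_2]$; since all these values lie in $\{0,1\}$ (as recorded in the proof of Proposition~\ref{associativity_commutativity_distributivity}), De Morgan in the two-element Boolean algebra rewrites this as $\neg^{\dagger}([\intFrame \models G_1] \wedge^{\dagger} [\intFrame \models G_2]) = \neg^{\dagger}[\intFrame \models G_1 \wedge G_2]$. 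The disjunctive case $G = G_1 \vee G_2$ is dual.

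With the auxiliary claim in hand the lemma is immediate. For $F = s_0 \gtrdot \dots \gtrdot s_k$ a unit chain, $\recurseReduce(F)$ is again a formula in unit chain expansion (it is the reduction of $\neg F$ by Proposition~\ref{special_reduction}, and reductions land in unit chain expansion by Lemma~\ref{reduction_result}), so the auxiliary claim applies to it as well. Applying the claim twice yields $[\intFrame \models \recurseReduce(\recurseReduce(F))] = \neg^{\dagger}[\intFrame \models \recurseReduce(F)] = \neg^{\dagger}\neg^{\dagger}[\intFrame \models F]$, and since $[\intFrame \models F] \in \{0,1\}$ and $\neg^{\dagger}$ is an involution there, this equals $[\intFrame \models F]$, as required.

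The step I expect to be the main obstacle is the inductive promotion of complementation from single chains to arbitrary unit chain expansions, specifically pinning down that $\recurseReduce$ acts as a De Morgan dual on the $\wedge/\vee$ skeleton, i.e.\ $\recurseReduce(G_1 \wedge G_2) = \recurseReduce(G_1) \vee \recurseReduce(G_2)$ and dually. The delicate point is that $\recurseReduce$ is defined procedurally and finishes with a round of $\gtrdot$ reductions (its step~4), so one must be sure this renormalization does not disturb the top-level $\wedge/\vee$ structure. I would discharge this using the uniqueness of reductions granted by Proposition~\ref{special_reduction} rather than by tracing the procedure by hand; everything downstream is then routine two-valued Boolean bookkeeping licensed by Proposition~\ref{associativity_commutativity_distributivity}.
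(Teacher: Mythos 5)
Your proof is correct, but it takes a genuinely different route from the paper's. The paper (Appendix~E) argues entirely locally on the single chain: it writes out $\recurseReduce(\recurseReduce(s_0 \gtrdot \cdots \gtrdot s_k))$ explicitly as the conjunctive normal form $s_0 \wedge (s_0^c \vee (s_0 \gtrdot s_1)) \wedge \cdots$, converts it to disjunctive normal form, and observes that all but one of the resulting clauses evaluate to $0$, leaving $[\intFrame \models s_0] \wedge^{\dagger} [\intFrame \models s_0 \gtrdot s_1] \wedge^{\dagger} \cdots \wedge^{\dagger} [\intFrame \models s_0 \gtrdot \cdots \gtrdot s_k]$, which collapses to $[\intFrame \models F]$ by the definition of $\mathsf{J}$. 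You instead prove the stronger statement that $\recurseReduce$ realizes $\neg^{\dagger}$ on \emph{every} unit chain expansion, by structural induction on the $\wedge/\vee$ skeleton with Lemma~\ref{unit_chain_excluded_middle} as base case, and then read off the lemma as involutivity of complementation in $\{0,1\}$. Your key structural identity $\recurseReduce(G_1 \wedge G_2) = \recurseReduce(G_1) \vee \recurseReduce(G_2)$ is correctly discharged via the uniqueness clause of Proposition~\ref{special_reduction} (it is also immediate from step~1 of the procedure, since steps~2--4 act chainwise). What your route buys is generality: your auxiliary claim subsumes not only this lemma but essentially all of Proposition~\ref{excluded_middle}, which the paper proves afterwards by a separate DNF argument that quotes this lemma; the cost is a reliance on Proposition~\ref{special_reduction}'s uniqueness (itself only sketched in the paper) and on the tacit convention, shared with the paper's Appendix~D, that formulas in unit chain expansion are exactly $\wedge/\vee$-combinations of elements of $\mathfrak{U} \cup \mathcal{S}$. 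The paper's computation, by contrast, is self-contained for the single-chain case but leaves the ``almost all clauses are assigned $0$'' step to the reader.
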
 
\begin{IEEEproof}  
    {\small $\recurseReduce(\recurseReduce(F))$} is in 
    conjunctive normal form. Transform this to 
    disjunctive normal form, and observe that almost 
    all the clauses are assigned 0. 
    Details are in Appendix E. 
    \hide{
  {\small $\recurseReduce(\recurseReduce(F)) =  
  \recurseReduce(s^c_0 \vee (s_0 \gtrdot s^c_1) \vee 
  \cdots \vee 
  (s_0 \gtrdot s_1 \gtrdot \cdots \gtrdot s_{k-1} 
  \gtrdot s^c_{k})) = 
   s_0 \wedge (s^c_0 \vee (s_0 \gtrdot s_1)) 
  \wedge (s_0^c \vee (s_0 \gtrdot s_1^c) 
  \vee (s_0 \gtrdot s_1 \gtrdot s_2)) 
  \wedge \cdots \wedge 
  (s^c_0 \vee (s_0 \gtrdot s_1^c) \vee \cdots 
  \vee (s_0 \gtrdot s_1 \gtrdot \cdots \gtrdot s_{k-2} \gtrdot 
  s^c_{k-1}) 
  \vee (s_0 \gtrdot s_1 \gtrdot \cdots \gtrdot s_{k}))$}.   
  Here, assume that the right hand side of the equation
  which is in conjunctive normal form is ordered, 
  the number of terms, from left to right, strictly increasing 
  from 1 to {\small $k + 1$}. Then as the result of a transformation
  of the conjunctive 
  normal form into disjunctive normal form we will 
  have 1 (the choice from the first conjunctive clause which contains 
  only one term {\small $s_0$}) {\small $\times$} 
  2 (a choice from the second conjunctive clause with 
  2 terms {\small $s_0^c$} and {\small $s_0 \gtrdot s_1$}) 
  {\small $\times$} \ldots {\small $\times$} (k $+$ 1) clauses. But  
  almost all the clauses in 
  {\small $[(\mathsf{I}, \mathsf{J})\models (\text{the disjunctive
  normal form})]$}
  will be assigned 0 (trivial; the proof left to readers) so that we gain
  {\small $[(\mathsf{I}, \mathsf{J})\models (\text{the disjunctive normal form})] 
  = [(\mathsf{I}, \mathsf{J})\models s_0] \wedge^{\dagger} [(\mathsf{I}, \mathsf{J})\models
  s_0 \gtrdot s_1] \wedge^{\dagger} \cdots 
  \wedge^{\dagger} [(\mathsf{I}, \mathsf{J})\models s_0 \gtrdot s_1 
  \gtrdot \cdots \gtrdot s_k] =
  [(\mathsf{I}, \mathsf{J})\models s_0 \gtrdot s_1 
  \gtrdot \cdots \gtrdot s_k]$}. \\ 
    }
\end{IEEEproof} 
\begin{proposition}[Complementation/Double negation]{\ }\\
  For any {\small $F$} in unit chain expansion 
  and for any valuation frame, it holds that 
  {\small $1 = [\intFrame \models F \vee 
  \recurseReduce(F)]$} 
   and that 
  {\small $0 = [\intFrame \models F \wedge \recurseReduce(F)]$} 
  (complementation). 
  Also, for any {\small $F \in \mathfrak{F}$} in unit chain 
  expansion and 
  for any valuation frame 
  it holds that {\small $[\intFrame \models F] 
  = [\intFrame \models \recurseReduce(\recurseReduce(F))]$} (double negation). 
\label{excluded_middle}
\end{proposition}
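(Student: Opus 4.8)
The plan is to reduce both claims to the single structural fact that, on unit chain expansions, the operator $\recurseReduce$ computes the meta-level Boolean complement of the valuation; that is, for every $F$ in unit chain expansion and every valuation frame $\intFrame$,
\[
[\intFrame \models \recurseReduce(F)] = \neg^{\dagger}[\intFrame \models F].
\]
Once this is in hand, complementation and double negation are immediate. Since every unit chain expansion is assigned a unique value $v \in \{0,1\}$ (Lemma \ref{simple_lemma2}, Proposition \ref{simple_proposition}), the $\vee$- and $\wedge$-clauses of Definition \ref{model} give $[\intFrame \models F \vee \recurseReduce(F)] = v \vee^{\dagger} \neg^{\dagger} v = 1$ and $[\intFrame \models F \wedge \recurseReduce(F)] = v \wedge^{\dagger} \neg^{\dagger} v = 0$, while $[\intFrame \models \recurseReduce(\recurseReduce(F))] = \neg^{\dagger}\neg^{\dagger} v = v = [\intFrame \models F]$, all three being elementary identities of the two-element Boolean algebra on $\{0,1\}$.

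First I would record the base case. When $F$ is a single unit chain $s_0 \gtrdot \dots \gtrdot s_k \in \mathfrak{U} \cup \mathcal{S}$, the displayed identity is exactly the content of Lemma \ref{unit_chain_excluded_middle}: its two mutually exclusive alternatives say precisely that $\recurseReduce$ sends value $1$ to value $0$ and value $0$ to value $1$.

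For the inductive step I would write $F$ in disjunctive normal form, $F = \vee_{i} \wedge_{j} f_{ij}$ with each $f_{ij} \in \mathfrak{U} \cup \mathcal{S}$ (available by the preceding normalisation results), and observe that the defining clauses of $\recurseReduce$ act as a De Morgan dual: steps 1--3 swap every $\wedge$ for a $\vee$ and conversely and replace each leaf $f_{ij}$ by $\recurseReduce(f_{ij})$, so that $\recurseReduce(F)$ reduces to the conjunctive form $\wedge_i \vee_j \recurseReduce(f_{ij})$. Evaluating through the $\wedge$- and $\vee$-clauses of Definition \ref{model} and applying the base case leaf-wise yields
\[
[\intFrame \models \recurseReduce(F)] = \bigwedge\nolimits^{\dagger}_i \bigvee\nolimits^{\dagger}_j \neg^{\dagger}[\intFrame \models f_{ij}] = \neg^{\dagger}\Big(\bigvee\nolimits^{\dagger}_i \bigwedge\nolimits^{\dagger}_j [\intFrame \models f_{ij}]\Big) = \neg^{\dagger}[\intFrame \models F],
\]
the middle equality being meta-level De Morgan in standard classical logic and the outer equalities being the valuation clauses; associativity, commutativity and distributivity of $\wedge^{\dagger}$ and $\vee^{\dagger}$ (Proposition \ref{associativity_commutativity_distributivity}) justify the regrouping.

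The main obstacle I anticipate is the justification that $\recurseReduce$ really commutes with the $\wedge/\vee$ skeleton at the level of valuations, because step 4 of $\recurseReduce$ applies further $\gtrdot$ reductions that rewrite the chains $s \gtrdot \recurseReduce(F_{\code{tail}})$ into genuine unit chain expansions before a value can be read off via Definition \ref{model}. To close this gap I would argue that these $\gtrdot$ reductions preserve the valuation of the formula they act on, since each mirrors the $\mathsf{J}$/$\mathsf{I}$ clauses, so that the value of the fully reduced $\recurseReduce(F)$ agrees with the value computed from its unreduced De Morgan form; the uniqueness part of Proposition \ref{special_reduction} guarantees there is no ambiguity in which reduced formula is obtained. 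With that soundness in place the induction goes through, and double negation may alternatively be obtained by lifting Lemma \ref{unit_double_negation} through the same disjunctive-normal-form argument, giving an independent check.
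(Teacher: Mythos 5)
Your proof is correct and follows the same essential route as the paper's: pass to disjunctive normal form, observe that $\recurseReduce$ acts as a De Morgan dual on the $\wedge/\vee$ skeleton while complementing the leaves, and invoke Lemma \ref{unit_chain_excluded_middle} leaf-wise. The one genuine difference is in organisation: you isolate the single identity $[\intFrame \models \recurseReduce(F)] = \neg^{\dagger}[\intFrame \models F]$ for every $F$ in unit chain expansion and derive all three claims from it, whereas the paper proves complementation by a direct case split on the value of $F$ (which amounts to proving the two directions of your identity without naming it) and then proves double negation separately by pushing Lemma \ref{unit_double_negation} through the DNF. Your packaging buys something concrete: since $\recurseReduce(F)$ is itself in unit chain expansion, applying the complement identity twice gives $[\intFrame \models \recurseReduce(\recurseReduce(F))] = \neg^{\dagger}\neg^{\dagger}[\intFrame \models F] = [\intFrame \models F]$ immediately, so the separate elementary double-negation computation of Lemma \ref{unit_double_negation} (the conjunctive-to-disjunctive expansion in which ``almost all clauses are assigned $0$'') becomes unnecessary for this proposition, surviving only as the independent check you mention. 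Both arguments share the same implicit step -- identifying $F$ with its disjunctive normal form before applying $\recurseReduce$, which is justified at the level of valuations by Proposition \ref{associativity_commutativity_distributivity} -- so your proof introduces no gap that the paper's does not already have; your concern about step 4 of $\recurseReduce$ is likewise already absorbed by Lemma \ref{unit_chain_excluded_middle}, which evaluates the fully reduced form of $\recurseReduce$ applied to a unit chain.
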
 
\begin{IEEEproof}          
    Make use of disjunctive normal form, Lemma \ref{unit_chain_excluded_middle} and 
    Lemma \ref{unit_double_negation}. 
    Details are in Appendix F. 
    \hide{
  Firstly for {\small $1 = [(\mathsf{I}, \mathsf{J})\models F \vee \recurseReduce(F)]$}. 
  By Proposition \ref{associativity_commutativity_distributivity}, 
  {\small $F$} has a disjunctive normal form: 
  {\small $F = \bigvee_{i = 0}^{k} \bigwedge_{j=0}^{h_i}   
  f_{ij}$} for some {\small $i, j, k \in \mathbb{N}$}, 
  some {\small $h_0, \cdots, h_k \in \mathbb{N}$} 
  and some {\small $f_{00}, \cdots, f_{kh_k} \in 
  \mathfrak{U} \cup \mathcal{S}$}.  
  Then {\small $\recurseReduce(F)  
  = \bigwedge_{i=0}^k \bigvee_{j=0}^{h_i} \recurseReduce(f_{ij})$}, 
  which, if transformed into a disjunctive normal form, 
  will have {\small $(h_0 + 1)$} [a choice from 
  {\small $\recurseReduce(f_{00}), \recurseReduce(f_{01}), \dots,\\
  \recurseReduce(f_{0h_0})$}] {\small $\times$} 
  {\small $(h_1 + 1)$} [a choice from 
  {\small $\recurseReduce(f_{10}), \recurseReduce(f_{11}), \dots,\\
  \recurseReduce(f_{1h_1})$}] {\small $\times \dots \times$} 
  {\small $(h_k + 1)$} clauses. Now if 
  {\small $[(\mathsf{I}, \mathsf{J})\models F] = 1$}, then we already have the required 
  result. Therefore suppose that {\small $[(\mathsf{I}, \mathsf{J})\models F] = 0$}. 
  Then it holds that {\small $\forall i \in \{0, \dots, k\}. 
  \exists j \in \{0, \dots, h_i\}.([\models f_{ij}] = 0)$}. But 
  by Lemma \ref{unit_chain_excluded_middle}, this is equivalent to 
  saying that {\small $\forall i \in \{0, \dots, k\}. 
  \exists j \in \{0, \dots, h_i\}.([(\mathsf{I}, \mathsf{J})\models \recurseReduce(f_{ij})] 
  = 1)$}. But then there exists a clause in disjunctive normal form 
  of {\small $[(\mathsf{I}, \mathsf{J})\models \recurseReduce(F)]$} which is assigned 1. 
  Dually for {\small $0 = [(\mathsf{I}, \mathsf{J})\models F \wedge \recurseReduce(F)]$}. 
  \\
  \indent For {\small $[(\mathsf{I}, \mathsf{J})\models F] = 
  [(\mathsf{I}, \mathsf{J})\models \recurseReduce(\recurseReduce(F))]$},  
  by Proposition \ref{associativity_commutativity_distributivity}, 
  {\small $F$} has a disjunctive normal form: 
  {\small $F = \bigvee_{i = 0}^k \bigwedge_{j=0}^{h_i} f_{ij}$} 
  for some {\small $i, j, k \in \mathbb{N}$}, 
  some {\small $h_0, \dots, h_k \in \mathbb{N}$} and 
  some {\small $f_{00}, \dots, f_{kh_k} \in \mathfrak{U} \cup 
  \mathcal{S}$}. Then {\small $\recurseReduce(\recurseReduce(F)) 
  = \bigvee_{i = 0}^{k} \bigwedge_{j=0}^{h_i} \recurseReduce(
  \recurseReduce(f_{ij}))$}. But by Lemma \ref{unit_double_negation} 
  {\small $[(\mathsf{I}, \mathsf{J})\models \recurseReduce(\recurseReduce(f_{ij})] = 
  [(\mathsf{I}, \mathsf{J})\models f_{ij}]$} for each appropriate {\small $i$} and 
  {\small $j$}. Straightforward. \\ 
    }
\end{IEEEproof}      
\begin{theorem} 
  Denote by {\small $X$} 
  the set of the expressions comprising all 
  {\small $[\intFrame \models f_x]$} for 
  {\small $f_x \in \mathfrak{U} \cup \mathcal{S}$}. 
  Then for every valuation frame, {\small $(X, \recurseReduce, \wedge^{\dagger}, \vee^{\dagger})$} 
  defines a Boolean algebra. 
  \label{theorem_1}
\end{theorem}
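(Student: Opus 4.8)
The plan is to assemble the theorem directly from the battery of laws established in the preceding propositions, since each defining axiom of a Boolean algebra has already been proved in isolation; the only genuinely new work is to check that the carrier $X$ together with the operation $\recurseReduce$ constitutes a \emph{well-defined} algebraic structure. First I would pin down the carrier. Because $\top, \bot \in \mathcal{S} \subseteq \mathfrak{U} \cup \mathcal{S}$ with $[\intFrame \models \top] = 1$ and $[\intFrame \models \bot] = 0$, and because by Lemma~\ref{simple_lemma2} every $[\intFrame \models f_x]$ lies in $\{0,1\}$, the set $X$ equals $\{0,1\}$ for every valuation frame. Closure of $\wedge^{\dagger}$ and $\vee^{\dagger}$ on $X$ is then immediate from the Boolean arithmetic laws fixed in the preamble.

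The subtle point, and the one place where care is needed, is that $\recurseReduce$ is defined syntactically as a formula-to-formula transformation, so I must confirm that it descends to a well-defined unary operation on the value set $X$. This is exactly the content of Lemma~\ref{unit_chain_excluded_middle}: the value $[\intFrame \models \recurseReduce(f_x)]$ is $0$ precisely when $[\intFrame \models f_x] = 1$ and is $1$ precisely when $[\intFrame \models f_x] = 0$, and these two events are mutually exclusive. Hence $\recurseReduce$ acts on $X$ simply as the swap $1 \mapsto 0$, $0 \mapsto 1$, independently of which syntactic representative $f_x$ is chosen to name a given value.

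With the structure well-defined, I would verify a minimal axiomatization (Huntington's postulates suffice), each clause citing an earlier result. Commutativity of $\wedge^{\dagger}$ and $\vee^{\dagger}$ and the mutual distributivity of each over the other are Proposition~\ref{associativity_commutativity_distributivity}, which also supplies associativity; the identity laws $[\intFrame \models \bot \vee F] = [\intFrame \models F]$ and $[\intFrame \models \top \wedge F] = [\intFrame \models F]$, exhibiting $[\intFrame \models \bot]$ and $[\intFrame \models \top]$ as the two-sided units $0$ and $1$, are the Annihilation/Identity proposition; and the complement laws $[\intFrame \models F \vee \recurseReduce(F)] = 1$ and $[\intFrame \models F \wedge \recurseReduce(F)] = 0$, together with double negation $[\intFrame \models \recurseReduce(\recurseReduce(F))] = [\intFrame \models F]$, are Proposition~\ref{excluded_middle}. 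Idempotence and absorption from Proposition~\ref{idempotence_absorption} then round out the full list of laws of the cited reference.

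Since every postulate holds in every valuation frame, I would conclude that $(X, \recurseReduce, \wedge^{\dagger}, \vee^{\dagger})$ is a Boolean algebra, in fact the two-element one $\mathbf{2}$. The main obstacle is conceptual rather than computational: reconciling the syntactic character of $\recurseReduce$ with the value-level carrier $X$, which is resolved by Lemma~\ref{unit_chain_excluded_middle}. Once that well-definedness is secured, the remainder of the argument is pure bookkeeping over the previously proved laws.
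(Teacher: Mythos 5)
Your proof is correct and takes essentially the same route as the paper, whose entire proof of this theorem is the one-line remark that it follows from the earlier propositions and lemmas; you simply make explicit which laws come from which results (Annihilation/Identity, Proposition~\ref{associativity_commutativity_distributivity}, Proposition~\ref{idempotence_absorption}, Proposition~\ref{excluded_middle}). Your additional observation that Lemma~\ref{unit_chain_excluded_middle} is what makes $\recurseReduce$ descend to a well-defined complement on the value set is a point the paper leaves implicit, and it is a worthwhile clarification rather than a deviation.
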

\begin{IEEEproof}
  Follows from earlier propositions and lemmas.  \\
\end{IEEEproof}
\subsubsection{Gradual classical logic is neither para-consistent 
    nor inconsistent 
    }{\ }\\ 
To achieve the last objective
we assume two notations. 
\begin{definition}[Sub-formula notation]
Given a formula {\small $F \in \mathfrak{F}$}, 
  we denote by {\small $F[F_a]$} the fact that 
  {\small $F_a$} occurs as a sub-formula in {\small $F$}.   
  Here the definition of a sub-formula of a formula 
  follows one that is found in standard textbooks on logic 
  \cite{Kleene52}. 
  {\small $F$} itself is a sub-formula of {\small $F$}. 
\end{definition}
\begin{definition}[Small step reductions]
  By {\small $F_1 \leadsto F_2$} for  
  some formulas {\small $F_1$} and {\small $F_2$} we denote
  that {\small $F_1$} 
  reduces in one reduction step into {\small $F_2$}. By 
  {\small $F_1 \leadsto_{r} F_2$} we denote that 
  the reduction holds explicitly by 
  a reduction rule {\small $r$} (which is either of the 
  7 rules). By {\small $F_1 \leadsto^* F_2$} we denote 
  that {\small $F_1$} reduces 
  into {\small $F_2$} in a finite number of steps including 
  0 step in which case {\small $F_1$} is said to be 
  irreducible. By {\small $F_1 \leadsto^k F_2$} we denote 
  that the reduction is in exactly {\small $k$} steps. 
  By 
  {\small $F_1 \leadsto^*_{\{r_1, r_2, \cdots\}} F_2$} or 
  {\small $F_1 \leadsto^k_{\{r_1, r_2, \cdots\}} F_2$} we denote 
  that the reduction is via those specified rules 
  {\small $r_1, r_2, \cdots$} only. \\
\end{definition}       
Along with them, we also enforce that  
{\small $\mathcal{F}(F)$} denote the set of 
formulas in unit chain expansion that {\small $F \in \mathfrak{F}$}  
can reduce into. A stronger result than Lemma \ref{normalisation_without_negation} 
follows. 
\hide{
\begin{lemma}[Linking principle 2] 
  Let {\small $F_1, F_2$} be two formulas in unit chain expansion. 
  Denote the set of formulas in unit chain expansion 
  that {\small $F_1 \gtrdot F_2$} can reduce into by 
  {\small $\mathcal{F}$}. Then it holds either that 
  {\small $[\models F_a] = 1$} for all {\small $F_a \in 
  \mathcal{F}$} or else that 
  {\small $[\models F_a] = 0$} for all {\small $F_a \in 
  \mathcal{F}$}. 
  \label{linking_principle_2}
\end{lemma} 
\begin{proof}

  By the number of reduction steps on {\small $F_1 \gtrdot F_2$}. 
  If it is 0, then it is a formula in unit chain expansion. 
  By the results of the previous sub-section, 
  {\small $[\models F_1 \gtrdot F_2] = 1$} or else 
  {\small $[\models F_1 \gtrdot F_2] = 0$}. Trivial. 
  For inductive cases, assume that the current lemma holds true 
  for all the numbers of steps up to {\small $k$}. We show that 
  it still holds true for all the reductions with {\small $k+1$} steps. 
  Consider what reduction first applies on {\small $F_1 \gtrdot F_2$}: 
  \begin{enumerate}
    \item {\small $\gtrdot$} reduction 2:  there are 
      three sub-cases: 
      \begin{enumerate}  
	\item If we have {\small $F_1[(F_a \wedge F_b) \gtrdot F_c] 
      \gtrdot F_2 \leadsto F_3[(F_a \gtrdot F_c) \wedge 
      (F_b \gtrdot F_c)] \gtrdot F_2$} such that 
      {\small $F_3$} differs from {\small $F_1$} only by 
      the shown sub-formula: then; 
      \begin{enumerate}
	\item If the given reduction 
      is the only one possible reduction, then we apply 
      induction hypothesis on {\small $F_3 \gtrdot F_2$} to conclude.  
    \item  Otherwise, suppose that there exists an alternative 
      reduction step {\small $r$} (but necessarily  
      one of the {\small $\gtrdot$} reductions), then; 
      \begin{enumerate}
	\item If {\small $F_1 \gtrdot F_2 \leadsto_r F_1 \gtrdot $}.  
      \end{enumerate}<++>
      we have {\small $F_1 \gtrdot F_2 \leadsto_r F_3 \gtrdot F_2 $}  
      \end{enumerate}

  \end{enumerate}
      {\small $F_1 \gtrdot F_2[(F_a \wedge F_b) \gtrdot F_c] 
      \leadsto F_1 \gtrdot F_3[(F_a \gtrdot F_c) \wedge 
      (F_b \gtrdot F_c)]$}. 
      determined from {\small $F_1 \gtrdot F_2$} in either of the 
      cases. Consider the first case. 
      \begin{enumerate} 
	\item If the given reduction 
      is the only one possible reduction, then we apply 
      induction hypothesis on {\small $F_3 \gtrdot F_2$} to conclude.  
      
    \item  Otherwise, suppose that there exists an alternative 
      reduction step {\small $r$} (but necessarily  
      one of the {\small $\gtrdot$} reductions), then 
      we have {\small $F_1 \gtrdot F_2 \leadsto_r F_3 \gtrdot F_2 $}  
  \end{enumerate}

  \end{enumerate}

  First we spell out intuition. The result follows if no possible reductions at any 
  given point during a reduction affect the others in 
  an essential way. That is, if the effect of a reduction {\small $r$} 
  acting upon 
  some sub-formula {\small $F'$} of a given formula 
  {\small $F$} is contained within it, 
  that is, if {\small $F' \leadsto_r F''$} and 
  also if {\small $F[F'] \leadsto_r F_{new}[F'']$} where 
  {\small $r$} is assumed to be acting upon {\small $F'$}, then  
  in case there are other alternative reductions {\small $r' (\not= r)$} 
  that can apply 
  on {\small $F'$}: {\small $F' \leadsto_{r'} F'''$} such as 
  to satisfy {\small $F[F'] \leadsto_{r'} F_{\alpha}[F''']$},
  then  
  reduction on {\small $F_{\alpha}[F''']$} could potentially 
  lead to some formula in unit chain expansion which does not 
  have the same value assignment as for some formula in unit chain 
  expansion that {\small $F_{new}[F'']$} can reduce into.

  any alternative reductions possible to apply for {\small $F'$} 
  might lead to some formula in unit chain expansion which has 
  a different valuation

\end{proof} 
} 
\begin{lemma}[Bisimulation without negation]  
  Assumed below are pairs of formulas in which 
  {\small $\neg$} does not occur. 
  {\small $F'$} differs from {\small $F$} only by
  the shown sub-formulas, \emph{i.e.} {\small $F'$} 
  derives from {\small $F$} by replacing the shown sub-formula 
  for {\small $F'$} 
  with the shown sub-formula for {\small $F$} and vice versa. 
  Then for each pair  
  {\small $(F, F')$} 
  below, it holds for every valuation frame 
  that 
  {\small $[\intFrame \models F_1] = [\intFrame \models F_2]$} for 
  all {\small $F_1 \in \mathcal{F}(F)$} and 
  for all {\small $F_2 \in \mathcal{F}(F')$}.   
  {\small 
  \begin{eqnarray}\nonumber 
    F[(F_a \wedge F_b) \gtrdot F_c] &,& F'[(F_a \gtrdot F_c) 
    \wedge (F_b \gtrdot F_c)]\\\nonumber
    F[(F_a \vee F_b) \gtrdot F_c] &,& F'[(F_a \gtrdot F_c) 
    \vee (F_b \gtrdot F_c)]\\\nonumber 
    F[F_a \gtrdot (F_b \wedge F_c)] &,& F'[(F_a \gtrdot F_b)
    \wedge (F_a \gtrdot F_c)]\\\nonumber 
    F[F_a \gtrdot (F_b \vee F_c)] &,& F'[(F_a \gtrdot F_b)
    \vee (F_a \gtrdot F_c)]\\\nonumber 
    F[(F_a \gtrdot F_b) \gtrdot F_c] &\!\!\!\!\!\!\!\!\!,& \!\!\!\!\!\!\!\!\!\!
    F'[(F_a \gtrdot F_c) \wedge ((F_a \gtrdot F_b) \vee (F_a \gtrdot 
    F_b \gtrdot F_c))]
  \end{eqnarray}  
  }
  \label{bisimulation}
\end{lemma}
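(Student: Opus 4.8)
The plan is to collapse the whole statement to a single \emph{value-confluence} fact about negation-free reductions, and then to prove that fact by induction on formula size. First I would observe that in each of the five displayed pairs the formula $F'$ is obtained from $F$ by contracting exactly one redex with the matching $\gtrdot$ reduction ($\gtrdot$ reductions $2,3,4,5$ for the first four pairs and $\gtrdot$ reduction $1$ for the last), so that $F \leadsto F'$ in a single step. Consequently every unit chain expansion reachable from $F'$ is also reachable from $F$, i.e. $\mathcal{F}(F') \subseteq \mathcal{F}(F)$, and both sets are non-empty by Lemma \ref{normalisation_without_negation}. Hence it suffices to prove the single claim that, for every negation-free $F$ and every valuation frame $\intFrame$, all members of $\mathcal{F}(F)$ receive one and the same value; the desired equalities then follow at once, since $\mathcal{F}(F')$ is a non-empty subset of $\mathcal{F}(F)$ and so inherits that common value.

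To prove this value-confluence claim I would induct on $\formulasize(F)$, noting that since $F$ is negation-free only the $\gtrdot$ reductions can ever fire. The base case $F \in \mathcal{S} \cup \mathfrak{U}$ is immediate from Lemma \ref{simple_lemma2}, because then $\mathcal{F}(F) = \{F\}$. For the inductive step the decisive observation is that no $\gtrdot$ reduction rule has $\wedge$ or $\vee$ as the outermost connective of its redex: every redex is headed by $\gtrdot$. Therefore, when $F = F_1 \wedge F_2$ or $F = F_1 \vee F_2$, no reduction can straddle the top connective, so every reduction sequence acts entirely inside $F_1$ or inside $F_2$ and $\mathcal{F}(F) = \{\, G_1 \wedge G_2 \mid G_i \in \mathcal{F}(F_i)\,\}$ (respectively with $\vee$). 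Compositionality of the valuation (Definition \ref{model}) together with the induction hypothesis, which fixes the value of each $G_i$, then fixes $[\intFrame \models G_1 \wedge G_2] = [\intFrame \models G_1]\,\wedge^{\dagger}\,[\intFrame \models G_2]$, and dually for $\vee$.

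The crux is the case $F = F_1 \gtrdot F_2$, where reductions may fire inside $F_1$, inside $F_2$, or at the top $\gtrdot$ as soon as $F_1$ or $F_2$ exposes a shape $(\cdot \wedge \cdot)$, $(\cdot \vee \cdot)$, or $(\cdot \gtrdot \cdot)$. Here I would show that all competing first steps join to equal values: by the induction hypothesis $F_1$ and $F_2$ can first be driven to unit chain expansions of fixed value, after which the Linking principle (Lemma \ref{linking_principle}) supplies a unit chain expansion of $F_1 \gtrdot F_2$, and the Boolean laws already established — associativity, commutativity and distributivity (Proposition \ref{associativity_commutativity_distributivity}) together with idempotence and absorption (Proposition \ref{idempotence_absorption}), organised by Theorem \ref{theorem_1} — certify that the value of that expansion does not depend on the order in which the distributive and nesting rules were applied. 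The main obstacle I anticipate is exactly the interaction of $\gtrdot$ reduction $1$ with reductions $2$--$5$: flattening a nested chain $(F_a \gtrdot F_b) \gtrdot F_c$ introduces the auxiliary disjunct $F_a \gtrdot F_b \gtrdot F_c$, and verifying that this extra term leaves the value unchanged, and agrees across every interleaving with the distributive steps, is what forces the use of the full Boolean-algebra machinery rather than a purely syntactic confluence argument.
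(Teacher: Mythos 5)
Your overall architecture is attractive and genuinely different from the paper's: you reduce all five pairs to the single observation that $F \leadsto F'$ in one step (so $\mathcal{F}(F') \subseteq \mathcal{F}(F)$, non-empty by Lemma~\ref{normalisation_without_negation}), and then try to prove semantic value-confluence of $\mathcal{F}(F)$ for every negation-free $F$ by structural induction; the paper instead runs a syntactic bisimulation in Appendix~G showing $\mathcal{F}(F) = \mathcal{F}(F')$ case by case and concludes that $\mathcal{F}(F)$ collapses to a single unit chain expansion. Your reduction step, your base case, and your $\wedge/\vee$ cases are all correct (no $\gtrdot$ reduction has $\wedge$ or $\vee$ as the outermost connective of its redex, so reductions cannot straddle the top connective, and the valuation is compositional there).

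The gap is that the entire difficulty of the lemma has been pushed into the case $F = F_1 \gtrdot F_2$, and there you assert rather than prove the key claim. Saying that ``the Boolean laws certify that the value of the expansion does not depend on the order in which the distributive and nesting rules were applied'' is a restatement of the goal, not an argument: the valuation has no clause of the form $[\intFrame \models F_1 \gtrdot F_2] = f([\intFrame \models F_1],[\intFrame \models F_2])$, so knowing that $\mathcal{F}(F_1)$ and $\mathcal{F}(F_2)$ each carry a fixed value does not by itself constrain the value of reducts of $F_1 \gtrdot F_2$; you must actually join every pair of competing first steps (e.g.\ $\gtrdot$ reduction $1$ against reduction $4$ on $(a \gtrdot b) \gtrdot (c \wedge d)$, where the two normal forms agree only after an application of distributivity of $\wedge^{\dagger}$ over $\vee^{\dagger}$) and then close the diagram by an inductive appeal. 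But your declared measure, $\formulasize(F)$, does not support that appeal: $\gtrdot$ reduction $1$ rewrites $(F_a \gtrdot F_b) \gtrdot F_c$ to a contractum containing $F_a \gtrdot F_b \gtrdot F_c$, which has the same size as the redex while the contractum as a whole is strictly larger, so the reducts of $F_1 \gtrdot F_2$ on which the joining argument must recurse are not smaller than $F$. You would need a different well-founded measure for this inner argument --- e.g.\ the count of $\mathcal{S}$-occurrences in tails as in the proof of Lemma~\ref{linking_principle}, or induction on the length of a terminating reduction sequence --- together with an explicit local-confluence-up-to-value analysis of the critical pairs among reductions $1$--$5$. As it stands, the crux case is exactly the content of the paper's Appendix~G, and your proposal leaves it unproved.
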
 
\begin{IEEEproof} 
    By induction on the number of reduction steps and a sub-induction 
    on formula size in each direction of bisimulation. Details 
    are in Appendix G. 
    \hide{
   By induction on the number of reduction steps and a sub-induction 
on formula size, 
   we first establish that {\small $
       \mathcal{F}(F_1) = \mathcal{F}(F_2)$} (by bisimulation). 
   Into one way to show that to each 
   reduction on {\small $F'$} corresponds 
   reduction(s) on {\small $F$} is straightforward, 
   for we can choose to reduce {\small $F$} into {\small $F'$}, 
   thereafter we synchronize both of the reductions. Into the 
   other way to show that to each 
   reduction on {\small $F$} corresponds 
   reduction(s) on {\small $F'$}, we consider each case: 
   \begin{enumerate}
     \item The first pair.  
       \begin{enumerate} 
	 \item 
       If a reduction takes place on a sub-formula which 
       neither is a sub-formula of the shown sub-formula 
       nor has as its sub-formula the shown sub-formula, 
       then we reduce the same sub-formula in {\small $F'$}.   
       Induction hypothesis (note that the number of 
reduction steps is that of {\small $F$} into this 
direction).  
     \item If it takes place on a sub-formula 
       of {\small $F_a$} or {\small $F_b$} 
       then we reduce the same sub-formula of 
       {\small $F_a$} or {\small $F_b$} in {\small $F'$}. Induction 
hypothesis. 
     \item If it takes place on a sub-formula 
       of {\small $F_c$} then we reduce the same sub-formula 
       of both occurrences of {\small $F_c$} in {\small $F'$}.  
Induction hypothesis. 
     \item If {\small $\gtrdot$} reduction 2 takes place on 
       {\small $F$} such that we have; 
       {\small $F[(F_a \wedge F_b) \gtrdot F_c] \leadsto 
       F_x[(F_a \gtrdot F_c) \wedge (F_b \gtrdot F_c)]$} where 
       {\small $F$} and {\small $F_x$} differ only by 
       the shown sub-formulas,\footnote{This note `where \dots' is assumed in the 
       remaining.} then do nothing on {\small $F'$}. And {\small $F_x = 
       F'$}. Vacuous thereafter. 
     \item If {\small $\gtrdot$} reduction 2 takes place 
       on {\small $F$} such that we have; 
       {\small $F[(F_d \wedge F_e) \gtrdot F_c] \leadsto 
       F_x[(F_d \gtrdot F_c) \wedge (F_e \gtrdot F_c)]$} 
       where {\small $F_d \not= F_a$} and {\small $F_d \not = F_b$}, 
       then without loss of generality assume that 
       {\small $F_d \wedge F_{\beta} = F_a$} 
       and that {\small $F_{\beta} \wedge F_b = F_e$}. 
       Then we apply {\small $\gtrdot$} reduction 2 
       on the {\small $(F_d \wedge F_{\beta}) \gtrdot F_c$} in 
       {\small $F'$} so that we have; 
       {\small $F'[((F_d \wedge F_{\beta}) \gtrdot F_c) \wedge 
       (F_b \gtrdot F_c)] \leadsto 
       F''[(F_d \gtrdot F_c) \wedge (F_{\beta} \gtrdot F_c) 
       \wedge (F_b \gtrdot F_c)]$}. 
       Since {\small $(F_x[(F_d \gtrdot F_c) \wedge (F_e \gtrdot F_c)] 
       =) F_x[(F_d \gtrdot F_c) \wedge ((F_{\beta} \wedge F_b) \gtrdot 
       F_c)] = F_x'[(F_{\beta} \wedge F_b) \gtrdot F_c]$} and 
       {\small $F''[(F_d \gtrdot F_c) \wedge (F_{\beta} \gtrdot F_c) 
       \wedge (F_b \gtrdot F_c)] = F'''[(F_{\beta} \gtrdot F_c) 
       \wedge (F_b \gtrdot F_c)]$} such that 
       {\small $F'''$} and {\small $F_x'$} differ only 
       by the shown sub-formulas, we repeat the rest of simulation 
       on 
       {\small $F'_x$} and {\small $F'''$}. Induction hypothesis. 
     \item If a reduction takes place on a sub-formula {\small $F_p$} of 
       {\small $F$} in which the shown sub-formula of 
       {\small $F$} occurs as a strict sub-formula 
       ({\small $F[(F_a \wedge F_b) \gtrdot F_c] 
       = F[F_p[(F_a \wedge F_b) \gtrdot F_c]]$}), then 
       we have {\small $F[F_p[(F_a \wedge F_b) \gtrdot F_c]] 
       \leadsto F_x[F_q[(F_a \wedge F_b) \gtrdot F_c]]$}. 
       But we have 
       {\small $F' = F'[F_p'[(F_a \gtrdot F_c) \wedge (F_b \gtrdot 
       F_c)]]$}. Therefore we apply the same reduction on 
       {\small $F_p'$} to gain; 
       {\small $F'[F_p'[(F_a \gtrdot F_c) \wedge (F_b \gtrdot 
       F_c)]] \leadsto F'_x[F_{p'}'[(F_a \gtrdot F_c) \wedge (F_b 
       \gtrdot F_c)]]$}. Induction hypothesis. 
   \end{enumerate} 
 \item The second, the third and the fourth pairs: Similar. 
 \item The fifth pair: 
   \begin{enumerate}
     \item If a reduction takes place on a sub-formula 
       which neither is a sub-formula of the shown 
       sub-formula nor has as its sub-formula the shown sub-formula, 
       then we reduce the same sub-formula in {\small $F'$}.  
Induction hypothesis. 
     \item If it takes place on a sub-formula of {\small $F_a$}, 
       {\small $F_b$} or {\small $F_c$}, then 
       we reduce the same sub-formula of all the occurrences
       of the shown {\small $F_a$}, {\small $F_b$} or {\small $F_c$} 
       in {\small $F'$}. Induction hypothesis. 
     \item If {\small $\gtrdot$} reduction 4 takes place on 
       {\small $F$} such that we have; 
       {\small $F[(F_a \gtrdot F_b) \gtrdot F_c]
       \leadsto 
       F_x[(F_a \gtrdot F_c) \wedge ((F_a \gtrdot F_b) \vee 
       (F_a \gtrdot F_b \gtrdot F_c))]$}, then do nothing on 
       {\small $F'$}. And {\small $F_x = F'$}. Vacuous thereafter. 
     \item If a reduction takes place on a sub-formula 
       {\small $F_p$} of {\small $F$} in which the shown 
       sub-formula of {\small $F$} occurs 
       as a strict sub-formula, then similar to the case 1) f).   
   \end{enumerate}
   \end{enumerate} 
   By the result of the above bisimulation, we now have  
   {\small $\mathcal{F}(F) = \mathcal{F}(F')$}. However,
   without {\small $\neg$} occurrences in {\small $F$} it takes 
   only those 5 {\small $\gtrdot$} reductions to 
   derive a formula in unit chain expansion; hence we in fact have
   {\small $\mathcal{F}(F) = \mathcal{F}(F_x)$} for some 
   formula {\small $F_x$} in unit chain expansion. But 
   then by Theorem \ref{theorem_1}, there could be 
   only one of {\small $\{0, 1\}$} assigned to {\small $[(\mathsf{I}, \mathsf{J})\models F_x]$}  \\ 
       }
\end{IEEEproof} 
\begin{lemma}[Other bisimulations] 
  For each pair {\small $(F \in \mathfrak{F}, F' \in \mathfrak{F})$} below,
it holds for every valuation frame  
(1) that
{\small $\forall F_1 \in \mathcal{F}(F).\exists F_2 
\in \mathcal{F}(F').[\intFrame \models F_1] = 
[\intFrame \models F_2]$} 
and (2) that {\small $\forall F_2 \in \mathcal{F}(F').\exists F_1 
\in \mathcal{F}(F).[\intFrame \models F_1] = 
[\intFrame \models F_2]$}. 
  Once again, 
  {\small $F$} and {\small $F'$} differ only by 
  the shown sub-formulas.  
  {\small 
  \begin{eqnarray}\nonumber 
    F[\neg (F_a \wedge F_b)] &,& F'[\neg F_a \vee \neg F_b]\\\nonumber
    F[\neg (F_a \vee F_b)] &,& F'[\neg F_a \wedge \neg F_b]\\\nonumber 
    F[s \vee s] &,& F'[s]\\\nonumber
    F[s \vee F_a \vee s] &,& F'[s \vee F_a]\\\nonumber
    F[s \wedge s] &,& F'[s]\\\nonumber
    F[s \wedge F_a \wedge s] &,& F'[s \wedge F_a]\\\nonumber
    F[s^c] &,& F'[\neg s]
  \end{eqnarray} 
  }
  \label{other_bisimulation}
\end{lemma}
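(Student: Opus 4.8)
The plan is to follow the pattern of Lemma~\ref{bisimulation} — a primary induction on the number of reduction steps with a sub-induction on formula size, carried out in each matching direction — but to split the seven pairs into two groups that behave quite differently. For the two De~Morgan pairs and the complement pair the members are linked by a single small step: $F[\neg(F_a \wedge F_b)] \leadsto F'[\neg F_a \vee \neg F_b]$ by $\neg$ reduction 2, $F[\neg(F_a \vee F_b)] \leadsto F'[\neg F_a \wedge \neg F_b]$ by $\neg$ reduction 3, and $F'[\neg s] \leadsto F[s^c]$ by $\neg$ reduction 1. For these I would in fact prove the stronger statement $\mathcal{F}(F) = \mathcal{F}(F')$, from which both clauses follow at once by taking $F_1 = F_2$. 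One inclusion is immediate, since any reduction witnessing a reduct of the target can be prefixed by the single linking step. For the reverse inclusion the point is that the marked negated connective cannot be eliminated in any other way — no rule other than the relevant De~Morgan step, respectively $\neg$ reduction 1, applies to it — and that any $\gtrdot$-reductions performed inside $F_a$ or $F_b$ before the negation is pushed in commute with that linking step, so context reductions and internal reductions of $F_a,F_b$ are matched on the two sides in lock-step.

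By contrast, the idempotence and repetition pairs $(s \vee s,\, s)$, $(s \vee F_a \vee s,\, s \vee F_a)$, and their $\wedge$-duals genuinely require the weaker $\forall\exists$ form, and this is the reason the lemma is stated as it is. Here the marked literal $s \in \mathcal{S}$ is irreducible, so no reduction ever touches it; every reduction acts in the surrounding context or, where present, inside $F_a$, and is mirrored by the identical reduction on the other side, with the duplicate copy of $s$ simply carried along on the $F$-side. Consequently a reduct $F_1 \in \mathcal{F}(F)$ literally retains the pattern $s \vee s$ (or $s \vee F_a \vee s$) where the matching $F_2 \in \mathcal{F}(F')$ carries only $s$ (or $s \vee F_a$), so the two reduction sets differ syntactically and set equality fails. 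Value-equality of the matched pair is then exactly the semantic idempotence $v \vee^{\dagger} v = v$, dually $v \wedge^{\dagger} v = v$, which is available because, by Lemma~\ref{simple_lemma2}, every unit chain expansion is assigned one and only one value $v \in \{0,1\}$.

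To convert the matched syntactic pairs into value-equalities uniformly I would, once the reducts have been driven fully into unit chain expansion, read the required identities off the Boolean algebra $(X, \recurseReduce, \wedge^{\dagger}, \vee^{\dagger})$ supplied by Theorem~\ref{theorem_1}: the De~Morgan clauses reduce to the De~Morgan laws of that algebra, and the complement pair to complementation and double negation (Proposition~\ref{excluded_middle}), using that $s^c$ and $\neg s$ collapse to the same value since $\neg s \leadsto s^c$. Proposition~\ref{simple_proposition} guarantees that each such fully reduced formula carries a single value in $\{0,1\}$, so the comparison is well posed.

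The hard part will be the reverse inclusion in the De~Morgan cases, specifically the bookkeeping when a reduction fires on a subformula that properly contains the marked occurrence, or when pushing the negation inward exposes a fresh negated connective (for example $\neg(s \gtrdot G)$ triggering $\neg$ reduction 4, which duplicates the head $s$). I expect to resolve these exactly as in the corresponding sub-case of the proof of Lemma~\ref{bisimulation}: re-synchronise by applying the same reduction to the corresponding subformula of $F'$ and appeal to the sub-induction on formula size, observing that the negation patterns created on the two sides are identical, so no genuine divergence in the reduction sets can arise.
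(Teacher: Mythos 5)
Your overall strategy coincides with the paper's (Appendix H): a bisimulation argument by induction on the number of reduction steps with a sub-induction on formula size, with the matched reducts compared via the Boolean-algebra identities, and with the idempotence pairs yielding only the $\forall\exists$ form because the duplicate is retained syntactically. There is, however, one concrete soft spot. For the pairs $(s\vee s,\,s)$ and $(s\vee F_a\vee s,\,s\vee F_a)$ you assert that the marked $s$ is irreducible, that ``no reduction ever touches it,'' and that every reduction is ``mirrored by the identical reduction on the other side, with the duplicate copy of $s$ simply carried along.'' This fails exactly when the shown sub-formula sits under a negation that gets pushed inward: if $F=F_a[\neg(F_x\vee s\vee s\vee F_y)]$, then $\neg$ reduction 3 produces $\neg F_x\wedge\neg s\wedge\neg s\wedge\neg F_y$ on the $F$-side but $\neg F_x\wedge\neg s\wedge\neg F_y$ on the $F'$-side, so the residual discrepancy is no longer a $\vee$-duplicate of $s$ but a $\wedge$-duplicate of $\neg s$ (and, after $\neg$ reduction 1, of $s^c$). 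The identical-mirroring claim is therefore false as stated, and your closing paragraph about re-synchronisation is addressed only to the De~Morgan cases, where the created negation patterns really are identical on both sides.

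The repair is exactly what forces the paper to run the induction \emph{simultaneously} over all seven pairs rather than pair by pair: the $(s\vee s,\,s)$ case under a pushed-in negation must hand off to the $(s\wedge s,\,s)$ pair instantiated at $s^c$ (via the $(s^c,\neg s)$ pair), and dually. Your proposed induction, set up independently for each pair in the style of Lemma~\ref{bisimulation}, does not license this cross-reference; you need to state the induction hypothesis jointly for all seven pairs (at a smaller reduction-step count or smaller formula size) so that the $\wedge$-duplicate produced by the negation step is covered. With that adjustment, and noting that your stronger claim $\mathcal{F}(F)=\mathcal{F}(F')$ for the De~Morgan and complement pairs is an unneeded strengthening of the $\forall\exists$ statement actually required, the rest of your argument goes through along the same lines as the paper's.
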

\begin{IEEEproof} 
    By simultaneous induction on the number of reduction steps 
    and a sub-induction on formula size. Details are in Appendix H. 
    \hide{
   By simultaneous induction on reduction steps and by 
   a sub-induction on formula size. One way is trivial. Into the direction to 
   showing that to every reduction on {\small $F$} corresponds reduction(s)
   on {\small $F'$}, we consider each case. For the first case;
     \begin{enumerate}
	 \item If a reduction takes place on a sub-formula 
	   which neither is a sub-formula of the shown sub-formula 
	   nor has as its sub-formula the shown sub-formula, 
	   then we reduce the same sub-formula in {\small $F'$}.   
   Induction hypothesis. 
	 \item If it takes place on a sub-formula of {\small $F_a$} 
	   or {\small $F_b$} then we reduce the same sub-formula 
	   of {\small $F_a$} or {\small $F_b$} in {\small $F'$}.  
Induction hypothesis. 
	 \item If {\small $\neg$} reduction 2 takes place 
	   on {\small $F$} such that we have;  
	   {\small $F[\neg (F_a \wedge F_b)] \leadsto 
	   F_x[\neg F_a \vee \neg F_b]$}, then do nothing 
	   on {\small $F'$}. And {\small $F_x = F'$}. Vacuous thereafter.
	 \item If {\small $\neg$} reduction 2 takes place 
	   on {\small $F$} such that we have; 
	   {\small $F[\neg (F_d \wedge F_e)] 
	   \leadsto F_x[\neg F_d \vee \neg F_e]$} where 
	   {\small $F_d \not= F_a$} and {\small $F_d \not= F_b$}, 
	   then without loss of generality assume that 
	   {\small $F_d \wedge F_{\beta} = F_a$} 
	   and that {\small $F_{\beta} \wedge F_b = F_e$}.  
	   Then we apply {\small $\neg$} reduction 2 on the 
	   {\small $\neg (F_d \wedge F_{\beta})$} in 
	   {\small $F'$} so that we have; 
	   {\small $F'[\neg (F_d \wedge F_{\beta}) \vee 
	   \neg F_b] \leadsto F''[\neg F_d \vee \neg F_{\beta} 
	   \vee \neg F_b]$}. Since 
	   {\small $(F_x[\neg F_d \vee \neg F_e] = ) 
	   F_x[\neg F_d \vee \neg (F_{\beta} \wedge F_b)] 
	   = F'_x[\neg (F_{\beta} \wedge F_b]$} and 
	   {\small $F''[\neg F_d \vee \neg F_{\beta} 
	   \vee \neg F_b] = F'''[\neg F_{\beta} \vee \neg F_b]$} 
	   such that {\small $F'''$} and {\small $F'_x$} differ only 
	   by the shown sub-formulas, we repeat the rest of 
	   simulation on {\small $F'_x$} and {\small $F'''$}.  
Induction hypothesis. 
	 \item If a reduction takes place on a sub-formula 
	   {\small $F_p$} of {\small $F$} in which the shown 
	   sub-formula of {\small $F$} occurs 
	   as a strict sub-formula, then similar to 
	   the 1) f) sub-case in 
	   Lemma \ref{bisimulation}. 
       \end{enumerate}  
       The second case is similar. For the third case; 
       \begin{enumerate}
           \item If a reduction takes place on a sub-formula 
               which neither is a sub-formula of the shown 
               sub-formula nor has as its sub-formula 
               the shown sub-formula, then we reduce 
               the same sub-formula in {\small $F'$}. Induction 
               hypothesis.  
           \item If a reduction takes place on a sub-formula 
               {\small $F_p$} of {\small $F$} in which 
               the shown sub-formula of {\small $F$} occurs 
               as a strict sub-formula, then; 
               \begin{enumerate}  
 \item If the applied reduction is 
                     $\neg$ reduction 2 or 4, then straightforward. 
                  \item If the applied reduction is  
                      $\neg$ reduction 3 such that 
                      {\small $(F =  F_a[\neg (F_x \vee s \vee s \vee F_y)])
                      \leadsto (F_b[\neg F_x \wedge \neg s \wedge \neg s \wedge 
                      \neg F_y] 
                      = F_c[\neg s \wedge \neg s]) \leadsto 
                      F_d[s^c \wedge s^c]$} for some 
                  {\small $F_x$} and {\small $F_y$} (the  
                  last transformation due to simultaneous 
                  induction),
                  then we reduce 
                  {\small $F'$} as follows: 
                  {\small $(F' = F_a'[\neg (F_x \vee s \vee F_y)]) 
                      \leadsto (F_b'[\neg F_x \wedge \neg s \wedge \neg 
                      F_y] = F_c'[\neg s]) \leadsto F_d'[s^c]$}. 
                  Induction hypothesis. Any other cases 
                  are straightforward.  
              \item If the applied reduction is $\gtrdot$ reduction 
                  1-4, then straightforward. 
            \end{enumerate}
       \end{enumerate} 
       Similarly for the remaing ones.  
   }
\end{IEEEproof}
\begin{lemma}[Normalisation without negation]  
  Given a formula {\small $F \in \mathfrak{F}$}, 
  if 
  {\small $\neg$} does not occur in {\small $F$}, then  
  it holds 
  for every valuation frame either that 
  {\small $[\mathfrak{M} \models F_a] = 1$} 
  for all {\small $F_a \in \mathcal{F}(F)$} or else 
  that 
  {\small $[\intFrame \models F_a] = 0$}  
for all {\small $F_a \in \mathcal{F}(F)$}. 
    \label{reduction_without_negation}
\end{lemma}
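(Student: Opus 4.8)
The plan is to derive this lemma as an almost immediate corollary of Lemma~\ref{bisimulation}, which has already absorbed the inductive work. The first fact I would record is that, because no {\small $\neg$} occurs in {\small $F$}, none of the four {\small $\neg$} reductions can ever fire, neither on {\small $F$} nor on any of its reducts: each of the five {\small $\gtrdot$} reductions rewrites its redex using only {\small $\wedge$}, {\small $\vee$} and {\small $\gtrdot$}, so it introduces no new {\small $\neg$} and preserves negation-freeness. Hence every single reduction step available anywhere along a reduction out of {\small $F$} is an instance of one of the five {\small $\gtrdot$} reductions, that is, of one of the five rewrite patterns displayed in Lemma~\ref{bisimulation}.

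Next I would split on whether {\small $F$} is reducible. If {\small $F$} admits no reduction step then, being negation-free, it is already in unit chain expansion: a routine inspection of the redex shapes shows that a negation-free chain that is not a unit chain always exposes a {\small $\gtrdot$} redex (its head is a {\small $\wedge$}- or {\small $\vee$}-formula, or some later component sits in an operand position matching one of the five redex shapes). Thus {\small $\mathcal{F}(F) = \{F\}$} is a singleton, and by Lemma~\ref{simple_lemma2} its sole member receives a value in {\small $\{0,1\}$} for every valuation frame, so the claim holds trivially. Otherwise there is a one-step reduction {\small $F \leadsto G$}. By the previous paragraph this step rewrites some sub-formula by one of the five {\small $\gtrdot$} reductions, so {\small $(F, G)$} is exactly one of the pairs {\small $(F, F')$} of Lemma~\ref{bisimulation} --- the two formulas differ only by the shown redex. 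That lemma then yields, for every valuation frame, {\small $[\intFrame \models F_a] = [\intFrame \models G_b]$} for all {\small $F_a \in \mathcal{F}(F)$} and all {\small $G_b \in \mathcal{F}(G)$}.

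To finish I would fix one witness {\small $G_b^{*} \in \mathcal{F}(G)$}, which exists because Lemma~\ref{normalisation_without_negation} guarantees that the negation-free {\small $G$} reduces to at least one formula in unit chain expansion. The displayed equality then reads {\small $[\intFrame \models F_a] = [\intFrame \models G_b^{*}]$} for every {\small $F_a \in \mathcal{F}(F)$}; since the right-hand side is one fixed element of {\small $\{0,1\}$} (again by Lemma~\ref{simple_lemma2}), all members of {\small $\mathcal{F}(F)$} share this single common value under the given frame, which is precisely the disjunction asserted by the lemma.

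I do not expect a genuine obstacle, since the quantifier structure of Lemma~\ref{bisimulation} already delivers ``\emph{all} reducts agree'' rather than merely ``\emph{some} reduct agrees'', so no further induction is needed here. The only points that demand care are the two preliminary facts folded into the first two paragraphs: that negation-freeness is invariant under the {\small $\gtrdot$} reductions, so that the {\small $\neg$} reductions stay inapplicable and every step is a bisimulation instance; and that a negation-free irreducible formula genuinely lies in unit chain expansion, which is what collapses the base case to a one-element set instead of a separate analysis.
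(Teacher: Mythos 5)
Your proposal is correct and matches the paper, whose entire proof of this lemma is the single line ``Consequence of Lemma~\ref{bisimulation}''; you have simply made explicit the glue the paper leaves implicit (preservation of negation-freeness, the irreducible base case, and non-emptiness of {\small $\mathcal{F}(G)$} via Lemma~\ref{normalisation_without_negation}).
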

\begin{IEEEproof} 
  Consequence of Lemma \ref{bisimulation}.  
\end{IEEEproof} 
\hide{
\begin{lemma}
  For any {\small $F$} in unit chain expansion it holds that 
  {\small $[\models (s_0 \gtrdot s_1 \gtrdot \dots \gtrdot s_k) \gtrdot 
  F] = 
  [\models (s_0 \gtrdot F) 
  \wedge ((s_0 \gtrdot s_1 \gtrdot \dots \gtrdot s_k) 
  \vee ((s_0 \gtrdot s_1 \gtrdot F) \wedge  
  (s_0 \gtrdot s_1 \gtrdot s_2 \gtrdot F) \wedge 
  \dots \wedge 
  (s_0 \gtrdot s_1 \gtrdot \dots \gtrdot s_{k} \gtrdot F)))
  $}  
  \label{prop_unit_chain}
\end{lemma}
\begin{proof}
  By induction on {\small $k$}. \\
\end{proof} 

\begin{lemma}
  Given two formulas 
  {\small $F_x = \neg (s_0 \gtrdot F_1 \gtrdot F_2 \dots \gtrdot F_k)$}   
  and {\small $F_y = s_0^c \vee (s_0 \gtrdot 
  \neg (F_1 \gtrdot F_2 \dots \gtrdot F_k))$} 
  for {\small $k \in \mathbb{N}$}, 
  denote the set of formulas in unit chain expansion which 
  {\small $F_x$} can reduce into by {\small $Y$} and 
  that which {\small $F_y$} can by {\small $Z$}. 
  Then {\small $Y = Z$}. 
  \label{identity_lemma}
\end{lemma} 
\begin{proof}
  {\small $s_0$} cannot be farther reduced in {\small $F_x$}. 
  So the initial 
  reduction of {\small $F_x$} 
  is either via {\small $\neg$} reduction 4 for the outermost 
  {\small $\neg$} or else via some reduction for {\small $F_1 \gtrdot 
  F_2 \dots \gtrdot F_k$}, if {\small $k \not= 0$}. If the former, 
  then the reduction produces {\small $F_y$}. Hence 
  clearly {\small $Y = Z$}. On the other hand, if the latter, we have 
  {\small $F_x \leadsto_r \neg (s_0 \gtrdot F_p)$} 
  for some {\small $F_p \in \mathfrak{F}$} whatever 
  the reduction rule {\small $r$} is. But then we have 
  {\small $F_x \leadsto_r \neg (s_0 \gtrdot F_p) \leadsto_{\neg\; 4} 
  s_0^c \vee (s_0 \gtrdot \neg F_p)$}, which is 
  none other than the formula that can be reached via 
  {\small $\neg\; 4$} and then the {\small $r$}. {\small $Y = Z$}, as 
  required. This result is general in that the number of 
  reductions to take place before the outermost {\small $\neg$} 
  is reduced does not change the validity of the given proof (straightforward by induction; left to readers).\\
\end{proof}    
\hide{
\begin{lemma}[Docking principle]  
  Given any formula {\small $F \in \mathfrak{F}$}, 
  denote by {\small $\normalised(F)$} 
  one of the reduced formulas of its in unit chain expansion. 
  For any formula {\small $F_1, F_2, F_3$} in unit chain expansion 
  and for any interpretation frame, 
  if {\small $[\models F_1] = [\models F_2] = 1$},  
  then {\small $[\models \normalised(F_1 \gtrdot F_3) \vee 
  \normalised(F_2 \gtrdot F_3)]  = 
  [\models \normalised(F_1 \gtrdot F_3) \wedge \normalised(F_2 \gtrdot F_3)]$}. 
  \label{docking_lemma}
\end{lemma}
\begin{proof}  
  Note that by Lemma \ref{normalisation_without_negation},  
  both {\small $\normalised(F_1 \gtrdot F_3)$} and 
  {\small $\normalised(F_2 \gtrdot F_3)$} are uniquely determined. \\
\indent  Now the main part. Assume with no loss of generality that there 
occur 
  {\small $m$} unit chains in {\small $F_1$} and 
  {\small $n$} unit chains in {\small $F_2$} 
  for some {\small $m, n \in \mathbb{N}$}. 
  Observe here that any chain {\small $f_i$} in {\small $F_1$} ({\small 
  $0 \le i \le m$})
  and also any chain {\small $f_j$} in 
  {\small $F_2$} ({\small $0 \le j \le n$}) for which 
  {\small $[\models f_k] = 0$}, {\small $k \in \{i, j\}$}, 
  remains 0 no matter how long it is extended with other formula(s), 
  \emph{i.e.} {\small $[\models \normalised(f_k \gtrdot F_x)] = 0$} 
  for any {\small $F_x \in \mathfrak{F}$}. Since generally 
  {\small $\normalised((F_p \wedge F_q) \gtrdot F_r) 
  = \normalised(F_p \gtrdot F_r) \wedge \normalised(F_q \gtrdot F_r)$} 
  and {\small $\normalised((F_p \vee F_q) \gtrdot F_r) 
  = \normalised(F_p \gtrdot F_r) \vee \normalised(F_q \gtrdot F_r)$} 
  (trivial; proof left to readers), we may simplify 
  {\small $F_1$} and {\small $F_2$} into {\small $F'_1$} 
  and {\small $F'_2$} by the following reductions: 
  \begin{enumerate}
    \item if there occurs in {\small $F_1$} or in {\small $F_2$} a sub-formula 
      {\small $f_1 \wedge F_y$} or 
      {\small $F_y \wedge f_1$} for 
      some {\small $f_1 \in \mathfrak{U} \cup \mathcal{S}$} 
      and some {\small $F_y \in \mathfrak{F}$} (but necessarily 
      in unit formula expansion), then 
      replace the sub-formula with {\small $f_1$}. 
    \item if there occurs in {\small $F_1$} or in {\small $F_2$} 
      a sub-formula {\small $f_1 \vee F_y$} 
      or {\small $F_y \vee f_1$} 
      for some {\small $f_1 \in \mathfrak{U} \cup \mathcal{S}$} 
      and some {\small $F_y \in \mathfrak{F}$} (but of course 
      necessarily in unit formula expansion), then
      replace the sub-formula with {\small $F_y$}. 
  \end{enumerate}
  {\small $F'_1$} and {\small $F'_2$} then comprise 
  elements of {\small $\mathfrak{U} \cup \mathcal{S}$} 
  such that for each {\small $f_r$} of them, 
  {\small $[\models f_r] = 1$}. Now also prepare a procedure 
  to generate unit chains and also simplify them. And show 
  that at each depth is found {\small $F_3$}, and that 
  the evaluation clearly only depends on what {\small $F_3$} 
  is, and not what {\small $F_1$} or {\small $F_2$} is. 
\end{proof} 
} 
}
  \begin{theorem}[Normalisation]
    Given a formula {\small $F \in \mathfrak{F}$}, 
    denote the set of formulas in unit chain expansion 
    that it can reduce into by {\small $\mathcal{F}_1$}. 
    Then it holds 
    for every valuation frame 
    either that {\small $[\intFrame \models F_a] = 1$} 
    for all {\small $F_a \in \mathcal{F}_1$} or else 
    that 
    {\small $[\intFrame \models F_a] = 0$} for all {\small $F_a \in 
    \mathcal{F}_1$}.
    \label{theorem_normalisation}
  \end{theorem}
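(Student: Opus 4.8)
The plan is to fix an arbitrary valuation frame $\intFrame$ and to show that the value-set $V(F) := \{[\intFrame \models F_a] \mid F_a \in \mathcal{F}_1\}$ is a singleton; since $\mathcal{F}_1 \neq \emptyset$ by Lemma \ref{reduction_result} and $V(F) \subseteq \{0,1\}$ by Lemma \ref{simple_lemma2}, this is exactly the assertion to be proved. The engine of the argument is the fact, already packaged in Lemma \ref{bisimulation} and Lemma \ref{other_bisimulation}, that contracting a single redex preserves $V$: the former settles the five $\gtrdot$ rewrites in a negation-free context, while the latter settles the De~Morgan rewrites ($\neg$ reductions $2$ and $3$), the elementary idempotence simplifications, and the $\neg s \leftrightarrow s^c$ equivalence ($\neg$ reduction $1$), each as an equality of value-sets between a formula and its one-step reduct.

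First I would dispose of the base case $\negMax(F)=0$, where no $\neg$ occurs: this is precisely Lemma \ref{reduction_without_negation}. For the inductive step I would argue by induction on $\negMax(F)$ with a sub-induction on $\formulasize(F)$, mirroring the termination structure used in Lemma \ref{reduction_result}. Because $\mathcal{F}_1 = \mathcal{F}(F) = \bigcup_{F \leadsto G} \mathcal{F}(G)$ whenever $F$ is not already in unit chain expansion, it suffices to show that every one-step reduct $G$ has $V(G)$ a singleton and that all these singletons agree; the bisimulation lemmas supply the agreement, since whichever redex is contracted the value-set of the whole formula is unchanged, so $V(G)$ does not depend on the choice of $G$.

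To drive the measure down I would locate an innermost negation, i.e. a sub-formula $\neg F'$ whose operand $F'$ contains no $\neg$; by Lemma \ref{normalisation_without_negation} $F'$ reduces to a formula $F''$ in unit chain expansion, and by Proposition \ref{special_reduction} the negated block $\neg F''$ reduces uniquely to $\recurseReduce(F'')$, which is negation-free. Replacing $\neg F'$ by $\recurseReduce(F'')$ therefore strictly lowers $\negMax$ of the enclosing formula, so the primary induction hypothesis applies; the $\neg$ reductions $1$–$4$ (the last handled by unfolding $\neg(s \gtrdot F_2) \mapsto s^c \vee (s \gtrdot \neg F_2)$ and invoking the sub-induction on size) are exactly the steps performing this replacement, and Lemma \ref{other_bisimulation} certifies that each leaves $V$ intact. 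Once every negation has been pushed to a complement, the surviving redexes are the five $\gtrdot$ rewrites in a negation-free formula, where Lemma \ref{bisimulation} forces a single value and Theorem \ref{theorem_1} guarantees it is well-defined, completing the induction.

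The main obstacle I anticipate is the interaction between the $\gtrdot$ reductions and negation: Lemma \ref{bisimulation} is proved only for negation-free formulas, yet an arbitrary reduction order may interleave $\gtrdot$ rewrites with still-present negations, so what is really needed is a confluence-up-to-value argument across all reduction orders. I expect to resolve this by using Proposition \ref{special_reduction} to confine each negation to a block that reduces uniquely, so that any $\gtrdot$ rewrite can be permuted past such a block without altering $V$; the case-by-case verification that this permutation is always value-preserving, according to whether the firing rule acts inside or outside the negated block, is where the genuine labour lies.
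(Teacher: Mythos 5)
Your plan coincides with the paper's own proof: induction on $\negMax$ with a sub-induction on formula size, Lemma \ref{reduction_without_negation} for the negation-free base case, and then an argument that an innermost $\neg F'$ may without loss of generality be reduced by first normalising $F'$ to unit chain expansion and only then firing the outermost $\neg$, the interleaving of $\gtrdot$ reductions with pending negations being handled by a case-by-case commutation certified through Lemmas \ref{bisimulation} and \ref{other_bisimulation}. The ``genuine labour'' you defer at the end is precisely the content of the paper's Appendix I (a contradiction argument running through each reduction rule that could fire inside $\neg F_z$ before the outer negation is eliminated), so your proposal is essentially the paper's proof with that case analysis left as future work.
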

  \begin{IEEEproof}    
    By induction on maximal number of {\small $\neg$} nestings 
    and a sub-induction on formula size. We quote Lemma 
    \ref{reduction_without_negation} for base cases. Details 
    are in Appendix I. \\ 
    \hide{
    For inductive cases, assume that the current theorem 
    holds true for all the formulas with {\small $\negMax(F_0)$} 
    of up to {\small $k$}. Then we conclude by showing 
    that it still holds true for all the formulas with 
    {\small $\negMax(F_0)$} of {\small $k+1$}. First we note that 
    there applies no {\small $\neg$} reductions on {\small $\neg F_x$} 
    if {\small $F_x$} is a chain whose head is not 
    an element of {\small $\mathcal{S}$}. But this is 
    straightforward from the descriptions of the reduction rules. \\
    \indent On this observation we show that 
    if we have a sub-formula {\small $\neg F_x$} such that 
    no {\small $\neg$} occurs in {\small $F_x$}, then 
    {\small $F_x$} can be reduced into a formula in unit chain 
    expansion with no loss of generality, prior to 
    the reduction of the outermost {\small $\neg$}. Then 
    we have the desired result by induction hypothesis 
    and the results in the previous sub-section. But suppose otherwise.  
    Let us denote by {\small $\mathcal{F}$} the set of formulas in unit chain 
    expansion that {\small $\neg F_x'$} reduces into where
    {\small $F_x'$} is a unit chain expansion of {\small $F_x$}.
    Now suppose there exists {\small $F_y$} in unit chain 
    expansion that 
    {\small $\neg F_x$} can reduce into if the outermost 
    {\small $\neg$} reduction applies before {\small $F_x$} 
    has 
    reduced into a formula in unit chain expansion such as to 
    satisfy that {\small $[(\mathsf{I}, \mathsf{J})\models F_y] \not= [(\mathsf{I}, \mathsf{J})\models F_{\beta}]$} 
    for some {\small $F_{\beta} \in \mathcal{F}$}. 
    We here have; \\
    {\small $\neg F_x \leadsto^*_{\{\gtrdot\!\! \text{ 
    reductions only}\}} \neg F_z \leadsto^*_{\{\gtrdot\!\! \text{ 
    reductions only}\}} 
    \neg F_x' \leadsto^+_{\{\neg \text{ reductions
    only}\}}\!\! F_{\beta}$} and 
    {\small $\neg F_x \leadsto^*_{\{\gtrdot \text{ 
    reductions only}\}} \neg F_z \leadsto_{\neg \text{ reduction}}
    F_z' \leadsto^* F_y$} where {\small $\neg^{\dagger} 
    \exists F_{zz}.F_z' = \neg F_{zz}$}. \\
    Hence for our supposition to hold, it must satisfy that 
    there exists no bisimulation between {\small $F'_z$} 
    and {\small $\neg F_z$}. But because it is trivially provable 
    that to each reduction on {\small $F'_z$} corresponds 
    reduction(s) on {\small $\neg F_z$} (, for we can choose 
    to apply the {\small $\neg$} reduction on {\small $\neg F_z$} 
    to gain {\small $F'_z$},) it must in fact satisfy that 
    not to each reduction on {\small $\neg F_z$} corresponds 
    reduction(s) on {\small $F'_z$}. Consider what reduction 
    applies on a sub-formula of {\small $\neg F_z$}: 
    \begin{enumerate}
      \item any {\small $\neg$} reduction: 
	Then the reduction generates {\small $F'_z$}. 
	A contradiction to supposition has been drawn. 
      \item {\small $\gtrdot$} reduction 1:   
	   Consider how {\small $F_z$} looks like:  
	   \begin{enumerate}
	     \item {\small $F_z = F_1[(F_u \gtrdot F_v) 
	       \gtrdot F_w] \wedge F_2$}:  But then 
	       the same reduction can take place 
	       on {\small $F_z' = 
	       \neg F_1[(F_u \gtrdot F_v) \gtrdot F_w] 
	       \vee \neg F_2$}. Contradiction.  
	     \item {\small $F_z = F_1 \wedge F_2[(F_u \gtrdot F_v) 
	       \gtrdot F_w]$}: Similar. 
	     \item {\small $F_z = F_1[(F_u \gtrdot F_v) 
	       \gtrdot F_w] \vee F_2$}: Similar. 
	     \item {\small $F_z = F_1 \vee F_2[(F_u \gtrdot F_v) 
	       \gtrdot F_w]$}: Similar.  
	     \item {\small $F_z = (F_u \gtrdot F_v) \gtrdot F_w$}: 
	       This case is impossible due to the observation given 
           earlier in the current proof. 
	     \item {\small $F_z = (F_1[(F_u \gtrdot F_v) \gtrdot F_w] 
	       \gtrdot F_2) \gtrdot F_3$}: Similar.  
	     \item The rest: all similar. 
	   \end{enumerate}
      \item {\small $\gtrdot$} reduction 2: Similar. 
      \item {\small $\gtrdot$} reduction 3: Similar. 
      \item {\small $\gtrdot$} reduction 4: Consider 
	how {\small $F_z$} looks like: 
	\begin{enumerate}
	  \item {\small $F_z = s \gtrdot (F_1 \wedge F_2)$}:  
            Then {\small $\neg F_z \leadsto  
	    \neg ((s \gtrdot F_1) \wedge (s \gtrdot F_2))$}. 
	    But by Lemma \ref{other_bisimulation}, it does not cost generality if 
	    we reduce the {\small $\neg$} to have; 
	    {\small $\neg ((s \gtrdot F_1) \wedge (s \gtrdot F_2)) 
	    \leadsto \neg (s \gtrdot F_1) \vee \neg (s \gtrdot F_2)$}. 
            Meanwhile 
	    {\small $F'_z = s^c \vee (s \gtrdot \neg (F_1 \wedge F_2))$}.
	    By Lemma \ref{other_bisimulation}, it does not cost generality if 
	    we have {\small $F''_z = s^c \vee (s \gtrdot (\neg F_1 
	    \vee \neg F_2))$} instead of {\small $F'_z$}.  
	    But it also does not cost generality (by 
	    Lemma \ref{bisimulation}) 
	    if we have {\small $F'''_z = s^c \vee 
	    (s \gtrdot \neg F_1) \vee (s \gtrdot \neg F_2)$} instead 
	    of {\small $F''_z$}. But by Lemma \ref{other_bisimulation}, 
	    it again does not cost generality 
	    if we have {\small $F''''_z = s^c \vee (s \gtrdot \neg F_1) 
	    \vee s^c \vee (s \gtrdot \neg F_2)$} instead.
	    Therefore we can conduct 
	    bisimulation between {\small $\neg (s \gtrdot F_1)$} 
	    and {\small $s^c \vee (s \gtrdot \neg F_1)$} and between 
	    {\small $\neg (s \gtrdot F_2)$} 
	    and {\small $s^c \vee (s \gtrdot \neg F_2)$}. 
	    Since each of {\small $\neg (s \gtrdot F_1)$} 
	    and {\small $\neg (s \gtrdot F_2)$} has a strictly 
	    smaller formula size than 
	    {\small $\neg (s \gtrdot (F_1 \wedge F_2))$}, (sub-)induction
	    hypothesis. Contradiction.  
	  \item The rest: Trivial. 
	\end{enumerate}
      \item {\small $\gtrdot$} reduction 5: Similar. 
    \end{enumerate}
}
\end{IEEEproof}
\hide{
\begin{lemma}[Commutativity of reductions] 
  Given any {\small $F_0 \leadsto F_1 \leadsto \cdots 
  \leadsto F_k$} for {\small $k \in \mathbb{N}$}, if   
  there exists no {\small $i$} ranging over 
  {\small $\{0, \dots, k\}$} such that 
  {\small $F_i = F_i[\neg F_a[(F_b \wedge F_c) \gtrdot 
  F_d]]$}, that 
  {\small $F_i = F_i[\neg F_a[(F_b \vee F_c) \gtrdot 
  F_d]]$} or that 
  {\small $F_i = F_i[\neg F_a[(F_b \gtrdot F_c) \gtrdot 
  F_d]]$} for some {\small $F_a, F_b, F_c, F_d \in 
  \mathfrak{F}$}, then 
  it holds that  
  {\small $\forall F_{\alpha}, F_{\beta} \in \mathfrak{F}. 
  (F_0 \leadsto^* F_{\alpha}) \wedge^{\dagger} 
  (F_0 \leadsto^* F_{\beta}) \rightarrow^{\dagger}   
  (F_{\alpha} = F_{\beta})$}. 
  \label{commutativity_reductions}
\end{lemma}
\begin{proof}
  By induction on the length of the reduction steps (the smallest 
  number possible is 0 in which 
  {\small $F_0$} is irreducible). Vacuous when it is 0. 
  Also vacuous when it is 1 since, given a reduction rule, the 
  reduction is determinate. Now assume that the current
  lemma holds true for any steps {\small $2i$} for 
  {\small $i \in \mathbb{N}$}. We need to show that 
  it still holds true for {\small $2i + 1$} and 
  {\small $2i + 2$} steps. For the former, consider 
  which reduction rule applied first. Then 
  it is trivial by induction hypothesis on the rest of 
  the reduction steps that we have the desired result. 
  For the latter, we have 
  some reduction {\small $F_0 \leadsto_{r_0} F_1 \leadsto_{r_1} 
  F_2 \leadsto^{2i} F_{2i + 2}$}. Consider what these  
  two rules {\small $(r_0, r_1)$} are. 
  \begin{enumerate}
    \item ({\small $\neg$} reduction 1, 
      {\small $\neg$} reduction 1): 
      Vacuously {\small $F_0 \leadsto_{r_1} F_1 \leadsto_{r_0} 
      F_2$}. 
    \item ({\small $\neg$} reduction 1, 
      {\small $\neg$} reduction 2): 
      {\small $F_0$} contains a sub-formula 
      {\small $\neg (F_a \wedge F_b)$} and 
      another sub-formula {\small $\neg s_c$}. If 
      the latter does not occur within the former, 
      vacuously {\small $F_0 \leadsto_{r_1} F_1 \leadsto_{r_0}
      F_2$}. The former cannot occur within the latter, 
      on the other hand. Otherwise, if 
      the latter occurs within the former, then 
      {\small $\neg (F_a \wedge F_b) =  
      \neg (F_a[\neg s] \wedge F_b)$}. 
      Then {\small $F_0[\neg (F_a[\neg s] \wedge F_b)] 
      \leadsto_{r_0} F_1[\neg (F_c[s] \wedge F_b)] 
      \leadsto_{r_1} F_2[\neg F_c[s] \vee \neg F_b]$}. 
      Reversing the two rules, we instead have 
      {\small $F_0[\neg (F_a[\neg s] \wedge F_b)] 
      \leadsto_{r_1} F_3[\neg F_a[\neg s] \vee \neg F_b] 
      \leadsto_{r_2} F_2[\neg F_c[s] \vee \neg F_b]$} 
      to derive the same formula.\footnote{Note that 
      {\small $F_1$}, {\small $F_2$}, 
      {\small $F_3$} and 
      {\small $F_c$} are precisely determined in either of 
      the reduction steps because the reduction rules 
      for a given formula determine the result of a 
      reduction.} 
    \item({\small $\neg$} reduction 1, \{{\small $\neg$} 
      reduction 3, {\small $\neg$} reduction 4, 
      {\small $\gtrdot$} reduction 1, {\small $\gtrdot$} reduction 2, 
      {\small $\gtrdot$} reduction 3\}):   
      similar. 
    \item({\small $\neg$} reduction 2, {\small $\neg$} reduction 2):   
      {\small $F_0$} has sub-formulas {\small $\neg 
      (F_a \wedge F_b)$} and {\small $\neg (F_c \wedge F_d)$}. 
      If neither occurs within the other, then 
      vacuously 
      {\small $F_0 \leadsto_{r_0} F_1 \leadsto_{r_1} F_2$} 
      and {\small $F_0 \leadsto_{r_1} F_3 \leadsto_{r_0} F_2$} 
      for some formula {\small $F_3$} (which is again 
      precisely determined from {\small $F_0$} and 
      {\small $r_1$}). Otherwise, assume 
      that 
      {\small $F_0 = F_0[\neg (F_a[\neg (F_c \wedge F_d)] \wedge 
      F_b)]$}. Assume without a loss of generality that 
      {\small $r_0$} acts 
      upon {\small $\neg (F_a \wedge F_b)$} and 
      {\small $r_1$} upon the other sub-formula. Then 
      we have; 
      {\small $F_0 \leadsto_{r_0}  
      F_1[\neg F_a[\neg (F_c \wedge F_d)] \vee \neg F_b] 
      \leadsto_{r_1} 
      F_2[\neg F_e[\neg F_c \vee \neg F_d] \vee \neg F_b]$}, 
      and  
      {\small $F_0 \leadsto_{r_1} 
      F_3[\neg (F_f[\neg F_c \vee \neg F_d] \wedge F_b)] 
      \leadsto_{r_0} 
      F_2[\neg F_e[\neg F_c \vee \neg F_d] \vee \neg F_b]$}. 
      The remaining possibilities are all similar. 
    \item ({\small $\neg$} reduction 2, \{{\small $\neg$} 
      reduction 3, {\small $\neg$} reduction 4\}): Apart from  
      which sub-formula is inside which changes appearance of 
      {\small $F_0$}, similar. 
    \item ({\small $\neg$} reduction 2, 
      {\small $\gtrdot$} reduction 1): 
      {\small $F_0$} has two sub-formulas 
      {\small $\neg (F_a \wedge F_b)$} 
      and {\small $(F_c \gtrdot F_d) \gtrdot F_e$}. 
      Straightforward if neither occurs within the other. 
      Otherwise, first consider 
      cases where 
      the former occurs within the latter. One of 
      them is {\small $F_0 = 
      F_0[(F_c[\neg (F_a \wedge F_b)] \gtrdot F_d) \gtrdot F_e]$}.  
      Assume without a loss of generality that  
      {\small $r_0$} acts upon 
      {\small $\neg (F_a \wedge F_b)$} and that 
      {\small $r_1$} acts upon 
      {\small $(F_c \gtrdot F_d) \gtrdot F_e$}. Then we have; \\
      {\small $F_0 \leadsto_{r_0} 
      F_1[(F_{c'}[\neg F_a \vee \neg F_b] \gtrdot F_d) \gtrdot F_e] 
      \leadsto_{r_1} 
      F_2[F_{c'} \gtrdot ((F_d \wedge F_e) \vee ((F_d \gtrdot F_e) \wedge 
      F_e))]$}, and 
      {\small $F_0 \leadsto_{r_1} 
      F_3[F_c \gtrdot ((F_d \wedge F_e) \vee ((F_d \gtrdot F_e)
      \wedge F_e))] \leadsto_{r_0} 
      F_2$}. By the given assumption, it does not happen 
      that the latter occurs within the former. 
      Otherwise, similar for the remaining cases. 
    \item ({\small $\neg$} reduction 2, 
      \{{\small $\gtrdot$} reduction 2, {\small $\gtrdot$} reduction 3\}): 
      similar. 
    \item ({\small $\neg$} reduction 3, \ldots): similar.  
    \item ({\small $\neg$} reduction 4, {\small $\gtrdot$} reduction 1): 
      {\small $F_0$} has two sub-formulas
      {\small $\neg (F_a \gtrdot F_b)$} and 
      {\small $(F_c \gtrdot F_d) \gtrdot F_e$}. 
      By the given assumption it does not happen 
      that the latter occurs in the former. The remaining cases 
      are trivial. 
    \item ({\small $\neg$} reduction 4, 
      {\small $\gtrdot$} reduction 2):
      {\small $F_0$} has two sub-formulas
      {\small $\neg (F_a \gtrdot F_b)$} and 
      {\small $(F_c \wedge F_d) \gtrdot F_e$}. 
      Most of 
      the cases are straightforward; and  
      {\small $F_0 = F_0[\neg F_a[(F_c \wedge F_d) \gtrdot F_e] 
      \gtrdot F_b]$} or {\small $F_0 = 
      F_0[\neg F_a[(F_c \wedge F_d) \gtrdot F_e] 
      ]$}
      is, due to the assumption of the current lemma, not possible. 
      But the remaining cases are trivial. 
    \item ({\small $\neg$} reduction 4, 
      {\small $\gtrdot$} reduction 3): similar.  
  \end{enumerate}
\end{proof}
\begin{theorem}
  Given any formula {\small $F \in \mathfrak{F}$}, 
  for all the unit trees that {\small $F$} possibly 
  reduces into, if some of them is assigned 1, then 
  so are all the others; and if some of them is assigned 0, 
  then so are all the others. 
  \label{normalisation}
\end{theorem}
\begin{proof}    
  We need to show that 
  {\small $\forall F_a, F_b, F_c \in \mathfrak{F}.
  (F_a \leadsto^* F_b) \wedge^{\dagger} 
  (F_a \leadsto^* F_c) \rightarrow^{\dagger}  
  ([\models_{\epsilon} F_b] \doteq [\models_{\epsilon} F_c])$}.  
  The proof is by induction on the maximal formula size  
  of {\small $F_a \in \mathfrak{F}$} 
   in the form {\small $\neg F_a$}.  

  occurrences of formulas in the form 
  {\small $\neg ((F_a \gtrdot F_b) \gtrdot F_c)$} for 
  some {\small $F_a, F_b, F_c \in \mathfrak{F}$}, 
  a sub-induction on the sum of the formula size of  
  each such occurrence and a sub-sub-induction 
  on the length of the reduction. Let us consider base cases where, 
  throughout reductions, there occur no such formulas. 
  We show that {\small $F$} reduces uniquely into 
  some unit tree {\small $F_{\alpha}$} (upon which the result 
  in the previous sub-section applies). 
  Observe that for any reduction rule {\small $r$} and 
  for any {\small $F_1 \in \mathfrak{F}$}, there is a unique 
  {\small $F_2 \in \mathfrak{F}$} such that 
  {\small $F_1 \leadsto_r F_2$} (obvious). Hence we simply

  By induction on the number of steps into a unit tree. 
  If {\small $F$} is a unit tree, then vacuous by the results 
  of the previous sub-section. Otherwise, we assume that 
  the current theorem holds true for all the numbers of 
  steps up to {\small $k \in \mathbb{N}$} and on the assumption 
  show that it still holds true with the reduction with 
  {\small $k+1$} steps. Consider which reduction rule applied 
  initially. 
  \begin{enumerate}
    \item {\small $  $} 
  \end{enumerate}<++>
\end{proof}
\begin{lemma}[Mutual exclusion]
  Given any {\small $F \in \mathfrak{F}$}, 
  if\linebreak {\small $[\models_{\epsilon} F] = 1$}, 
  then {\small $[\models_{\epsilon} F] \not= 0$}; 
  and if {\small $[\models_{\epsilon} F] = 0$},
  then {\small $[\models_{\epsilon} F] \not= 1$}. 
  \label{mutually_exclusive}
\end{lemma}
\begin{proof}
  It suffices to show that the disjunctive normal form 
  in gradual classical logic is indeed a normal form. 
  But then it suffices to show that  
  {\small $(\mathfrak{Y}, \oplus, \odot)$} obeys the law 
  of Boolean algebra. Associativity, commutativity and  
  distributivity hold by definition. We show the 
  idempotence: 
\end{proof}
}
By the result of Theorem \ref{theorem_1} and Theorem \ref{theorem_normalisation}, we may 
define implication: {\small $F_1 \supset F_2$} to be 
an abbreviation of {\small $\neg F_1 \vee F_2$} - {\it exactly the same} -
as in classical logic.   
\section{Decidability}  
We show a decision procedure {\small $\oint$} for 
universal validity 
of some input formula {\small $F$}. Here, 
{\small $z: Z$} for some 
{\small $z$} and {\small $Z$} denotes a variable {\small $z$} 
of type {\small $Z$}. 
   Also assume a terminology of `object level', which is defined inductively. 
Given {\small $F$} in unit chain expansion, (A) if
   {\small $s \in \mathcal{S}$} in {\small $F$} occurs 
   as a non-chain or 
   as a head of a unit chain, then 
   it is said to be at the 0-th object level. 
   (B) if it 
   occurs in a unit chain 
   as {\small $s_0 \gtrdot \dots \gtrdot s_k \gtrdot s$} or 
   as {\small $s_0 \gtrdot \dots \gtrdot s_k  \gtrdot 
       s \gtrdot ...$}
   for some {\small $k \in \mathbb{N}$} 
   and some {\small $s_0, \dots, s_k \in \mathcal{S}$}, 
   then it is said to be at the (k+1)-th object level. 
   Further, assume a function {\small $\code{toSeq}: 
       \mathbb{N} \rightarrow \mathcal{S}^*$} satisfying
   {\small $\code{toSeq}(0) = \epsilon$} 
   and {\small $\code{toSeq}(k+1) = \underbrace{\top.
           \dots.\top}_{k+1}$}. 
\begin{description}
    \item[{\small $\oint(F: \mathfrak{F},   
            \code{object}\_\code{level} : \mathbb{N} 
            )$}]{\ }\\ \textbf{returning 
            either 0 or 1}\\
    $\backslash\backslash$ This pseudo-code uses 
    {\small $n, o:\mathbb{N}$}, {\small $F_a, 
        F_b:\mathfrak{F}$}. \\
    \textbf{L0: } Duplicate {\small $F$} and 
    assign the copy to {\small $F_a$}. 
    If {\small $F_a$} is not already
    in unit chain expansion, then reduce it into  
    a formula in unit chain expansion. \\ 
\textbf{L1: } {\small $F_b := \squash(F_a, \code{object}\_\code{level})$}. \\  
\textbf{L2: }  {\small $n := \code{COUNT}\_\code{DISTINCT}(F_b)$}.\\   
\textbf{L3$_0$: } For each {\small $\mathsf{I}: 
    \code{toSeq}(\code{object}\_\code{level}) \times \mathcal{S}$} distinct
for the {\small $n$} elements of {\small $\mathcal{S}$} 
at the given object level, 
    Do:  \\  
    \textbf{L3$_1$: } If {\small $\sat(F_b, 
        \mathsf{I})$}, 
    then go to \textbf{L5}.\\   
    \textbf{L3$_2$: } Else if no 
    unit chains occur in {\small $F_a$}, 
    go to \textbf{L3}$_5$.\\ 
    \textbf{L3$_3$: } 
    {\small $o := \oint(\code{REWRITE}(F_a, \mathsf{I}, 
        \code{object}\_\code{level}),$}\\ 
        {\small $\code{object}\_\code{level} + 1)$}. \\ 
    \textbf{L3$_4$: } 
    If {\small $o = 0$}, go to \textbf{L5}. \\
    \textbf{L3$_5$: } End of For Loop. \\
    \textbf{L4: } return 1. $\backslash\backslash$ Yes.\\
    \textbf{L5: } return 0. $\backslash\backslash$ No.\\
\end{description}    
\begin{description}
    \item[\squash({\small $F: \mathfrak{F}, \code{object}\_\code{level}: \mathbb{N}$}) returning 
    {\small $F': \mathfrak{F}$}]
    {\ }\\ 
    \textbf{L0}: {\small $F' := F$}.  \\
    \textbf{L1}: For every 
    {\small $s_0 \gtrdot s_1 \gtrdot \dots \gtrdot s_{k}$} 
    for some {\small $k \in \mathbb{N}$} greater than 
    or equal to \code{object}\_\code{level} 
    and 
    some {\small $s_0, s_1, \dots, s_{k} \in \mathcal{S}$} 
    occurring 
    in {\small $F'$}, replace it with 
    {\small $s_0 \gtrdot \dots \gtrdot s_{\code{object}\_
            \code{level}}$}. \\
    \textbf{L2}: return {\small $F'$}. 
\end{description} 
\begin{description}
  \item[$\code{COUNT}\_\code{DISTINCT}(F : \mathfrak{F})$ returning  
    {\small $n : \mathbb{N}$}]  
    {\ } \\
    \textbf{L0}: 
    return {\small $n:=$} (number of distinct members 
    of {\small $\mathcal{A}$} in {\small $F$}
     ). 
\end{description}
\begin{description}
  \item[\sat({\small $F: \mathfrak{F}, \mathsf{I}: \mathsf{I}$}) returning 
      \code{true} or \code{false}]{\ }\\ 
      \textbf{L0}: return \code{true} if, 
    for the given interpretation {\small $\mathsf{I}$},\linebreak
    {\small $[(\mathsf{I}, \mathsf{J}) \models F] = 0$}. 
    Otherwise, return \code{false}. 
\end{description} 
\begin{description}
    \item[\rewrite({\small $F: \mathfrak{F}, \mathsf{I}: \mathsf{I}, \code{object}\_\code{level}: \mathbb{N}$}) returning 
    {\small $F': \mathfrak{F}$}]{\ }\\    
    \textbf{L0}: {\small $F' := F$}.  \\
    \textbf{L1}: remove
    all the non-unit-chains and 
    unit chains shorter than or equal to 
    \code{object}\_\code{level} from {\small $F'$}. The 
    removal is in the following sense: 
      if {\small $f_x \wedge F_x$}, {\small $F_x \wedge f_x$}, 
	{\small $f_x \vee F_x$} or {\small $F_x \vee f_x$} 
	occurs as a sub-formula in {\small $F'$} for  
    {\small $f_x$} those just specified, then 
	replace them not simultaneously but one at a time 
	to {\small $F_x$} until no more reductions are possible. \\
    \textbf{L2$_0$}: For each unit chain 
    {\small $f$} in {\small $F'$}, Do:\\
    \textbf{L2$_1$}: if the head of {\small $f$} is 0 under 
    {\small $\mathsf{I}$}, then remove the unit chain 
    from {\small $F'$}; else replace the head of {\small $f$} with {\small $\top$}.  \\
    \textbf{L2$_2$}: End of For Loop.  \\
    \textbf{L3}: return {\small $F'$}. 
\end{description}  
The intuition of the procedure is found within the 
proof below. 
\begin{proposition}[Decidability of gradual classical logic] 
Complexity of {\small $\oint(F, 0)$} is at most \exptime.  
\end{proposition}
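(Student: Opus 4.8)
The plan is to bound separately the number of recursive invocations of $\oint$ set off by the top call $\oint(F,0)$ and the work done inside one invocation, and then to multiply the two. Write $m = \formulasize(F)$. Two parameters govern the recursion: the maximal length $d$ of a unit chain in the unit chain expansion $F_a$ produced at \textbf{L0}, and the number $n$ of distinct literals of $\mathcal{S}$ appearing at a single object level. I would first show $n \le 2m$ and $d \le m$. The bound on $n$ is immediate, since none of the $\neg$- or $\gtrdot$-reductions introduces an atomic literal that is not already a sub-term of $F$ or the complement of one. For $d$ I would set up an invariant on the reductions: $\gtrdot$ reduction 1 only concatenates two already present chains into $F_1 \gtrdot F_2 \gtrdot F_3$, and $\gtrdot$ reductions 2--5 merely copy operands across $\wedge$ and $\vee$ without lengthening any chain, so the longest unit chain obtainable is bounded by the number of $\gtrdot$ occurrences in $F$ plus one, hence by $m$.

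Next I would pin down termination and the shape of the recursion tree. Every recursive call at \textbf{L3}$_3$ increases $\code{object}\_\code{level}$ by one, and $\rewrite$ at level $\ell$ deletes each unit chain of length at most $\ell$; thus once $\code{object}\_\code{level}$ passes $d$ the reduced formula carries no unit chain and \textbf{L3}$_2$ blocks further recursion, so the recursion depth is at most $d \le m$. At each node the \textbf{L3}$_0$ loop ranges over assignments to the $n$ literals of the current level, of which there are at most $2^{n}$---here the synchronization condition on $\mathsf{I}$ is what makes the value of a literal depend only on its object level, so the enumeration is one per level rather than one per path---and each iteration spawns at most one recursive call. The branching factor is therefore at most $2^{n}$, and the total number of invocations is at most $(2^{n})^{d} \le 2^{nd} \le 2^{2m^{2}}$.

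For the cost of a single invocation I would note that the only expensive line is the reduction at \textbf{L0}, and that it is expensive only in the top call: since $\rewrite$ returns a formula already in unit chain expansion (replacing a head by $\top \in \mathcal{S}$ and deleting sub-formulas both preserve that form), \textbf{L0} is a no-op in every recursive call. I would bound the size of $F_a$ by a single exponential $2^{O(m)}$, via the recurrence $T(k) \le c\,T(k-1)$ read off from the worst case of $\gtrdot$ reduction 1 on left-nested chains, and observe that $\squash$, $\code{COUNT}\_\code{DISTINCT}$, $\sat$ (one evaluation of a unit-chain-expanded formula under a fixed $\mathsf{I}$ through the valuation clauses) and $\rewrite$ are each polynomial in the current formula size, which never exceeds $2^{O(m)}$. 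Hence one invocation costs $2^{O(m)}$, and the overall running time is $2^{2m^{2}} \cdot 2^{O(m)} = 2^{O(m^{2})}$, which lies in \exptime.

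The load-bearing part, and the step I expect to be the real obstacle, is the size analysis of the reduction to unit chain expansion: I must confirm that this expansion is at most singly exponential in $m$ and---more delicately---that its maximal chain length $d$ stays polynomial in $m$. Were $d$ able to grow exponentially, the recursion depth and therefore the whole estimate would degrade to doubly exponential; so the invariant that bounds chain length under $\gtrdot$ reduction 1 is essential, and the single-exponential size bound is then a routine though tedious measure argument built on top of it.
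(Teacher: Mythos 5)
Your proposal and the paper's proof attack different halves of the proposition. The paper spends essentially all of its effort arguing that $\oint$ is a \emph{decision procedure} --- a line-by-line walkthrough explaining why the 0-th-object-level snapshot test, the enumeration of the $2^n$ interpretations, and the recursive descent through object levels together decide validity --- and disposes of the complexity claim in a single sentence (``clear from the semantics and from the procedure itself''). You do the opposite: you supply the quantitative analysis the paper omits (recursion depth bounded by the maximal unit-chain length $d \le m$, branching factor $2^{n}$ with $n \le 2m$, hence at most $2^{O(m^2)}$ invocations, each costing $2^{O(m)}$ because the expansion at \textbf{L0} grows by at most a constant factor per reduction level), and you correctly isolate the two load-bearing facts: that the unit chain expansion is singly exponential in size, and that $\gtrdot$ reduction 1 never lengthens a chain beyond the number of $\mathcal{S}$-occurrences in the input. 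Your observation that the synchronization condition on $\mathsf{I}$ lets one enumerate interpretations per object level rather than per chain is exactly what keeps the invocation count singly exponential. As a complexity argument this is sound and considerably more informative than the paper's.

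What you have not done is show that $\oint(F,0)$ returns $1$ iff $F$ is valid, which is what the paper's proof is almost entirely about and what the proposition's title (``Decidability'') is asserting. A resource bound on a procedure establishes decidability only if the procedure is correct, and correctness here is not automatic: one must argue that unsatisfiability of the level-0 snapshot under some $\mathsf{I}$ forces invalidity of $F$ via the contradictory-$\mathsf{J}$ clause, and that deleting falsified chains, replacing surviving heads by $\top$, and recursing with an incremented object level faithfully simulates the deeper levels of every valuation frame. If your write-up is meant to replace rather than supplement the paper's proof, that argument must be added.
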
       
\begin{IEEEproof}     
   We show that it is a decision procedure.  
   That the complexity bound cannot be worse 
   than {\exptime} is clear from 
   the semantics (for \textbf{L0}) and from 
   the procedure itself. 
   Consider \textbf{L0} of the main procedure. 
   This reduces a given formula into 
   a formula in unit chain expansion.       
   In \textbf{L1} of the main procedure, 
   we get a snapshot of the input formula. We
   extract from it components of the 0-th object level, 
   and check if it is (un)satisfiable. The motivation  
   for this operation 
   is as follows: if the input formula is contradictory 
   at the 0th-object level, the input formula is 
   contradictory by the definition of {\small $\mathsf{J}$}. 
   Since we are considering validity of a formula, 
   we need to check all the possible valuation frames. 
   The number is determined by distinct {\small $\mathcal{A}$}
   elements. \textbf{L2} gets the number (n). 
   The For loop starting at \textbf{L3}$_0$ iterates through the {\small $2^n$} 
   distinct interpretations. If the snapshot 
   is unsatisfiable for any such valuation frame, 
   it cannot be valid, which in turn implies 
   that the input formula cannot be valid (\textbf{L3}$_1$). 
   If the snapshot is satisfiable and if 
   the maximum object-level in the input formula 
   is the 0th, \emph{i.e.} the snapshot is the input 
   formula, then the input formula 
   is satisfiable for this particular valuation frame, 
   and so we check the remaining valuation frames
   (\textbf{L3}$_2$). Otherwise, if 
   it is satisfiable and if the maximum object-level
   in the input formula is not the 0th, then 
   we need to check that snapshots in all the other 
   object-levels of the input formula are satisfiable 
   by all the valuation frames. We do this check 
   by recursion (\textbf{L3}$_3$). Notice the first
   parameter {\small $\code{REWRITE}(F_a, \mathsf{I},
       \code{object}\_\code{level})$} here. 
   This returns some formula {\small $F'$}. At the 
   beginning of the sub-procedure, {\small $F'$} is 
   a duplicated copy of {\small $F_a$} (not 
   {\small $F_b$}). Now, 
   under the particular 0-th object level interpretation 
   {\small $\mathsf{I}$}, some unit chain in 
   {\small $F_a$} may be 
   already evaluated to 0. Then 
   we do not need consider them 
   at any deeper object-level. 
   So we remove
   them from {\small $F'$}. Otherwise, 
   in all the remaining unit chains, the 0-th object 
   gets local interpretation of 1. So we replace 
   the {\small $\mathcal{S}$} element at the 0-th object 
   level with {\small $\top$} which always gets 1. 
   Finally, all the non-chain 
   {\small $\mathcal{S}$} constituents 
   and all the chains shorter than or equal to 
   \code{object}\_\code{level} in {\small $F_a$} 
   are irrelevant at a higher object-level. So we 
   also remove them (from {\small $F'$}). We pass this 
   {\small $F'$} and an incremented \code{object}\_\code{level} to the main procedure for 
   the recursion. \\
   \indent The recursive process continues either until  
   a sub-formula passed to the main procedure 
   turns out to be invalid, in which case  
   the recursive call returns 0 
   (\textbf{L2}$_2$ and \textbf{L4} in the
   main procedure) 
   to the caller who assigns 0 to $o$ 
   (\textbf{L2}$_4$) and again returns 0, and so on 
   until the first recursive caller. 
   The caller receives 0 once again to conclude that 
   {\small $F$} is invalid, as expected. Otherwise, 
   we have that {\small $F$} is valid, for we 
   considered all the valuation frames. 
   The number of recursive calls cannot be infinite.\\
\end{IEEEproof} 
\hide{
\begin{proof} 
  \textbf{L1} of {\small $\oint$} executes in \exptime; 
  \textbf{L2}, \textbf{L3} and \textbf{L5} in \pcomplete; and
  \textbf{L4} in constant time. 
\end{proof} 
} 
\hide{
\section{Proof System} 
We start by defining meta-formula notations. By  
{\small $\mathfrak{S}$} we denote the set of  
structures whose elements {\small $\Gamma$} with or without 
a sub-/super- script are constructed from the grammar;\\
\indent {\small $\Gamma := F \ | \ \Gamma \hookleftarrow \Gamma \ | \ 
\Gamma; \Gamma$}. \\
Only the following full associativity and commutativity are defined to be 
holding among elements of {\small $\mathfrak{S}$}: 
For all {\small $\Gamma_1, \Gamma_2, \Gamma_3 \in \mathfrak{S}$}; 
\begin{itemize}
  \item {\small $\Gamma_1; (\Gamma_2; \Gamma_3) = 
    (\Gamma_1; \Gamma_2); \Gamma_3$}.  
  \item {\small $\Gamma_1; \Gamma_2 = \Gamma_2; \Gamma_1$}. 
\end{itemize}   
{\small $\hookleftarrow$} is right associative: 
{\small $\Gamma_1 \hookleftarrow \Gamma_2 \hookleftarrow 
\Gamma_3$} is interpreted as 
{\small $\Gamma_1 \hookleftarrow (\Gamma_2 \hookleftarrow 
\Gamma_3)$}. 
The set of sequents is denoted by {\small $\mathfrak{D}$}  
and is defined by: 
 {\small $\mathfrak{D} := \{\ \Gamma \vdash  \ | \  
\Gamma \in \mathfrak{S}\}$}.  
Its elements are referred to by {\small $D$} with 
or without a sub-/super-script. As is customary in a proof system,  
some structures in a sequent may be empty. They are indicated 
by a tilde {\small $\widetilde{ }$} over them, \emph{e.g.}  
{\small $\widetilde{\Gamma}$}. Contexts, representations of  
a given structure, are defined as 
below. Due to the length of the definition, we first state 
a preparatory definition of specialised structures.  
\begin{definition}[Specialised structures]{\ }
  \begin{description}  
    \item[\textbf{Unit structures}]{\ } 
      \begin{description}
	\item[\textbf{Horizontally unitary structures}]{\ }\\The set 
	  of those is denoted by {\small $\mathfrak{S}^{uH}$}, 
	  and forms a strict subset of {\small $\mathfrak{S}$}.
	  It holds that;\\
	  \indent {\small $\forall \gamma \in \mathfrak{S}^{uH}.
	  (\neg^{\dagger} \exists \Gamma_1, \Gamma_2 
	  \in \mathfrak{S}.\gamma = \Gamma_1; \Gamma_2)$}.  
	\item[\textbf{Vertically unitary structures}]{\ }\\  
	  The set of those is denoted by {\small $\mathfrak{S}^{uV}$}, 
	  and forms a strict subset of {\small $\mathfrak{S}$}. 
	  It holds that; 
	  \indent {\small $\forall \kappa \in \mathfrak{S}^{uV}.
	  (\neg^{\dagger} \exists \Gamma_1, \Gamma_2 
	  \in \mathfrak{S}.\kappa = \Gamma_1 \hookleftarrow 
	  \Gamma_2)$}. 
      \end{description}    
    \item[\textbf{Chains}]{\ } 
      \begin{description}
\item[\textbf{Unit chains}]{\ }\\
      The set of those is denoted by {\small $\mathfrak{C}$},
      and is formed by taking a set union of 
      (A)  the set of all the structures in the form:   
  {\small $s_1 \hookleftarrow s_2 \hookleftarrow \cdots \hookleftarrow 
  s_{k + 1}$} for {\small $k \in \mathbb{N}$} such that {\small $s_i \in \mathcal{S}$} for all 
  {\small $1 \le i \le k + 1$} and (B) a singleton set {\small $\{\epsilon\}$} 
  denoting an empty structure.  
\item[\textbf{Sub-chains}]{\ } \\
Given a horizontally unitary structure {\small $\gamma = 
      \kappa_1 \hookleftarrow \kappa_2 \hookleftarrow \cdots 
      \hookleftarrow \kappa_{k+1}$} 
      for some {\small $\kappa_1, \kappa_2, \cdots, \kappa_{k+1} 
      \in \mathfrak{S}^{uV}$} for {\small $k \in \mathbb{N}$}, 
      its sub-chain  
      is any of {\small $\kappa_1 \hookleftarrow \kappa_2 
      \hookleftarrow \kappa_i$}   
      for {\small $1 \le i \le {k+1}$}. 
  \end{description} 
        \item[\textbf{Upper structures}]{\ }\\    
            Given a structure {\small $\Gamma \in \mathfrak{S}$} 
      such that 
      {\small $\Gamma = \gamma_1; \gamma_2; \cdots;
      \gamma_{k+1}$} for 
      some {\small $\gamma_1, \gamma_2, \cdots, \gamma_{k+1} 
      \in \mathfrak{S}^{uH}$} for {\small $k \in \mathbb{N}$}, 
      the set of its upper structures is defined to contain 
      all the structures  
      {\small $\gamma'_1; \gamma_2'; \cdots; 
      \gamma_{k+1}'$} such that, for 
      all {\small $1 \le i \le k+1$}, 
      {\small $\gamma_i'$} (if not empty) is   
      a sub-chain of {\small $\gamma_i$}. \\
  \end{description}
\end{definition}     
\begin{definition}[Contexts of a given structure]{\ }\\
   Let {\small $\Omega(\alpha, \beta)$}
  for {\small $\alpha \in \mathfrak{C}$} and {\small $\beta \in 
  \mathfrak{S}$} denote what we call a representation. Let 
  {\small $\mathfrak{R}$} denote the set of representations. 
  Let {\small $P$} be a predicate over {\small $\mathfrak{S} 
  \times \mathfrak{R}$} defined by; \\
      \indent {\small $P(\Gamma_1, \Omega(\Psi, \Gamma_2))$}  
      for some {\small $\Gamma_1, \Gamma_2 \in \mathfrak{S}$} 
      and some {\small $\Psi \in \mathfrak{C}$} 
      iff;
      \begin{itemize} 
	\item if {\small $\Psi = \epsilon$}, then 
	  {\small $\Gamma_1 = \Gamma_2$}.
	\item if {\small $\Psi = s_1 \hookleftarrow 
	  s_2 \hookleftarrow \cdots \hookleftarrow s_{k +1}$}  
	  for some {\small $k \in \mathbb{N}$}, then  
	  supposing {\small $\Gamma_1 = 
	  \gamma_1; \gamma_2; \cdots;
	  \gamma_{j+1}$} for some {\small $j \in \mathbb{N}$};
	    there exists at least one 
	      {\small $\gamma_i$} for 
	      {\small $1 \le i \le j +1$} such that   
	      {\small $\gamma_i = (s_1; \widetilde{\kappa_{x1}}) 
	      \hookleftarrow (s_2; \widetilde{\kappa_{x2}}) 
	      \hookleftarrow \cdots \hookleftarrow  
	      (s_{k+1}; \widetilde{\kappa_{xk+1}}) \hookleftarrow 
              \Gamma_{yi}$} for 
	      some {\small $\kappa_{x1}, 
	      \kappa_{x2}, \cdots, 
	      \kappa_{xk+1} \in \mathfrak{S}^{uV}$} 
	      such that, 
	      for all such {\small $i$}, \emph{i.e.} 
              {\small $i \in \{i1, i2, \cdots, im\}$} 
	      for {\small $1 \le |\{i1, i2, \cdots, im\}| \le j +1$}, 
              {\small $\Gamma_2$} is an upper structure 
	      of {\small $\Gamma_{yi1}; \Gamma_{yi2}; \cdots; \Gamma_{yim}$}.
      \end{itemize} 
      {\ }\\
\end{definition}
The proof system for gradual classical logic is found in 
Figure \ref{relevant_system}.   
\subsection{Main properties}   
\begin{definition}[Interpretation] 
  Interpretation of a sequent is a function {\small $\overline{\cdot}:  
  \mathfrak{D} \rightarrow \mathfrak{F}$}, defined recursively 
  as follows, in conjunction with  
  {\small $\overline{\cdot}^{\mathfrak{S}}: 
  \mathfrak{S} \rightarrow \mathfrak{F}$}; 
  \begin{itemize}
    \item {\small $\overline{\Gamma \vdash } =
      \neg \overline{\Gamma}^{\mathfrak{S}}$}.  
    \item {\small $\overline{\Gamma_1; \Gamma_2}^{\mathfrak{S}} 
      = \overline{\Gamma_1}^{\mathfrak{S}} \wedge 
      \overline{\Gamma_2}^{\mathfrak{S}}$}. 
    \item {\small $\overline{\Gamma_1 \hookleftarrow \Gamma_2}^{\mathfrak{S}} = \overline{\Gamma_1}^{\mathfrak{S}} \gtrdot 
      \overline{\Gamma_2}^{\mathfrak{S}}$}. 
    \item {\small $\overline{F}^{\mathfrak{S}} = F$}. 
  \end{itemize}
  \label{interpretation_sequent}  
\end{definition} 
\begin{theorem}[Soundness] 
   If  
  there exists a closed derivation tree for 
  {\small $F \vdash $}, then  
  {\small $F$} is unsatisfiable. 
  \label{soundness}
\end{theorem}
\begin{proof}
  By induction on derivation depth of the derivation tree.  
  Base cases are when it has only one conclusion. 
  In case the axiom inference rule is {\small $id$}, we need to show 
  that any chain which looks like  
  {\small $\Psi_1 \gtrdot F_1 \wedge a \wedge a^c 
  \gtrdot \widetilde{F_2}$} 
\end{proof} 
\begin{theorem}[Completeness] 
  If {\small $[\models_{\epsilon} F] = 0$}, then 
  there exists a closed derivation tree for 
  {\small $F \vdash $}. 
  \label{completeness}
\end{theorem}
\begin{proof}
  We show that \code{GradC} can simulate all the 
  processes of the decision 
  procedure.  
  \begin{description}
    \item[Transformation into disjunctive normal form]{\ }\\
      This is achieved if all the axioms in 
      \textbf{Transformations} can be simulated. For 
      each axiom corresponds an inference rule.  
    \item[Truncate]{\ }\\ Though this step 
      is not necessary due to the definition of 
      {\small $id$} and {\small $\bot$}, 
      in case a sequent is unsatisfiable at 
      the first chain-lets, then 
      we can truncate all the unit chains into 
      1-long unit chains 
      via {\small $Wk_{1,2}$}.  
    \item[Satisfiability check]{\ }\\  
      A sequent with 1-long unit chains makes use of 
      connectives found in classical logic only. 
      All the inference rules that 
      are needed for a check on theorem/non-theorem of a given 
      formula in standard classical logic are available in 
      the given proof system.  
    \item[Rewrite and maximum chain length check]{\ }\\ 
      Via $\code{Advance}\curvearrowright$.  
  \end{description}
\end{proof} 
Leave some comments here that these inference rules seem to 
suggest a more efficient proof search strategy; but that 
I leave it open.  
}
\section{Conclusion and Related Thoughts}   


There are many existing logics to which 
gradual classical logic can relate, including ones 
below. ``G(g)radual classical logic'' is abbreviated by \code{Grad}. 
\subsection{Para-consistent Logic} 
In classical logic a contradictory statement implies just anything 
expressible in the given domain of discourse. Not so in the family 
of para-consistent logics where it is 
distinguished from other forms of inconsistency \cite{Marcos05}; 
what is trivially the case in classical logic, 
say {\small $a_1 \wedge 
a_1^c \supset a_2$} for any propositions {\small $a_1$} 
and {\small $a_2$}, is not an axiom. Or, if  
my understanding about them is sufficient, it 
actually holds in the sense that to each contradiction expressible 
in a para-consistent logic associates  
a sub-domain of discourse within which it entails anything; 
however, just as \code{Grad} internalises classical logic, 
so do para-consistent logics, revealing the extent of the explosiveness
of contradiction within them. In some sense para-consistent logics 
model parallel activities as seen in concurrency. What 
\code{Grad} on the other hand aims to model is conceptual scoping. 
As they do not pose an active conflict to each other, 
it should 
be possible to derive an extended logic which benefits from both features. 
\hide{The following may be  
one interesting discussion point. With the understanding of 
\code{Grad}, it is reasonable 
to think that, 
given two sub-domains {\small $D_1$} and {\small $D_2$} such that 
{\small $a_1, a_1^c \in D_1$}, that {\small $a_2 \not\in D_1$}, 
that {\small $a_1, a_1^c \not\in D_2$} and that 
{\small $a_2 \in D_2$}, if {\small $D_1$} and {\small $D_2$}  
exist sufficiently independently such that to talk about 
{\small $a_1, a_1^c$} and {\small $a_2$} is not possible in 
classical logic, then we can prevent
ourselves from deriving {\small $a_1 \wedge a_1^c \supset a_2$}  
by appropriately extending the semantics of classical {\small $\supset$}. 
Likewise, if for example {\small $a_1, a_1^c, a_2 \in D_1$}, 
and {\small $a_1, a_1^c \in D_2$}, then if the semantics of 
{\small $\wedge$} and possibly also {\small $\supset$} are extended such that {\small $a_1 (\in D_1) 
    \wedge a_1^c (\in D_2) \supset a_2 (\in D_1)$} comes to make sense, 
we can surely prevent ourselves from deriving {\small $a_1 \wedge a_1^c 
    \supset a_2$}. But in this case again, classical logic is in fact 
extended than restricted. And for now as 
before, we continue reasoning classically both in {\small $D_1$} 
and {\small $D_2$}. Since the combination of {\small $D_1$} and 
{\small $D_2$} does not give us a classically reasonable 
domain of discourse, classically valid propositions are 
those found in {\small $D_1$} and {\small $D_2$} only, which remain 
valid in the briefly sketched para-consistent logic. 
The combination of 
{\small $D_1$} and {\small $D_2$} can be made classically reasonable, 
but in \code{Grad} at least an inner contradiction propagates 
outwards. Hence an inquiry: ``Can we derive, assuming 
synonyms of Postulate 1, a para-consistent logic 
whose domain of discourse is classically reasonable and whose 
sub-domains within which the influence of contradictions are intended 
to be restricted can be verified obeying the laws of classical logic from 
the perspective of the domain of discourse?" 
}
\hide{
Relevant logic 
\cite{Anderson75} is a para-consistent logic that analyses 
the sense of implication, by which an implication 
{\small $F_1 \supset F_2$} (suppose that these {\small $F_1$} 
and {\small $F_2$} are formulas in classical logic) can be a true statement only if there exists
certain relation, {\it relevance} as relevantists call, between {\small $F_1$} and {\small $F_2$}. They have a standpoint that 
material implication allows one to deduce a conclusion from an 
irrelevant premise to the conclusion, which they reflect should not 
occur if material implication were any sort of implication at all \cite{Anderson75}. Implication in \code{Grad} (\emph{Cf.} a remark at the end of Section {\uppercase\expandafter{\romannumeral 3}}) is, on the other hand, consistent with that in standard classical logic 
because, as was 
expounded in Section {\uppercase\expandafter{\romannumeral 1}}, 
it considers that anything that is expressible in classical logic 
is completely relevant to the given domain of discourse, and, 
consequently, that the effect to the contrary is physically 
impossible. Nevertheless, 
the existential fact of any attribute to an object 
cannot be irrelevant to the existential fact of the object by 
the formulation of \code{Grad}, to which extent 
it captures relevance as seen in object-attribute 
relationship. On this point, this work 
may provide an interesting topic for discussion. 
}
\subsection{Epistemic Logic/Conditional Logic}   
Epistemic logic concerns knowledge and belief,
augmenting propositional logic with epistemic operators {\small $K_c$} 
for knowledge and {\small $B_c$} for belief
such that {\small $K_c a$}/{\small $B_c a$} means that 
a proposition {\small $a$} is known/believed to be true 
by an agent {\small $c$}. \cite{Hendricks06}. 
\code{Grad} has a strong link to knowledge and belief, being 
inspired by tacit agreement on 
assumptions about attributed objects. To seek a correspondence, we may 
tentatively assign to {\small $a_0 \gtrdot a_1$} 
a mapping of {\small $a_0 \wedge K_c/B_c a_1$}. 
However, this mapping is not very adequate due to the fact that {\small $K_c/B_c$} enforces 
a global sense of 
knowledge/belief that does not update in the course of 
discourse. 
The relation that {\small $\gtrdot$} expresses between 
{\small $a_0$} and {\small $a_1$} is not captured this way. A more proximate mapping 
is achieved with the conditional operator {\small $>$} in conditional logics \cite{Horacio13} with which we may map 
{\small $a_0 \gtrdot a_1$} into {\small $a_0 \wedge 
    (a_0 > a_1)$}. But by this mapping
the laws of {\small $>$} will no longer follow any of normal, classical, monotonic or 
regular (\emph{Cf}. \cite{Chellas80} or Section 3 in 
\cite{Horacio13}; note that the small letters
{\small $a, b, c, ...$} 
in the latter reference are not literals but propositional formulas)
conditional logics'. RCEA holds safely, but all the rest: 
RCEC; RCM; RCR and RCK fail since availability of some 
{\small $b$} and {\small $c$} equivalent in one sub-domain of 
discourse of \code{Grad} does not imply their equivalence 
in another sub-domain. Likewise, 
the axioms listed in Section 3 of \cite{Horacio13} fail 
save CC (understand it by {\small $a \wedge (a > b)  
    \wedge a \wedge (a > c) \supset a \wedge (a > b \wedge c)$}), CMon 
and CM. Further studies should be useful in order to unravel 
a logical perspective into how some facts that act as 
pre-requisites for others could affect knowledge and belief. 
\subsection{Intensional Logic/Description Logic}  
Conditional logics were motivated by counterfactuals \cite{Plumpton31,Lewis01}, 
\emph{e.g.} ``If X were the case, then Y would be the 
case." According to the present comprehension of the author's 
about reasoning about  
such statements as found in Appendix J in the form of an informal essay, the 
reasoning process involves transformation of one's consciousness 
about 
the antecedent that he/she believes is impossible. However, even if 
we require the said transformation to be minimal in its rendering
the impossible X possible, we still cannot ensure that 
we obtain a unique representation of X, so long as X is not possible.  
Hence it is understood to be not what it is unconditionally, 
but only what it is relative to a minimal transformation that applied.
The collection of 
the possible representations is sometimes described as the 
{\it extension} of X. Of course, 
one may have certain intention, under which X refers to 
some particular representations of X. They are termed {\it intension} of X for 
contrast. \\
\indent The two terms are actively differentiated 
in Intensional Logic \cite{Church51,Montague74,Carnap47}. For example, 
suppose that we have two concepts denoting collections U and V such that 
their union 
is neither U nor V. Then, although U is certainly not equal to V, if, 
for instance, we regard every concept as a designator of an element 
of the collection, then U is V if U {\small $\mapsto$} u and 
V {\small $\mapsto$} v such that u = v. For a comparison, 
\code{Grad} does not treat intension explicitly, for 
if some entity equals another in \code{Grad}, then they are always 
extensionally equal: 
if the morning star is the evening star, it cannot be because the two
terms designate the planet Venus that \code{Grad} says they are equal, but 
because they are the same. But it expresses the distinction  
passively in the sense 
that we can meta-logically observe it. To wit, 
consider an expression {\small $(\top \gtrdot \code{Space} \gtrdot 
    \code{Wide}) \wedge 
    (\code{Space} \gtrdot \code{Wide})$}. Then, depending on 
what the given domain of discourse is, the sense of  
{\small $\code{Space}$} in {\small $\top \gtrdot \code{Space} \gtrdot
    \code{Wide}$} may not be the same as that of 
{\small $\code{Space}$} in {\small $\code{Space} \gtrdot 
    \code{Wide}$}. Similarly for {\small $\code{Wide}$}. (Incidentally, 
note that {\small $\gtrdot$} is not the type/sub-type relation.)
The intensionality in the earlier mentioned conditional logics is, 
provided counterfactual statements are reasoned in line with the prescription in Appendix J,  slightly more explicit: the judgement of Y 
depends on intension of X. But in many of the ontic conditional logics 
in \cite{Horacio13}, it does not appear to be explicitly distinguished 
from extension. \\
\indent It 
could be the case that \code{Grad}, once 
extended with predicates, may be able to express intensionality in a 
natural way, \emph{e.g.} we may say {\small $\exists \code{Intension}(\code{Adjective} \gtrdot \code{Sheep})
    = \code{Ovine}$} (in some, and not necessarily all,
sub-domains of discourse). 
At any rate, how much we should care for the distinction of 
intensionality and extensionality probably owes much to personal tastes. 
We may study intensionality as an independent component to be added 
to extensional logics. We may alternatively study a logic in which extensionality 
is deeply intertwined with intensionality. It should be 
the sort of applications we have in mind that favours 
one to the other. \\
\indent Of the logics that touch upon concepts, also 
worth  mentioning are a family 
of description logics \cite{Baader10} that have influence in 
knowledge representation. 
They are a fragment of the first-order logic specialised 
in setting up knowledge bases, in reasoning about their contents and 
in manipulating them \cite{Baader10-2}. 
The domain of discourse, a knowledge base, is 
formed of two components. One called TBox stores 
knowledge that does not usually change over time: 
(1) concepts (corresponding to unary predicates in the first-order logic)
and 
(2) roles (corresponding to binary predicates), specifically. The other one, ABox, 
stores contingent knowledge of 
assertions about individuals, \emph{e.g.} Mary, an individual, is mother, 
a general concept. Given the domain of discourse, there then are 
reasoning facilities 
in description logics responsible for checking satisfiability of 
an expression as well as for judging whether one description
is a sub-/super-concept of another (here a super-concept of a concept
is not ``a concept of a concept'' in the term of \cite{Church51}). 
\\
    \indent Description logics were developed from specific applications, 
    and capture a rigid sense of the concept. 
    It should be of interest to see how 
    \code{Grad} may be specialised for applications in computer science. 
    To see if the use of {\small $\gtrdot$} as a meta-relation on 
    description logic instances 
    can lead to results that have been conventionally 
    difficult to cope with is another hopeful direction. 
\subsection{Combined Logic}      
\code{Grad} is a particular 
kind of combined logic \cite{stanford11,Gabbay96a,Caleiro05} 
combining the same logic 
over and over finitely many times. The presence of 
the extra logical connective {\small $\gtrdot$} scarcely  
diverts it from the philosophy of combined logics.  
Instead of  
regarding base logics\footnote{A base logic of a combined logic is one 
that is used to derive the combined logic.} as effectively bearing
the same significance in footing, however, this work recognised certain 
sub-ordination between base logics, as the new logical connective 
characterised. Object-attribute negation also bridges across  
the base logics. Given these, 
a finite number 
of the base logic combinations at once made more sense than 
combinations of two base logics finitely many times, 
for the latter approach 
may not be able to adequately represent the meta-base-logic 
logical connectives with the intended semantics of gradual 
classical logic. Investigation  
into this sub-set of combined logics could have merits of its own.  
\hide{which 
was contemplated by Gabbay in his fifth agenda \cite{Gabbay96a} and was 
reminded again in \cite{Arisaka13}. }
\subsection{Conclusion}     
This work presented \code{Grad} as a logic for 
attributed objects. Its mechanism should 
be easily integrated into many non-intuitionistic logics.  
Directions to future research were also suggested 
at lengths through comparisons. Considering its variations 
should be also interesting. 
For applications of gradual logics, 
program analysis/verification, databases, and artificial intelligence 
come into mind. 
\hide{
\section*{Acknowledgement}   
Thanks are due to Danko Ilik for his support, 
Yusuke Kawamoto for constructive discussions, Dale Miller for 
an introduction to intensionality/extensionality in logic, and 
Ana{\"i}s Lion for the reference by Church.  
}
\hide{ 
Anything that appears as a proposition in classical logic 
are completely relevant to the truth that they and 
interpretations on them define. As far as I could 
gather from \cite{stanford2012} and 
the references that it suggests such as \cite{Mares07},  
\subsection{Combined Logic} 
\subsection{Left-over} 
Therefore 
our conversation, to which the presences of objects but also 
of attributes are substantial, 
spins around manipulation of 
conceptual hierarchies inherent in attributed objects. When we reason 
about relations that hold between given existences, an existential 
contradiction may not be found simply 
in spotting, say `Book A exists' and `Book A does not exist', for it is 
plausible that at one point we may discuss about the collection 
of books in 
Public Library and that at another point 
about books held in University Library, in which case 
the conversational context allows
a statement `Book A exists (in Public Library).' and 
another statement `Book A does not exist (in University Library).' 
to peacefully co-exist. \\
\indent Let us try to describe the two statements in classical logic. 
Since, if we were to make both 
`Book A exists' and `Book A does not exist' an atomic proposition 
in our domain of discourse,  a contradiction would be 
immediate in conjunctive concatenation of the two statements, 
it is necessary that we probe other options available to us. 
But if we were, for the fear of the imminent contradiction, 
to treat `Book A exists in Public Library.' and 
`Book A does not exist in University Library.' as atomic 
propositions, we would lose all the relations that they 
share, such as that both talk about existence of 
Book A in some library, at propositional level. On the other hand, 
if, then, we were, for the lack of sufficient expressiveness power 
in propositional logic, to seek assistance in some predicate of the 
kind $\code{ExistsIn}(\cdot, \cdot)$, to for 
instance have a sentence $\code{ExistsIn}$(Book A, Public Library) $\wedge$ 
$\neg \code{ExistsIn}$(Book A, University Library), our domain of 
discourse would place all the three entities `Book A', `Public Library', 
and `University Library' collaterally, and this time it is 
the conversational context that would be forever lost. In short, 
it is not so easy to capture conceptual hierarchies 
in standard classical logic. There are also 
certain peculiarity and, be they intentional or unwitting 
consequences, ambiguities that seep in when we talk about 
attributed objects. For example, as we are to illustrate in Section 
{\uppercase\expandafter{\romannumeral 2}}, 
negation on an attributed object is innately 
ambiguous if no conversational clues that may 
aid us to winnow down its range of potency are given. 
Whether 
it is acting upon an attribute, upon the object or 
upon the attributed object may not be determinable. But how do 
we express the three types of negations given an attributed 
object if (1) there is only one negation and (2)  
all the atomic propositions in the domain of discourse are on equal 
significance in any discourse on them that we may make? It is 
again a labouring task to simulate those ambiguities of 
conceptual hierarchies in an intuitive 
fashion. \\ 
\indent Hence, not very surprisingly, we will set about developing 
a logic in which attributed objects can 
be reasonably represented. However, the aphorism by 
Girard: {\it Witness the fate of non-monotonic ``logics'' who 
tried to tamper with logical rules without changing 
the basic operations\ldots}\cite{DBLP:journals/tcs/Girard87} should 
not be taken so lightly, either. We also do not take 
the stance that material implication is paradoxical \cite{Mares07}, 
and hold onto 
the viewpoint that our change to classical logic 
should be small enough not to break the fundamental logical 
principles that it nurtures. \\
\indent Hence it is reasonable that we define a new connective: 
we annex {\small $\gtrdot$} that is 
read as ``is descriptive of'' or as ``belongs to'', and 
habituate the other logical connectives to the extended logic 
in accordance with initial philosophical investigation about 
the interactions between the `old' connectives and {\small $\gtrdot$}.   
With the novel connective in place, we achieve the effect that 
the sense of the logical truth gradually shifts in the new logic; 
hence the appellation of gradual classical logic. 
All the mathematical rigorousness to follow will be 
a symbolic paraphrasing of the philosophical development. 
\subsection{Key undertakings}  
Roughly in the order they appear in this document;
\begin{itemize}
  \item Philosophical motivation and characterisations 
    of gradual classical logic
    (Section  {\uppercase\expandafter{\romannumeral 1}} and 
    Section {\uppercase\expandafter{\romannumeral 2}}).
  \item Mathematical development of semantics for gradual 
    classical logic by means of dependent interpretations (valuations) 
    and of reductions of logical entailments
    (Section {\uppercase\expandafter{\romannumeral 3}}).
  \item Identification that gradual classical logic is 
    not para-consistent and that it is decidable 
    (Section {\uppercase\expandafter{\romannumeral 4}}). 
  \item A sequent calculus of gradual classical logic which 
    is sound and complete with respect to the semantics 
    (Section {\uppercase\expandafter{\romannumeral 5}}). 
\end{itemize}

The relation of ``is descriptive of/belongs to'' seen in attributed 
objects undertakes an important role in our conversation, 
allowing hierarchical constructions of concepts. Their influence 
does not confine within natural language. As exemplified in 
the fields such as object-oriented programming paradigms, relational 
databases and computer vision to name a few, it is a ubiquitous 
concept within computer science. Then much more so in many fields in 
mathematics. \\
\indent Despite the appeals in applications that attributed objects have,
however, the hierarchical reasoning as required in accommodation of 
ambiguities and peculiarities that emerge in the handling of 
attributed objects. 
is not something that 
is accommodated as a fundamental reasoning tool within classical logic 
or in fact in other non-classical logics,

in the context of 
this paper. \\
\indent {\it A short dialogue:} ``(Looking at desk) There is a book.'' 
``(Abstractedly, not turning back to the speaker) Which book?'' ``Meditations and Other 
Metaphysical Writings''. ``And the author?'' ``It is Ren{\'e} Descartes.''
``In which country was he born?'' ``In France.''
{\it Period}\\
\indent In the above, `a book' caught attention 
of the first speaker, defining a main subject of the dialogue. 
Through the inquiries and replies, the general term acquired 
more descriptive pieces of information: of the title
and of the author. The last 
inquiry by the second author is slightly discursive, signaling
a change in subjects from the book into its author. \\
\indent For the purpose of this paper, 
those that act as the main subjects are objects/concepts. Those that provide details (adjectives) to them 
are called attributes, to contrast. In the above dialogue 
the birthplace of France is an attribute to `Ren{\'e} Descartes', forming 
one attributed object; and `Meditations and Other Metaphysical Writings', 
the title, and that it is written by French-born Ren{\'e} Descartes 
are attributes to the `book', forming another attributed object 
French-born Ren{\'e} Descartes' book `Meditations and Other Metaphysical 
Writings'. \\
\indent  Now, because we talk about logic we would like to 
put attributed objects into some logical framework that 
allows us to capture their existential relations 
such as; if A is, then it follows that B is. How shall 
we construct a domain of discourse for reasoning about them? 
Of course it could be that classical logic is already sufficient. 
For what appeared in the short dialogue above, we may have
a set comprising several elements: 
`Book' which we denote by B, `Ren{\'e} Descartes' which 
we denote by R, `Meditations and Other Metaphysical 
Writings' by M and `France' by F. Then, we first express that 
Ren{\'e} Descartes was born in France by defining 
a predicate \code{WasBornIn} so that 
\code{WasBornIn}(R, F) is a true statement.  
Likewise with the predicate \code{IsWrittenBy}, 
we make \code{IsWrittenBy}(M, 
R) true; and with the predicate 
\code{IsTitledBy}, we make \code{IsTitledBy}(B, M) true. 
Then that we have the two 
attributed objects may be expressed in the following sentence: 
\code{IsTitledBy}(B, M) 
{\small $\wedge$} \code{WasBornIn}(R, F) {\small $\wedge$}
\code{IsWrittenBy}(M, R). However,  
that we do not discriminate main subjects and attributes 
incurs certain inconvenience in many conversational contexts 
that are influenced by conceptual dependencies, as we purport to 
illustrate in the next sub-section. But then if we are to adopt more drastic policy that  
only the main subjects shall be in the domain of discourse and
that anything else that may embellish them shall take 
a form of predicate, having in the domain of discourse two elements `B'ook 
and `R'en{\'e} Descartes, and 
expressing 
the said two attributed objects by; 
\code{IsWrittenByFrenchBornReneDescartes}(B) {\small $\wedge$}
\code{IsMeditationsAndOtherMetaphysicalWritings}(B) 
{\small $\wedge$} \code{IsBornInFrance}(R), it should not take 
so long till it dawns on us that the 
presence of `R'en{\'e} Descartes in the domain of discourse 
may be amiss because it is an attribute to `B'ook; 
and that, secondly, the predicates, 
to take into account the restriction, must themselves be very specific, 
to the point that it would have had produced almost
the same effect had we simply had three atomic propositions in place 
of the predicates. \\
\indent Of course it is not only in natural language that 
such conceptual dependencies play an important role. 
Many fields in computer science, relational database and object-oriented 
programming languages to name a few, are reliant on 
hierarchical structures, for which the relation that some concept 
depends on another itself forms an essential component. \\
\indent In this document we set forth analysing 
some peculiar logical characteristics that attributed 
objects exhibit from which we are to develop a new logic that 
accommodate them. It is hierarchical, and the strength of truth 
changes gradually. Hence in the name is `gradual'. Starting 
philosophical investigation, we begin more mathematical 
a formulation of logic, and prescribe both semantics and proof system. 
An important result is that the gradual logic is not paraconsistent. 
It is also decidable. Comparisons to other logics follow.  
}
\bibliographystyle{plain} 
\bibliography{references}   

\begin{thebibliography}{10}

\bibitem{Horacio13}
Horacio Arlo-Costa.
\newblock The {Logic} of {Conditionals}.
\newblock In Edward~N. Zalta, editor, {\em The {Stanford} {Encyclopedia} of
  {Philosophy}}. 2013.

\bibitem{Baader10}
Franz Baader, Diego Calvanese, Deborah McGuinness, Daniele Nardi, and Peter~F.
  Patel-Schneider, editors.
\newblock {\em {The} {Description} {Logic} {Handbook}: {Theory},
  implementation, and applications}.
\newblock Cambridge University Press, 2010.

\bibitem{Baader10-2}
Franz Baader and Werner Nutt.
\newblock Basic description logics.
\newblock In {\em {The} {Description} {Logic} {Handbook}: {Theory},
  implementation, and applications}. Cambridge University Press, 2010.

\bibitem{Caleiro05}
Carlos Caleiro, Walter Carnielli, Jo\ ao~Rasga, and Cristina Sernadas.
\newblock Fibring of logics as a universal construction.
\newblock {\em Handbook of Philosophical Logic}, 13:123--187, 2005.

\bibitem{Carnap47}
Rudolf Carnap.
\newblock {\em Meaning and Necessity}.
\newblock The {University} {Chicago} {Press}., 1947.

\bibitem{stanford11}
Walter Carnielli and Marcelo~E. Coniglio.
\newblock Combining logics.
\newblock In Edward~N. Zalta, editor, {\em The {Stanford} {Encyclopedia} of
  {Philosophy}}. 2011.

\bibitem{Chellas80}
Brian~F. Chellas.
\newblock {\em Modal {Logic}: {An} {Introduction}}.
\newblock Cambidge University Press, 1980.

\bibitem{Church51}
Alonzo Church.
\newblock A {Formulation} of the {Logic} of {Sense} and {Denotation}.
\newblock In Paul Henle, Horace~M. Kallen, and Susanne~K. Langer, editors, {\em
  Structure, {Method} and {Meaning}:{Essays} in {Honor} of {Henry} {M}.
  {Sheffer}}, pages 3--24. The {Liberal} {Arts} {Press}, 1951.

\bibitem{Gabbay96a}
Dov~M. Gabbay.
\newblock Fibred semantics and the weaving of logics: Part 1.
\newblock {\em J. Symb. Log.}, 61(4):1057--1120, 1996.

\bibitem{DBLP:journals/tcs/Girard87}
Jean-Yves Girard.
\newblock Linear logic.
\newblock {\em Theor. Comput. Sci.}, 50:1--102, 1987.

\bibitem{Gottwald09}
Siegfried Gottwald.
\newblock Many-valued {Logic}.
\newblock In {\em The {Stanford} {Encyclopedia} of {Philosophy}}. 2009.

\bibitem{Hajek10}
Petr H{\'a}jek.
\newblock Fuzzy {Logic}.
\newblock In {\em The {Stanford} {Encyclopedia} of {Philosophy}}. 2010.

\bibitem{Hendricks06}
Vincent Hendricks and John Symons.
\newblock Epistemic {Logic}.
\newblock In {\em The {Stanford} {Encyclopedia} of {Philosophy}}. 2006.

\bibitem{Kant08}
Immanuel Kant.
\newblock {\em Critique of Pure Reason}.
\newblock Penguin Classics; Revised edition, 2008.

\bibitem{Kleene52}
Stephen~C. Kleene.
\newblock {\em Introduction to META-MATHEMATICS}.
\newblock North-Holland Publishing Co., 1952.

\bibitem{Lewis01}
David~K. Lewis.
\newblock {\em Counterfactuals}.
\newblock Wiley-Blackwell, 2nd edition, 2001.

\bibitem{Marcos05}
Jo{\~a}o Marcos.
\newblock {\em Logics of {Formal} {Inconsistency}}.
\newblock PhD thesis, Universidade T\'ecnica De Lisboa, 2005.

\bibitem{Montague74}
Richard Montague.
\newblock On the {Nature} of {Certain} {Philosophical} {Entities}.
\newblock In Richmond~H. Thomason, editor, {\em Formal {Philosophy}: {Selected}
  {Papers} of {Richard} {Montague}}, pages 148--188. Yale {University} {Press},
  1974.

\bibitem{Plumpton31}
Frank~Plumpton Ramsey.
\newblock General {Propositions} and {Causality}.
\newblock In {\em The {Foundations} of {Mathematics} and other {Logical}
  {Essays}}, pages 237--255. Kegan {Paul}, {Trench} \& {Trubner}, 1931.

\bibitem{wikiBooleanAlgebra}
Wikipedia.
\newblock Boolean algebra.
\newblock http://en.wikipedia.org/wiki/Boolean\_algebra.

\end{thebibliography}
\newpage
\section*{Appendix A: Proof of Lemma \ref{linking_principle}}   
  First apply {\small $\gtrdot$} reductions 2 and 3 on 
      {\small $F_1 \gtrdot F_2$} into a formula 
      in which the only occurrences of the chains are 
      {\small $f_0 \gtrdot 
      F_2$}, {\small $f_1 \gtrdot F_2$}, \dots, {\small $f_{k} 
      \gtrdot F_2$} for some {\small $k \in \mathbb{N}$} and 
      some {\small $f_0, f_1, \dots, f_k \in \mathfrak{U} 
      \cup \mathcal{S}$}. Then apply {\small $\gtrdot$} reductions 
      4 and 5 to each of those chains into a formula 
      in which the only occurrences of the chains are: 
      {\small $f_0 \gtrdot g_{0}, f_0 \gtrdot g_{1}, \dots, 
      f_0 \gtrdot g_{j}$},
      {\small $f_1 \gtrdot g_{0}$}, \dots, 
      {\small $f_1 \gtrdot g_{j}$}, \dots, 
      {\small $f_k \gtrdot g_{0}$}, \dots, {\small $f_k \gtrdot g_j$} 
      for some {\small $j \in \mathbb{N}$} and 
      some {\small $g_0, g_1, \dots, g_j \in \mathfrak{U}$}. 
      To each such chain, apply 
      {\small $\gtrdot$} reduction 1 as long as it is applicable. 
      This process cannot continue infinitely since any formula 
      is finitely constructed and since, under the premise, we 
      can apply induction
      on the number of elements of {\small $\mathcal{S}$} 
      occurring in {\small $g_x$}, {\small $0 \le x 
      \le j$}. 
      The straightforward inductive proof is left to readers. The 
      result is a formula in unit chain expansion. \\
      \section*{Appendix B: Proof of Lemma \ref{reduction_result}} 
  By induction on maximal number of {\small $\neg$} nestings 
    and a sub-induction on formula size. 
        We quote Lemma \ref{normalisation_without_negation} for 
    base cases. For inductive cases, assume that the current 
    lemma holds true for all the formulas with {\small $\negMax(F_0)$} 
    of up to {\small $k$}. Then we conclude by showing that 
    it still holds true 
    for all the formulas with {\small $\negMax(F_0)$} of {\small $k+1$}. 
    Now, because any formula is finitely constructed,  
    there exist sub-formulas in which occur no {\small $\neg$}. 
    By Lemma \ref{normalisation_without_negation}, those sub-formulas 
    have a reduction into a formula in unit chain expansion. Hence 
    it suffices to show that those formulas 
    {\small $\neg F'$} with {\small $F'$} already in unit chain 
    expansion reduce into a formula in unit chain expansion, upon which
    inductive hypothesis applies for a conclusion. 
    Consider what {\small $F'$} is: 
    \begin{enumerate}
      \item {\small $s$}: then apply {\small $\neg$} reduction 1 
	on {\small $\neg F'$} to remove the {\small $\neg$}
	occurrence. 
      \item {\small $F_a \wedge F_b$}: apply {\small $\neg$} 
	reduction 2. 
	Then apply (sub-)induction hypothesis on 
	{\small $\neg F_a$} and {\small $\neg F_b$}.  
      \item {\small $F_a \vee F_b$}: apply {\small $\neg$} 
	reduction 3. Then apply (sub-)induction hypothesis on 
	{\small $\neg F_a$} and {\small $\neg F_b$}. 
      \item {\small $s \gtrdot F \in \mathfrak{U}$}: apply 
	{\small $\neg$} reduction 4. Then apply 
	(sub-)induction hypothesis on {\small $\neg F$}.
    \end{enumerate} 
    \section*{Appendix C: Proof of Lemma \ref{unit_chain_excluded_middle}} 
 (Note again that we are assuming well-formed 
 formulas only.) 
 For the first one,
 {\small $[\intFrame \models_D s_0 \gtrdot  
     s_1 \gtrdot 
 \dots \gtrdot s_{k}] = 1$} implies that
 {\small $\mathsf{I}(\epsilon, s_0)\! =\! 
 \mathsf{I}(s_0, s_1) \!=\! \dots \!=\! 
 \mathsf{I}(s_0.s_1.\dots.s_{k - 1}, s_{k})\! =\! 1$}.
 So we have; 
 {\small $\mathsf{I}(\epsilon, s_0^c) = 
 \mathsf{I}(s_0, s_1^c) = \dots = 
 \mathsf{I}(s_0.s_1\dots.s_{k - 1}, s_{k}^c) = 0$} by the 
 definition of {\small $\mathsf{I}$}. 
 Meanwhile, 
 {\small $\recurseReduce(s_0 \gtrdot s_1 \gtrdot \cdots \gtrdot 
 s_k) = s_0^c \vee (s_0 \gtrdot ((s_1^c \vee (s_1 \gtrdot \cdots)))) 
 = s_0^c \vee (s_0 \gtrdot s_1^c) \vee (s \gtrdot s_1 \gtrdot 
 s_2^c) \vee \cdots \vee 
 (s \gtrdot s_1 \gtrdot \cdots \gtrdot s_{k-1} \gtrdot s_k^c)$}. 
 Therefore {\small $[\intFrame \models_D
 \recurseReduce(s_0 \gtrdot s_1 \gtrdot \cdots \gtrdot s_k)]  
 = 0 \not= 1$} for the given valuation frame. \\
 \indent For the second obligation, 
 {\small $[\intFrame \models_D
 s_0 \gtrdot s_1 \gtrdot \dots \gtrdot s_{k}] = 0$} 
 implies that   
 {\small $[\mathsf{I}(\epsilon, s_0) 
 = 0] \vee^{\dagger} [\mathsf{I}(s_0, s_1) = 0] 
 \vee^{\dagger} \dots \vee^{\dagger} [\mathsf{I}(s_0.s_1.\dots.
 s_{k -1}, s_{k}) = 0]$}. Again 
 by the definition of {\small $\mathsf{I}$}, 
 we have the required result. That these two events 
 are mutually exclusive is trivial. 
 \section*{Appendix D: Proof of Proposition \ref{associativity_commutativity_distributivity}} 
Let us generate a set of 
  expressions finitely constructed from the following grammar;\\
  \indent {\small $X := [\intFrame \models_D f] \ | \ X \wedge^{\dagger} X 
  \ | \ X \vee^{\dagger} X$} where {\small $f \in \mathfrak{U} \cup 
  \mathcal{S}$}. \\
  Then first of all it is straightforward to 
  show that {\small $[\mathfrak{M} \models_D F_i] = X_i$} 
  for each {\small $i \in \{1,2,3\}$} for some {\small $X_1, X_2, X_3$} 
  that the above grammar recognises. By Lemma \ref{unit_chain_excluded_middle} each atomic expression ({\small $[\mathfrak{M} \models_D f_x]$} for 
  some {\small $f_x \in \mathfrak{U} \cup \mathcal{S}$}) 
  is assigned one and only one value {\small $v \in \{0,1\}$} (again note that we are considering well-formed 
  formulas only). Then 
  since {\small $1 \vee^{\dagger} 1 = 1 \vee^{\dagger} 0 = 
  0 \vee^{\dagger} 1 = 1$}, 
  {\small $0 \wedge^{\dagger} 0 = 0 \wedge^{\dagger} 1 = 
  1 \wedge^{\dagger} 0 = 0$}, and 
  {\small $1 \wedge^{\dagger} 1 = 1$} by definition given at 
  the beginning 
  of this section, 
  it is also the case that {\small $[\mathfrak{M} \models_D F_i]$} 
  is assigned one and only one value {\small $v_i \in  \{0,1\}$} 
  for each {\small $i \in \{1,2,3\}$}. Then the proof for the 
  current proposition is straightforward.  
  \section*{Appendix E: Proof of Lemma \ref{unit_double_negation}}   
{\small $\recurseReduce(\recurseReduce(F)) =  
  \recurseReduce(s^c_0 \vee (s_0 \gtrdot s^c_1) \vee 
  \cdots \vee 
  (s_0 \gtrdot s_1 \gtrdot \cdots \gtrdot s_{k-1} 
  \gtrdot s^c_{k})) = 
   s_0 \wedge (s^c_0 \vee (s_0 \gtrdot s_1)) 
  \wedge (s_0^c \vee (s_0 \gtrdot s_1^c) 
  \vee (s_0 \gtrdot s_1 \gtrdot s_2)) 
  \wedge \cdots \wedge 
  (s^c_0 \vee (s_0 \gtrdot s_1^c) \vee \cdots 
  \vee (s_0 \gtrdot s_1 \gtrdot \cdots \gtrdot s_{k-2} \gtrdot 
  s^c_{k-1}) 
  \vee (s_0 \gtrdot s_1 \gtrdot \cdots \gtrdot s_{k}))$}.   
  Here, assume that the right hand side of the equation
  which is in conjunctive normal form is ordered, 
  the number of terms, from left to right, strictly increasing 
  from 1 to {\small $k + 1$}. Then as the result of a transformation
  of the conjunctive 
  normal form into disjunctive normal form we will 
  have 1 (the choice from the first conjunctive clause which contains 
  only one term {\small $s_0$}) {\small $\times$} 
  2 (a choice from the second conjunctive clause with 
  2 terms {\small $s_0^c$} and {\small $s_0 \gtrdot s_1$}) 
  {\small $\times$} \ldots {\small $\times$} (k $+$ 1) clauses. But  
  almost all the clauses in 
  {\small $[\intFrame \models_D (\text{the disjunctive
  normal form})]$}
  will be assigned 0 (trivial; the proof left to readers) so that we gain
  {\small $[\intFrame \models_D (\text{the disjunctive normal form})] 
  = [\intFrame \models_D s_0] \wedge^{\dagger} [\intFrame \models_D
  s_0 \gtrdot s_1] \wedge^{\dagger} \cdots 
  \wedge^{\dagger} [\intFrame \models_D s_0 \gtrdot s_1 
  \gtrdot \cdots \gtrdot s_k] =
  [\intFrame \models_D s_0 \gtrdot s_1 
  \gtrdot \cdots \gtrdot s_k]$}. \\
\section*{Appendix F: Proof of Proposition \ref{excluded_middle}} 
Firstly for {\small $1 = [\intFrame \models_D F \vee \recurseReduce(F)]$}. 
  By Proposition \ref{associativity_commutativity_distributivity}, 
  {\small $F$} has a disjunctive normal form: 
  {\small $F = \bigvee_{i = 0}^{k} \bigwedge_{j=0}^{h_i}   
  f_{ij}$} for some {\small $i, j, k \in \mathbb{N}$}, 
  some {\small $h_0, \cdots, h_k \in \mathbb{N}$} 
  and some {\small $f_{00}, \cdots, f_{kh_k} \in 
  \mathfrak{U} \cup \mathcal{S}$}.  
  Then {\small $\recurseReduce(F)  
  = \bigwedge_{i=0}^k \bigvee_{j=0}^{h_i} \recurseReduce(f_{ij})$}, 
  which, if transformed into a disjunctive normal form, 
  will have {\small $(h_0 + 1)$} [a choice from 
  {\small $\recurseReduce(f_{00}), \recurseReduce(f_{01}), \dots,\\
  \recurseReduce(f_{0h_0})$}] {\small $\times$} 
  {\small $(h_1 + 1)$} [a choice from 
  {\small $\recurseReduce(f_{10}), \recurseReduce(f_{11}), \dots,\\
  \recurseReduce(f_{1h_1})$}] {\small $\times \dots \times$} 
  {\small $(h_k + 1)$} clauses. Now if 
  {\small $[\intFrame \models_D F] = 1$}, then we already have the required 
  result. Therefore suppose that {\small $[\intFrame 
      \models_D F] = 0$}. 
  Then it holds that {\small $\forall i \in \{0, \dots, k\}. 
  \exists j \in \{0, \dots, h_i\}.([\intFrame \models_D f_{ij}] = 0)$}. But 
  by Lemma \ref{unit_chain_excluded_middle}, this is equivalent to 
  saying that {\small $\forall i \in \{0, \dots, k\}. 
  \exists j \in \{0, \dots, h_i\}.([\intFrame \models_D \recurseReduce(f_{ij})] 
  = 1)$}. But then there exists a clause in disjunctive normal form 
  of {\small $[\intFrame \models_D \recurseReduce(F)]$} which is assigned 1. 
  Dually for {\small $0 = [\intFrame \models_D F \wedge \recurseReduce(F)]$}. 
  \\
  \indent For {\small $[\intFrame \models_D F] = 
  [\intFrame \models_D \recurseReduce(\recurseReduce(F))]$},  
  by Proposition \ref{associativity_commutativity_distributivity}, 
  {\small $F$} has a disjunctive normal form: 
  {\small $F = \bigvee_{i = 0}^k \bigwedge_{j=0}^{h_i} f_{ij}$} 
  for some {\small $i, j, k \in \mathbb{N}$}, 
  some {\small $h_0, \dots, h_k \in \mathbb{N}$} and 
  some {\small $f_{00}, \dots, f_{kh_k} \in \mathfrak{U} \cup 
  \mathcal{S}$}. Then {\small $\recurseReduce(\recurseReduce(F)) 
  = \bigvee_{i = 0}^{k} \bigwedge_{j=0}^{h_i} \recurseReduce(
  \recurseReduce(f_{ij}))$}. But by Lemma \ref{unit_double_negation} 
  {\small $[\intFrame \models_D \recurseReduce(\recurseReduce(f_{ij}))] = 
      [\intFrame \models_D f_{ij}]$} for each appropriate {\small $i$} and 
  {\small $j$}. Straightforward. \\
  \section*{Appendix G: Proof of Lemma \ref{bisimulation}} 
By induction on the number of reduction steps and a sub-induction 
on formula size, 
   we first establish that {\small $
       \mathcal{F}(F_1) = \mathcal{F}(F_2)$} (by bisimulation). 
   Into one way to show that to each 
   reduction on {\small $F'$} corresponds 
   reduction(s) on {\small $F$} is straightforward, 
   for we can choose to reduce {\small $F$} into {\small $F'$}, 
   thereafter we synchronize both of the reductions. Into the 
   other way to show that to each 
   reduction on {\small $F$} corresponds 
   reduction(s) on {\small $F'$}, we consider each case: 
   \begin{enumerate}
     \item The first pair.  
       \begin{enumerate} 
	 \item 
       If a reduction takes place on a sub-formula which 
       neither is a sub-formula of the shown sub-formula 
       nor has as its sub-formula the shown sub-formula, 
       then we reduce the same sub-formula in {\small $F'$}.   
       Induction hypothesis (note that the number of 
reduction steps is that of {\small $F$} into this 
direction).  
     \item If it takes place on a sub-formula 
       of {\small $F_a$} or {\small $F_b$} 
       then we reduce the same sub-formula of 
       {\small $F_a$} or {\small $F_b$} in {\small $F'$}. Induction 
hypothesis. 
     \item If it takes place on a sub-formula 
       of {\small $F_c$} then we reduce the same sub-formula 
       of both occurrences of {\small $F_c$} in {\small $F'$}.  
Induction hypothesis. 
     \item If {\small $\gtrdot$} reduction 2 takes place on 
       {\small $F$} such that we have; 
       {\small $F[(F_a \wedge F_b) \gtrdot F_c] \leadsto 
       F_x[(F_a \gtrdot F_c) \wedge (F_b \gtrdot F_c)]$} where 
       {\small $F$} and {\small $F_x$} differ only by 
       the shown sub-formulas,\footnote{This note `where \dots' is assumed in the 
       remaining.} then do nothing on {\small $F'$}. And {\small $F_x = 
       F'$}. Vacuous thereafter. 
     \item If {\small $\gtrdot$} reduction 2 takes place 
       on {\small $F$} such that we have; 
       {\small $F[(F_d \wedge F_e) \gtrdot F_c] \leadsto 
       F_x[(F_d \gtrdot F_c) \wedge (F_e \gtrdot F_c)]$} 
       where {\small $F_d \not= F_a$} and {\small $F_d \not = F_b$}, 
       then without loss of generality assume that 
       {\small $F_d \wedge F_{\beta} = F_a$} 
       and that {\small $F_{\beta} \wedge F_b = F_e$}. 
       Then we apply {\small $\gtrdot$} reduction 2 
       on the {\small $(F_d \wedge F_{\beta}) \gtrdot F_c$} in 
       {\small $F'$} so that we have; 
       {\small $F'[((F_d \wedge F_{\beta}) \gtrdot F_c) \wedge 
       (F_b \gtrdot F_c)] \leadsto 
       F''[(F_d \gtrdot F_c) \wedge (F_{\beta} \gtrdot F_c) 
       \wedge (F_b \gtrdot F_c)]$}. 
       Since {\small $(F_x[(F_d \gtrdot F_c) \wedge (F_e \gtrdot F_c)] 
       =) F_x[(F_d \gtrdot F_c) \wedge ((F_{\beta} \wedge F_b) \gtrdot 
       F_c)] = F_x'[(F_{\beta} \wedge F_b) \gtrdot F_c]$} and 
       {\small $F''[(F_d \gtrdot F_c) \wedge (F_{\beta} \gtrdot F_c) 
       \wedge (F_b \gtrdot F_c)] = F'''[(F_{\beta} \gtrdot F_c) 
       \wedge (F_b \gtrdot F_c)]$} such that 
       {\small $F'''$} and {\small $F_x'$} differ only 
       by the shown sub-formulas, we repeat the rest of simulation 
       on 
       {\small $F'_x$} and {\small $F'''$}. Induction hypothesis. 
     \item If a reduction takes place on a sub-formula {\small $F_p$} of 
       {\small $F$} in which the shown sub-formula of 
       {\small $F$} occurs as a strict sub-formula 
       ({\small $F[(F_a \wedge F_b) \gtrdot F_c] 
       = F[F_p[(F_a \wedge F_b) \gtrdot F_c]]$}), then 
       we have {\small $F[F_p[(F_a \wedge F_b) \gtrdot F_c]] 
       \leadsto F_x[F_q[(F_a \wedge F_b) \gtrdot F_c]]$}. 
       But we have 
       {\small $F' = F'[F_p'[(F_a \gtrdot F_c) \wedge (F_b \gtrdot 
       F_c)]]$}. Therefore we apply the same reduction on 
       {\small $F_p'$} to gain; 
       {\small $F'[F_p'[(F_a \gtrdot F_c) \wedge (F_b \gtrdot 
       F_c)]] \leadsto F'_x[F_{p'}'[(F_a \gtrdot F_c) \wedge (F_b 
       \gtrdot F_c)]]$}. Induction hypothesis. 
   \end{enumerate} 
 \item The second, the third and the fourth pairs: Similar. 
 \item The fifth pair: 
   \begin{enumerate}
     \item If a reduction takes place on a sub-formula 
       which neither is a sub-formula of the shown 
       sub-formula nor has as its sub-formula the shown sub-formula, 
       then we reduce the same sub-formula in {\small $F'$}.  
Induction hypothesis. 
     \item If it takes place on a sub-formula of {\small $F_a$}, 
       {\small $F_b$} or {\small $F_c$}, then 
       we reduce the same sub-formula of all the occurrences
       of the shown {\small $F_a$}, {\small $F_b$} or {\small $F_c$} 
       in {\small $F'$}. Induction hypothesis. 
     \item If {\small $\gtrdot$} reduction 4 takes place on 
       {\small $F$} such that we have; 
       {\small $F[(F_a \gtrdot F_b) \gtrdot F_c]
       \leadsto 
       F_x[(F_a \gtrdot F_c) \wedge ((F_a \gtrdot F_b) \vee 
       (F_a \gtrdot F_b \gtrdot F_c))]$}, then do nothing on 
       {\small $F'$}. And {\small $F_x = F'$}. Vacuous thereafter. 
     \item If a reduction takes place on a sub-formula 
       {\small $F_p$} of {\small $F$} in which the shown 
       sub-formula of {\small $F$} occurs 
       as a strict sub-formula, then similar to the case 1) f).   
   \end{enumerate}
   \end{enumerate} 
   By the result of the above bisimulation, we now have  
   {\small $\mathcal{F}(F) = \mathcal{F}(F')$}. However,
   without {\small $\neg$} occurrences in {\small $F$} it takes 
   only those 5 {\small $\gtrdot$} reductions to 
   derive a formula in unit chain expansion; hence we in fact have
   {\small $\mathcal{F}(F) = \mathcal{F}(F_x)$} for some 
   formula {\small $F_x$} in unit chain expansion. But 
   then by Theorem \ref{theorem_1}, there could be 
   only one value out of {\small $\{0,1\}$} assigned to {\small $[\intFrame \models_D F_x]$} if 
   {\small $F_x$} is well-formed; otherwise, 
   {\small $\code{illFormed}$} is assigned. 
   \\
   \section*{Appendix H: Proof of Lemma \ref{other_bisimulation}}
By simultaneous induction on reduction steps and by 
   a sub-induction on formula size. One way is trivial. Into the direction to 
   showing that to every reduction on {\small $F$} corresponds reduction(s)
   on {\small $F'$}, we consider each case. For the first case;
       \begin{enumerate}
	 \item If a reduction takes place on a sub-formula 
	   which neither is a sub-formula of the shown sub-formula 
	   nor has as its sub-formula the shown sub-formula, 
	   then we reduce the same sub-formula in {\small $F'$}.   
   Induction hypothesis. 
	 \item If it takes place on a sub-formula of {\small $F_a$} 
	   or {\small $F_b$} then we reduce the same sub-formula 
	   of {\small $F_a$} or {\small $F_b$} in {\small $F'$}.  
Induction hypothesis. 
	 \item If {\small $\neg$} reduction 2 takes place 
	   on {\small $F$} such that we have;  
	   {\small $F[\neg (F_a \wedge F_b)] \leadsto 
	   F_x[\neg F_a \vee \neg F_b]$}, then do nothing 
	   on {\small $F'$}. And {\small $F_x = F'$}. Vacuous thereafter.
	 \item If {\small $\neg$} reduction 2 takes place 
	   on {\small $F$} such that we have; 
	   {\small $F[\neg (F_d \wedge F_e)] 
	   \leadsto F_x[\neg F_d \vee \neg F_e]$} where 
	   {\small $F_d \not= F_a$} and {\small $F_d \not= F_b$}, 
	   then without loss of generality assume that 
	   {\small $F_d \wedge F_{\beta} = F_a$} 
	   and that {\small $F_{\beta} \wedge F_b = F_e$}.  
	   Then we apply {\small $\neg$} reduction 2 on the 
	   {\small $\neg (F_d \wedge F_{\beta})$} in 
	   {\small $F'$} so that we have; 
	   {\small $F'[\neg (F_d \wedge F_{\beta}) \vee 
	   \neg F_b] \leadsto F''[\neg F_d \vee \neg F_{\beta} 
	   \vee \neg F_b]$}. Since 
	   {\small $(F_x[\neg F_d \vee \neg F_e] = ) 
	   F_x[\neg F_d \vee \neg (F_{\beta} \wedge F_b)] 
	   = F'_x[\neg (F_{\beta} \wedge F_b]$} and 
	   {\small $F''[\neg F_d \vee \neg F_{\beta} 
	   \vee \neg F_b] = F'''[\neg F_{\beta} \vee \neg F_b]$} 
	   such that {\small $F'''$} and {\small $F'_x$} differ only 
	   by the shown sub-formulas, we repeat the rest of 
	   simulation on {\small $F'_x$} and {\small $F'''$}.  
Induction hypothesis. 
	 \item If a reduction takes place on a sub-formula 
	   {\small $F_p$} of {\small $F$} in which the shown 
	   sub-formula of {\small $F$} occurs 
	   as a strict sub-formula, then similar to 
	   the 1) f) sub-case in 
	   Lemma \ref{bisimulation}. 
       \end{enumerate}  
       The second case is similar. For the third case; 
       \begin{enumerate} 
\item If no reduction is applicable,  
                       then vacuously 
                       {\small $[\intFrame \models_D
                           F] = [\intFrame \models_D
                           F']$}. 
           \item If a reduction takes place on a sub-formula 
               which neither is a sub-formula of the shown 
               sub-formula nor has as its sub-formula 
               the shown sub-formula, then we reduce 
               the same sub-formula in {\small $F'$}. Induction 
               hypothesis.  
           \item If a reduction takes place on a sub-formula 
               {\small $F_p$} of {\small $F$} in which 
               the shown sub-formula of {\small $F$} occurs 
               as a strict sub-formula, then; 
               \begin{enumerate}   
                    \item If the applied reduction is 
                     $\neg$ reduction 2 or 4, then straightforward. 
                  \item If the applied reduction is  
                      $\neg$ reduction 3 such that 
                      {\small $(F =  F_a[\neg (F_x \vee s \vee s \vee F_y)])
                      \leadsto (F_b[\neg F_x \wedge \neg s \wedge \neg s \wedge 
                      \neg F_y] 
                      = F_c[\neg s \wedge \neg s]) \leadsto 
                      F_d[s^c \wedge s^c]$} for some 
                  {\small $F_x$} and {\small $F_y$} (the last 
                  transformation does not cost generality due to simultaneous 
                  induction),
                  then we reduce 
                  {\small $F'$} as follows: 
                  {\small $(F' = F_a'[\neg (F_x \vee s \vee F_y)]) 
                      \leadsto (F_b'[\neg F_x \wedge \neg s \wedge \neg 
                      F_y] = F_c'[\neg s]) \leadsto F_d'[s^c]$}. 
                  Induction hypothesis. Any other cases 
                  are straightforward.  
              \item If the applied reduction is $\gtrdot$ reduction 
                  1-4, then straightforward. 
            \end{enumerate}
       \end{enumerate} 
       Similarly for the remaing ones. 
       \section*{Appendix I: Proof of Theorem \ref{theorem_normalisation}}
By induction on maximal number of {\small $\neg$} nestings 
    and a sub-induction on formula size. We quote Lemma 
    \ref{reduction_without_negation} for base cases. 
    For inductive cases, assume that the current theorem 
    holds true for all the formulas with {\small $\negMax(F_0)$} 
    of up to {\small $k$}. Then we conclude by showing 
    that it still holds true for all the formulas with 
    {\small $\negMax(F_0)$} of {\small $k+1$}. First we note that 
    there applies no {\small $\neg$} reductions on {\small $\neg F_x$} 
    if {\small $F_x$} is a chain whose head is not 
    an element of {\small $\mathcal{S}$}. But this is 
    straightforward from the descriptions of the reduction rules. \\
    \indent On this observation we show that 
    if we have a sub-formula {\small $\neg F_x$} such that 
    no {\small $\neg$} occurs in {\small $F_x$}, then 
    {\small $F_x$} can be reduced into a formula in unit chain 
    expansion with no loss of generality, prior to 
    the reduction of the outermost {\small $\neg$}. Then 
    we have the desired result by induction hypothesis 
    and the results in the previous sub-section. But suppose otherwise.  
    Let us denote by {\small $\mathcal{F}$} the set of formulas in unit chain 
    expansion that {\small $\neg F_x'$} reduces into where
    {\small $F_x'$} is a unit chain expansion of {\small $F_x$}.
    Now suppose there exists {\small $F_y$} in unit chain 
    expansion that 
    {\small $\neg F_x$} can reduce into if the outermost 
    {\small $\neg$} reduction applies before {\small $F_x$} 
    has 
    reduced into a formula in unit chain expansion such as to 
    satisfy that {\small $[\intFrame \models_D F_y] \not= [\intFrame \models_D F_{\beta}]$} 
    for some {\small $F_{\beta} \in \mathcal{F}$}. 
    We here have; \\
    {\small $\neg F_x \leadsto^*_{\{\gtrdot\!\! \text{ 
    reductions only}\}} \neg F_z \leadsto^*_{\{\gtrdot\!\! \text{ 
    reductions only}\}} 
    \neg F_x' \leadsto^+_{\{\neg \text{ reductions
    only}\}}\!\! F_{\beta}$} and 
    {\small $\neg F_x \leadsto^*_{\{\gtrdot \text{ 
    reductions only}\}} \neg F_z \leadsto_{\neg \text{ reduction}}
    F_z' \leadsto^* F_y$} where {\small $\neg^{\dagger} 
    \exists F_{zz}.F_z' = \neg F_{zz}$}. \\
    Hence for our supposition to hold, it must satisfy that 
    there exists no bisimulation between {\small $F'_z$} 
    and {\small $\neg F_z$}. But because it is trivially provable 
    that to each reduction on {\small $F'_z$} corresponds 
    reduction(s) on {\small $\neg F_z$} (, for we can choose 
    to apply the {\small $\neg$} reduction on {\small $\neg F_z$} 
    to gain {\small $F'_z$},) it must in fact satisfy that 
    not to each reduction on {\small $\neg F_z$} corresponds 
    reduction(s) on {\small $F'_z$}. Consider what reduction 
    applies on a sub-formula of {\small $\neg F_z$}: 
    \begin{enumerate}
      \item any {\small $\neg$} reduction: 
	Then the reduction generates {\small $F'_z$}. 
	A contradiction to supposition has been drawn. 
      \item {\small $\gtrdot$} reduction 1:   
	   Consider how {\small $F_z$} looks like:  
	   \begin{enumerate}
	     \item {\small $F_z = F_1[(F_u \gtrdot F_v) 
	       \gtrdot F_w] \wedge F_2$}:  But then 
	       the same reduction can take place 
	       on {\small $F_z' = 
	       \neg F_1[(F_u \gtrdot F_v) \gtrdot F_w] 
	       \vee \neg F_2$}. Contradiction.  
	     \item {\small $F_z = F_1 \wedge F_2[(F_u \gtrdot F_v) 
	       \gtrdot F_w]$}: Similar. 
	     \item {\small $F_z = F_1[(F_u \gtrdot F_v) 
	       \gtrdot F_w] \vee F_2$}: Similar. 
	     \item {\small $F_z = F_1 \vee F_2[(F_u \gtrdot F_v) 
	       \gtrdot F_w]$}: Similar.  
	     \item {\small $F_z = (F_u \gtrdot F_v) \gtrdot F_w$}: 
	       This case is impossible due to the observation given 
           earlier in the current proof. 
	     \item {\small $F_z = (F_1[(F_u \gtrdot F_v) \gtrdot F_w] 
	       \gtrdot F_2) \gtrdot F_3$}: Similar.  
	     \item The rest: all similar. 
	   \end{enumerate}
      \item {\small $\gtrdot$} reduction 2: Similar. 
      \item {\small $\gtrdot$} reduction 3: Similar. 
      \item {\small $\gtrdot$} reduction 4: Consider 
	how {\small $F_z$} looks like: 
	\begin{enumerate}
	  \item {\small $F_z = s \gtrdot (F_1 \wedge F_2)$}:  
            Then {\small $\neg F_z \leadsto  
	    \neg ((s \gtrdot F_1) \wedge (s \gtrdot F_2))$}. 
	    But by Lemma \ref{other_bisimulation}, it does not cost generality if 
	    we reduce the {\small $\neg$} to have; 
	    {\small $\neg ((s \gtrdot F_1) \wedge (s \gtrdot F_2)) 
	    \leadsto \neg (s \gtrdot F_1) \vee \neg (s \gtrdot F_2)$}. 
            Meanwhile 
	    {\small $F'_z = s^c \vee (s \gtrdot \neg (F_1 \wedge F_2))$}.
	    By Lemma \ref{other_bisimulation}, it does not cost generality if 
	    we have {\small $F''_z = s^c \vee (s \gtrdot (\neg F_1 
	    \vee \neg F_2))$} instead of {\small $F'_z$}.  
	    But it also does not cost generality (by 
	    Lemma \ref{bisimulation}) 
	    if we have {\small $F'''_z = s^c \vee 
	    (s \gtrdot \neg F_1) \vee (s \gtrdot \neg F_2)$} instead 
	    of {\small $F''_z$}. But by Lemma \ref{other_bisimulation}, 
	    it again does not cost generality 
	    if we have {\small $F''''_z = s^c \vee (s \gtrdot \neg F_1) 
	    \vee s^c \vee (s \gtrdot \neg F_2)$} instead.
	    Therefore we can conduct 
	    bisimulation between {\small $\neg (s \gtrdot F_1)$} 
	    and {\small $s^c \vee (s \gtrdot \neg F_1)$} and between 
	    {\small $\neg (s \gtrdot F_2)$} 
	    and {\small $s^c \vee (s \gtrdot \neg F_2)$}. 
	    Since each of {\small $\neg (s \gtrdot F_1)$} 
	    and {\small $\neg (s \gtrdot F_2)$} has a strictly 
	    smaller formula size than 
	    {\small $\neg (s \gtrdot (F_1 \wedge F_2))$}, (sub-)induction
	    hypothesis. Contradiction.  
	  \item The rest: Trivial. 
	\end{enumerate}
      \item {\small $\gtrdot$} reduction 5: Similar. 
    \end{enumerate}
    \section*{Appendix J: An essay on reasoning about counterfactuals}  
    Conditional logics were motivated by counterfactuals. What follows 
    is but a personal viewpoint on the process of reasoning 
    about counterfactuals. Earlier ideas in the line of 
    Stalnaker's and others' \cite{Horacio13} helped sharpen 
    this view. 
    An essential purpose of reasoning about counterfactuals 
is, to 
the author at least, in conducting a partial examination on the faculty of 
our imagination. 
A `flying emu' which is considered to be non-existing can 
be, despite all the contradictions that the term causes against what 
we find within the knowledge, accommodated in our imagination. 
There what the knowledge says is the state of being flying and 
what it says is something that is an emu are refined into combinable forms 
so that a flying emu comes to existence within the parallel consciousness. 
But because 
it does not exist in the knowledge, taken two volunteers who 
are for simplicity supposed sharing the same knowledge, even if 
the flying emu in imagination of one of them does not coincide in 
features with 
that in the other, they cannot be said to be unjustifiable as 
a proper representation for  
the mismatch found between them, since no definition of  a flying emu 
is in any case found in the knowledge. 
Taken countably many volunteers, 
it comes of no surprise if the number of 
representations of a flying emu is also countably many, each one of which 
is justified as a proper in each respective imagination space.  
Therefore, 
for counterfactuals in particular out of other forms that imagination 
enables us, if we have ``If there were a flying emu, then Y would be the case," one plausible way of obtaining 
the truth value for this expression is as stated in the following pseudocode: 
\begin{description}
    \item[(Pre-condition)]{\ }\\ Some domain of discourse D is given. 
        For intuition, assume that D represents the mind of an individual. 
        D is assumed to be a logical space. 
        Knowledge is what holds in D in which, 
        like in gradual classical logic, nothing can designate a unique and 
        indivisible object (\emph{Cf.} Introduction and Postulate 1): 
        any one of them may be precise enough but never unique. 
        We suppose that `emu' and `the state of being flying' are 
        in the knowledge.  
    \item[L1] If either `flying emu' or Y holds in knowledge, then return false.  
    \item[L2] Duplicate knowledge. Apply a function F: knowledge $\rightarrow$ 
        knowledge such that F(duplicated space) $\subseteq$ knowledge. 
        The F(duplicated space) is what we here 
        call imagination. 
    \item[L3] Let us mean by refinement of an element in imagination
        its enlargement by means of any element(s) 
        that are presently found in imagination acting upon it in 
        the manner lawful to D. In 
        this term of refinement, 
        keep refining elements of imagination 
        insofar as 
        such refinement is strictly necessary to generate 
        a flying emu in the imagination. 
        Call the state of the updated imagination 
        Im if it is not inconsistent. Here, by such a refinement 
        being strictly necessary, we mean that (1) 
        the flying emu in Im ceases to exist if
        the last change that was taken to derive Im is undone, and that 
        (2) any changes made to the elements of (Im$\backslash \{\text{flying emu}\})\backslash$(knowledge) cannot be any smaller for 
        the particular flying emu to not cease to exist. 
    \item[L4$_1$] If there is no such Im, that is, 
        if no imagination space in which a flying emu exists
        derives from the duplicated knowledge 
        following the prescribed 
        alteration process such that it be contained within the 
        boundary of D, 
        then return true. 
    \item[L4$_2$] For each such Im, do: 
    \item[L4$_3$] If Y is not the case in Im, then return false.  
    \item[L4$_4$] End of For loop
    \item[L5] Return true. 
\end{description}   
A couple of relevant points are: 
(1) If a counterfactual {\small $a > b$} is true as judged by 
the above pseudo-code, then it is true by the sense delimited by 
D. (2) A counterfactual is an impossible case: if it were possible, 
it would not be a counterfactual. Hence if by a possible world we mean to refer to a world which may just as feasibly 
exist as our own world, there is no possible 
world that makes the antecedent of the counterfactual true, 
for if in some possible alternative world the antecedent were true, 
the statement 
would not be a counterfactual to the reasoning body, which goes against 
the supposition that it is a counterfactual. The antecedent is always 
false in every alternative world that a reasoning body could 
consider possible. (3) Under the stated truth judgement, we have that {\small $\neg (a > b)$} is true if and only if 
it is not the case that {\small $a > b$} is true. 
\end{document}